\documentclass{article}
\usepackage{arxiv_style,times}

\usepackage{amsmath,amsfonts,bm}

\def\eqref#1{equation~\ref{#1}}

\def\1{\bm{1}}

\DeclareMathAlphabet{\mathsfit}{\encodingdefault}{\sfdefault}{m}{sl}
\SetMathAlphabet{\mathsfit}{bold}{\encodingdefault}{\sfdefault}{bx}{n}

\arxivcopy

\usepackage{amsmath}
\usepackage{amssymb}
\usepackage{mathtools}
\usepackage{amsthm}

\usepackage{algorithm}
\usepackage{algpseudocode}
\usepackage{hyperref}
\usepackage[capitalize,nameinlink,noabbrev]{cleveref}
\usepackage{url}
\usepackage{booktabs}
\usepackage{multirow}
\usepackage{graphicx}
\usepackage{wrapfig}
\usepackage{tikz}
\usepackage{subcaption}
\usepackage{enumitem}
\usepackage{lipsum}

\crefformat{equation}{Equation~#2#1#3}
\Crefformat{equation}{Equation~#2#1#3}

\usepackage{thmtools}
\usepackage{thm-restate}
\usepackage{thmtools}
\theoremstyle{plain}
\newtheorem{theorem}{Theorem}[section]

\newtheorem{lemma}[theorem]{Lemma}
\newtheorem{corollary}[theorem]{Corollary}
\theoremstyle{definition}
\newtheorem{definition}[theorem]{Definition}
\newtheorem{assumption}[theorem]{Assumption}

\theoremstyle{remark}

\usetikzlibrary{arrows.meta, positioning, shapes.multipart}

\usepackage[disable]{todonotes}

\title{DAG-FM: A Foundation Model for Causal Discovery under Heterogeneous Causal Mechanisms}

\author{
\textbf{Yikang Chen$^1$, Zhengkang Guan$^1$, Haoyuan Qian$^1$, Xingxuan Zhang$^3$, Peng Cui$^2$, } \\
\textbf{\ Yi Yang$^1$, Fei Wu$^1$, Kun Kuang$^1$}\thanks{Corresponding author: \texttt{kunkuang@zju.edu.cn}} \\[1mm]
\normalsize $^1$Zhejiang University \quad 
$^2$Tsinghua University \quad 
$^3$Stable AI
}

\begin{document}
\maketitle

\begin{abstract}
Causal discovery from observational tabular data remains fundamentally challenging, primarily due to the heterogeneity of underlying causal mechanisms and the high-dimensional combinatorial search space of Directed Acyclic Graphs (DAGs). In this paper, we propose \textbf{DAG-FM}, a novel foundation model architecture that amortizes causal discovery. Unlike direct matrix prediction, DAG-FM decomposes the causal discovery process into two auto-regressive stages using two specialized Transformer-based sub-modules: a leaf-node predictor and a parent-node predictor. To effectively model complex row-column interactions, we adopt a robust tabular interaction block to output feature-wise representations. Crucially, to handle diverse and unknown Functional Causal Model (FCM) assumptions in real-world scenarios, we introduce Mixture-of-Leaf-Experts (MoLE), allowing the model to dynamically route and adapt to identifiable mechanism families. Through an iterative inference algorithm, DAG-FM seamlessly extracts causal orderings and constructs valid DAGs. Extensive experiments demonstrate that DAG-FM achieves state-of-the-art performance on both synthetic benchmarks and complex real-world datasets, significantly outperforming traditional classical algorithms and recent foundation models in both accuracy and scalability.
\end{abstract}
\section{Introduction}

Causal discovery aims to recover causal structures from data \citep{Pearl2009}, playing a pivotal role in diverse fields such as bioinformatics \citep{Zhang2013}, epidemiology \citep{Vandenbroucke2016}, sociology \citep{Huber2024}, and manufacturing \citep{Vukovic2022}. A primary objective of this task is to identify the underlying Directed Acyclic Graphs (DAGs) from observational data. Existing approaches, spanning constraint-based \citep{Spirtes1991, Spirtes1995}, score-based \citep{Cooper1992, Chickering2002}, and optimization-based \citep{Zheng2018, Bello2022} methods, have been extensively developed. However, without additional assumptions, these methods can only identify causal structures up to Markov equivalence classes. Functional Causal Models (FCMs) introduce supplementary structural assumptions into Structural Causal Models (SCMs), ensuring theoretical identifiability of the underlying DAGs from observational distributions when specific conditions are met. Notable examples include Linear Non-Gaussian Acyclic Models (LiNGAMs, \citet{Shimizu2006}), Additive Noise Models (ANMs, \citet{Hoyer2008}), Heteroscedastic Noise Models (HNMs, \citet{Tagasovska2020}), and Post-Nonlinear models (PNLs, \citet{Zhang2009}).

Building upon the theoretical guarantees of DAG identifiability, FCM-based causal discovery methods have been developed for empirical DAG recovery. However, these methods typically assume homogeneous causal mechanisms, fundamentally limiting their generalizability. To render statistical hypothesis testing tractable, most existing approaches are narrowly tailored to specific theoretical frameworks, such as LiNGAM or ANM \citep{Shimizu2006,Shimizu2011,Peters2014,Rolland2022}. Alternatively, some methods derive causal orderings by extracting specific statistical features, yet they remain effective only under highly restrictive conditions \citep{Reisach2021,Reisach2023}. Consequently, the robustness and validity of these traditional methods under model misspecification or within highly complex real-world scenarios remain largely questionable.

In parallel, amortized algorithms have emerged for causal discovery, empirically demonstrating consistent superiority over traditional methods in complex nonlinear data regimes \citep{Lorch2022, Dhir2025, Thompson2026, Peng2026, Li2026}. These approaches frame causal discovery as a table-to-graph prediction task, leveraging a foundation model pre-trained on synthetic data generated from complex prior spaces to predict causal graphs for unseen data. However, the correctness and performance of this learning paradigm are highly sensitive to the design of the prior space. For example, in bivariate causal discovery, \citet{Dhir2025} showed that if the prior space is non-identifiable, the output is ambiguous; conversely, \citet{Montagna2025} demonstrated that models trained on well-specified, identifiable prior spaces can achieve consistent results.

To address both the limited generalizability of FCM-based methods under heterogeneous mechanisms and the lack of theoretical guarantees in amortized inference, we propose integrating these two paradigms. To this end, we introduce \textbf{DAG-FM}, a foundation model for causal discovery that intrinsically guarantees the output of strictly identifiable DAGs solely from observational data. 

Specifically, our contributions bridge theory and practice:
Theoretically, we establish a design condition detailing how to heterogeneously incorporate FCM assumptions into the prior space. This formulation guarantees that the amortized model converges to a unique, identifiable DAG. 
Empirically, we decouple the DAG identification process into two synergistic sub-tasks: \textit{leaf-node prediction} and \textit{parent-node prediction}, conducting large-scale pre-training for each. Together, these two learned modules constitute DAG-FM, collaborating sequentially to reconstruct the DAG. 

This architecture enables highly scalable inference, allowing DAG-FM to handle datasets with up to $100{,}000$ samples or $1{,}000$ dimensions (or $30{,}000$ samples and $100$ dimensions simultaneously) on a single GPU with 24GB VRAM under half-precision \texttt{BP16}. Extensive experiments demonstrate that DAG-FM achieves state-of-the-art performance, significantly outperforming traditional approaches on both synthetic and real-world datasets, while also surpassing contemporary causal discovery foundation models across multiple evaluation metrics.

In summary, our primary contributions are as follows:
\vspace{-0.5em}
\begin{itemize}[leftmargin=*]
\itemsep 0em
\item We provide a design condition for the prior space that allows for heterogeneous causal mechanisms, while theoretically ensuring that the predictions of the amortized method converge to a unique DAG from observational data.
\item We develop \textbf{DAG-FM}, a foundation model for causal discovery consisting of two sub-modules, which guarantees the accurate prediction of a unique DAG directly from observational data.
\item We provide extensive experiments demonstrating that DAG-FM achieves state-of-the-art performance on both synthetic and real-world benchmark tests.
\end{itemize}
\section{Related Works}

\paragraph{FCM-based Causal Discovery}
FCM-based causal discovery aims to recover the underlying causal structure by restricting the causal mechanisms within a SCM. Theoretically, to achieve the ultimate goal of DAG identifiability, FCM approaches impose specific structural constraints on the underlying causal mechanisms. For instance, LiNGAM \citep{Shimizu2006} elegantly established that causal directions are uniquely identifiable under the assumptions of linear causal relationships and non-Gaussian additive noise. Subsequently, \citet{Hoyer2008} extended this identifiability to ANM with nonlinear causal mechanisms and general noise distributions. Building upon this, \citet{Zhang2009} further generalized the identifiability conditions to post-nonlinear settings, characterizing an even broader space of identifiable models. More recently, these identifiability conditions have been further relaxed to accommodate heteroscedastic noises \citep{Immer2023,Strobl2023}, while contemporary theoretical efforts \citep{Xi2025,Chen2026} strive to generalize FCM framework to even broader conditions.

In practice, a corresponding lineage of multivariate causal discovery algorithms has been explicitly developed to translate these theoretical guarantees into empirical DAG recovery. For example, within the LiNGAM framework, algorithms such as Direct-LiNGAM \citep{Shimizu2011} and ICA-LiNGAM \citep{Shimizu2006} are capable of identifying the entire DAG. For ANMs, a diverse suite of methods has been deployed, encompassing RESIT \citep{Peters2014}, CAM \citep{Buhlmann2014}, SCORE \citep{Rolland2022}, DAS \citep{Montagna2023b}, and NoGAM \citep{Montagna2023}. Similarly, addressing HNMs, recent algorithmic developments include SkewScore \citep{Lin2025} and HOST \citep{Duong2022}. However, these classical methods heavily premise on the homogeneity of the imposed mechanisms. Consequently, their robustness in handling heterogeneous mechanisms or severe model misspecifications remain largely uncertain.

\paragraph{Amortized Causal Discovery}
Amortized causal discovery employs a large structured-data model (LDM) to predict causal graphs directly from tabular data by treating structured data as joint distributions \citep{Zhang2025b}. This predictive model is pre-trained by sampling random causal graphs along with their corresponding observations from an artificially constructed prior space. Unlike tabular foundation models, which utilize in-context learning to predict missing entries at the cell level, amortized causal discovery operates at the variable level to infer the causal relationships among variables.

For instance, AVICI \citep{Lorch2022} employs an Axial Transformer to model interactions across tabular features, aggregates feature representations through max-pooling, and utilizes a bilinear head to predict the probability of edge existence in the adjacency matrix. Similarly, CSIvA \citep{Ke2023} predicts edge probabilities autoregressively. Related approaches, such as CauScale \citep{Peng2026} and TabCausal \citep{Li2026}, further extend this paradigm by scaling the training and inference processes to larger regimes.

However, a significant challenge with these methods is their direct modeling of the adjacency matrix, which fails to formally guarantee that the output is a DAG. BCNP \citep{Dhir2025} and Arrow \citep{Thompson2026} address this deficiency by decomposing the prediction task into causal ordering and an upper-triangular matrix. Other strategies focus on alternative forms of causal structures: for instance, SiCL \citep{Zhang2025} aims to predict Markov equivalence classes, TCD-DL \citep{Kim2025} focuses on predicting the ancestor set for a target node, and TabOrder \citep{Xu2026} is dedicated to predicting the causal ordering of variables.
\section{Preliminaries}

\paragraph{Structural Causal Model}
An SCM involving $d$ variables is defined as a triplet $\phi = (\boldsymbol{f}, \boldsymbol{X}, \boldsymbol{U})$, where $\boldsymbol{X}$ and $\boldsymbol{U}$ denote exogenous and endogenous variables, respectively. For each $i \in I = \{1, \dots, d\}$, the relationship is defined as $X_i = f_i(\boldsymbol{X}_{\text{pa}(i)}, U_i)$, where $\text{pa}(i) \subseteq I \setminus \{i\}$ denotes the set of parents of $X_i$, and $f_i$ represents the causal mechanism. We assume the SCM is Markovian \citep{Pearl2009}, implying $\boldsymbol{X}_{\text{pa}(i)} \perp\!\!\!\perp U_i$ for all $X_i$. The graph $\mathcal{G} = (I, \mathcal{E})$, where $(j, i) \in \mathcal{E}$ if and only if $j \in \text{pa}(i)$, is known as the causal graph and is a DAG. A topological ordering $\tau$ of this graph defines a causal ordering.

\paragraph{Amortized Causal Discovery}
Let $p(\phi)$ be a prior distribution over an artificially constructed prior space $\Phi$ of SCMs. An SCM $\phi \sim p(\phi)$ with $d$ variables corresponds to a specific DAG $\mathcal{G}$. Observational data $\mathcal{D} = \{\boldsymbol{x}_k\}_{k=1}^n \in \mathbb{R}^{n \times d}$ is sampled from the observational distribution $p(\boldsymbol{X}\!\mid\!\phi)$. The posterior distribution of the causal graph given the observational data is defined as:
\begin{align}
P(\mathcal{G}\!\mid\!\mathcal{D}) \propto \int P(\mathcal{G}\!\mid\!\mathcal{D}, \phi) \, p(\mathcal{D}\!\mid\!\phi) \, p(\phi) \, d\phi.
\end{align}
To predict $\mathcal{G}$ given $\mathcal{D}$, an amortized model $Q_\theta(\mathcal{G}\!\mid\!\mathcal{D})$ is constructed to approximate the posterior distribution $P(\mathcal{G}\!\mid\!\mathcal{D})$. This is achieved by minimizing the $\text{KL}(P \| Q_\theta)$ divergence, which is equivalent to maximizing the log-likelihood $\mathbb{E}_{\mathcal{G}, \mathcal{D}}[\log Q_\theta(\mathcal{G}\!\mid\!\mathcal{D})]$. During training, we perform large-scale sampling of random SCMs $\phi$ from $p(\phi)$ to generate corresponding pairs of $(\mathcal{G}, \mathcal{D})$ for pre-training the model $Q_\theta$. During inference, the model takes $\mathcal{D}$ as input and produces $\mathcal{G}$ through either sampling or maximum a posteriori (MAP) estimation.
\section{DAG-FM: The DAG Foundation Model}
\subsection{DAG Identifiability via Mechanism Families}
The asymptotic identifiability of DAGs in amortized methods hinges on whether the MAP estimator $\arg\max_{\mathcal{G}}Q_\theta(\mathcal{G}\!\mid\!\mathcal{D})$ converges to a unique ground-truth graph as the model parameters $\theta$ are optimized and the sample size $|\mathcal{D}|$ increases. By the properties of KL divergence and the Law of Large Numbers, we have:
\begin{align}
\lim_{|\mathcal{D}|\to\infty}Q_{\theta^*}(\mathcal{G}\!\mid\!\mathcal{D})=\lim_{|\mathcal{D}|\to\infty}P(\mathcal{G}\!\mid\!\mathcal{D})=P(\mathcal{G}\!\mid\!p(\boldsymbol{X})),
\end{align}
where $\theta^* = \arg\min_\theta \text{KL}(P\|Q_\theta)$ denotes the optimal parameters under which $Q_\theta(\mathcal{G}\!\mid\!\mathcal{D})$ matches the posterior $P(\mathcal{G}\!\mid\!\mathcal{D})$. As $|\mathcal{D}|\to\infty$, the empirical distribution $\hat{p}_{\mathcal{D}}(\boldsymbol{X})=\sum_{i=1}^{|\mathcal{D}|}\delta_{\boldsymbol{x}_i}(\boldsymbol{X})$ converges almost surely to the true observational distribution $p(\boldsymbol{X})$. Consequently, the convergence of the amortized model to a unique graph is equivalent to the uniqueness of the posterior map $\arg\max_{\mathcal{G}}P(\mathcal{G}\!\mid\!p(\boldsymbol{X}))$.

\begin{definition}[Almost Sure Identifiability]
A DAG $\mathcal{G}$ is said to be identifiable almost surely in the posterior if there exists a graph $\mathcal{G}$ such that $P(\mathcal{G}\!\mid\!p(\boldsymbol{X}))=1$.
\end{definition}

To guarantee this, we incorporate assumptions rooted in established FCM literature, focusing on four primary classes of causal mechanisms:

\begin{definition}[Mechanism Families $\mathcal{F}$]
\label{def:1}
For a mechanism $f(\boldsymbol{X}_{\text{pa}(i)}, U_i)$, we define:
\vspace{-0.5em}
\begin{itemize}
\itemsep0em
\item $\mathcal{F}_{\text{LiNGAM}}$: Specified by a vector $\boldsymbol{w}\in\mathbb{R}^d$ such that $f(\boldsymbol{X},U)=\boldsymbol{w}^\intercal\boldsymbol{X}+U$, where $U_i$ is a non-Gaussian variable.
\item $\mathcal{F}_{\text{ANM}}$: Specified by a non-linear function $g:\mathbb{R}^d\to\mathbb{R}$ such that $f(\boldsymbol{X},U)=g(\boldsymbol{X})+U$.
\item $\mathcal{F}_{\text{HNM}}$: Specified by a function $g:\mathbb{R}^d\to\mathbb{R}$ and a non-constant function $h:\mathbb{R}^d\to\mathbb{R}_+$ such that $f(\boldsymbol{X},U)=g(\boldsymbol{X})+h(\boldsymbol{X})\,U$.
\item $\mathcal{F}_{\text{PNL}}$: Specified by a function $g:\mathbb{R}^d\to\mathbb{R}$ and a monotonic non-linear function $h:\mathbb{R}\to\mathbb{R}$ such that $f(\boldsymbol{X},U)=h(g(\boldsymbol{X})+U)$.
\end{itemize}
\end{definition}
\vspace{0.15em}

\begin{assumption}[Prior Space]\label{assum:1}
(i) Causal sufficiency (no unobserved confounders) and faithfulness (conditional independencies imply d-separation) hold. (ii) For any $\phi = (\boldsymbol{X}, \boldsymbol{f}, p(\boldsymbol{U}))\in\Phi$, each mechanism $f_i$ belongs to the family $\mathcal{F}_i \in \{\mathcal{F}_{\text{LiNGAM}}, \mathcal{F}_{\text{ANM}}, \mathcal{F}_{\text{HNM}}, \mathcal{F}_{\text{PNL}}\}$.
\end{assumption}

In stark contrast to traditional FCM-based methods that assume homogeneous data generation, \cref{assum:1} permits the SCM to exhibit \textit{heterogeneous mechanisms}. That is, an SCM can be elegantly formulated as a composite mixture of mechanisms drawn from fundamentally different families. The following theoretical results guarantee that, within a prior space satisfying \cref{assum:1}, the true DAG is almost surely identifiable from the posterior.

\begin{restatable}{theorem}{theoremMechanims}\label{thm:1}
Under \cref{assum:1}, given that $Y$ is the effect variable, if $Y=f(\boldsymbol{X},U)$ induces the observational distribution $p(\boldsymbol{X},Y)$ and the family of the mechanism $f$ is $\mathcal{F}$, then the posterior probability $P(\mathcal{F}\!\mid\!p(\boldsymbol{X},Y))=1$.
\end{restatable}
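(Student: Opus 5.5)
The plan is to write the posterior over mechanism families explicitly via Bayes' rule and show that every family other than the true one receives zero posterior mass, i.e. is a null event under the prior. Write
\begin{align}
P(\mathcal{F}'\!\mid\!p(\boldsymbol{X},Y)) \propto \int_{\Phi_{\mathcal{F}'}} \1\{p(\boldsymbol{X},Y\!\mid\!\phi)=p(\boldsymbol{X},Y)\}\, p(\phi)\, d\phi,
\end{align}
where $\Phi_{\mathcal{F}'}$ is the set of prior SCMs whose effect mechanism lies in $\mathcal{F}'$. The claim $P(\mathcal{F}\!\mid\!p(\boldsymbol{X},Y))=1$ then reduces to showing two things for each $\mathcal{F}'\neq\mathcal{F}$: the set of parameters in $\Phi_{\mathcal{F}'}$ reproducing the same joint $p(\boldsymbol{X},Y)$ with $Y$ as effect has prior measure zero, while the true family $\Phi_{\mathcal{F}}$ carries positive density at the data-generating parameters. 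Causal sufficiency in \cref{assum:1} lets us fix $\boldsymbol{X}$ as the parent set with $U\perp\!\!\!\perp\boldsymbol{X}$, so we only ever compare representations $Y=f'(\boldsymbol{X},U')$ with independent noise.

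Next, I would go through the ordered pairs of families. The key observation is that, since the cause direction is given, each family's signature can be read off the conditional $p(Y\!\mid\!\boldsymbol{X})$.
\begin{itemize}
\item $\mathcal{F}_{\text{LiNGAM}}$ vs.\ $\mathcal{F}_{\text{ANM}}$: $\mathbb{E}[Y\!\mid\!\boldsymbol{X}]$ is linear versus nonlinear, and the residual is independent of $\boldsymbol{X}$ in both.
\item ANM/LiNGAM vs.\ $\mathcal{F}_{\text{HNM}}$: the conditional variance $\mathrm{Var}(Y\!\mid\!\boldsymbol{X})$ is constant versus non-constant, because $h$ is required to be non-constant.
\item Additive models vs.\ $\mathcal{F}_{\text{PNL}}$: after a nonlinear monotone $h$, the conditional law $p(Y\!\mid\!\boldsymbol{X}=\boldsymbol{x})$ is no longer a translate family in $\boldsymbol{x}$, except on the degenerate sets characterized by \citet{Zhang2009}.
\end{itemize}
For each pair I would state the coincidence condition as a functional equation, for example $h(g(\boldsymbol{x})+u)=g'(\boldsymbol{x})+h'(\boldsymbol{x})u'$ in distribution, and argue that its solution set is a lower-dimensional or otherwise null subset of the prior over $(g,h,p(U))$.

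The main obstacle is the boundary between overlapping families. $\mathcal{F}_{\text{LiNGAM}}$ is literally contained in the closure of $\mathcal{F}_{\text{ANM}}$ and $\mathcal{F}_{\text{HNM}}$, and PNL with linear $h$ collapses to ANM. Exact distributional equality therefore cannot be ruled out for all parameters; it can only be ruled out almost surely. My plan is to rely on the definitional exclusions ($g$ nonlinear in ANM, $h$ non-constant in HNM, $h$ nonlinear in PNL, $U$ non-Gaussian in LiNGAM). I would also impose that the prior $p(\phi)$ is absolutely continuous on each family's parameterization, so that the measure-zero exceptional sets from the PNL identifiability analysis of \citet{Zhang2009} and the HNM analyses of \citet{Immer2023,Strobl2023} receive no mass. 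This continuity assumption is implicit in the statement rather than explicit, so I would flag it in the proof.

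With all off-diagonal masses equal to zero, normalization gives $P(\mathcal{F}\!\mid\!p(\boldsymbol{X},Y))=1$.
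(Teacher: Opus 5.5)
The Bayesian reduction in your first paragraph is where the argument fails. Conditioning on the exact law $p(\boldsymbol{X},Y)$ means conditioning on a prior-null event. Under an absolutely continuous prior, the set of parameters in the \emph{true} family $\Phi_{\mathcal{F}}$ that reproduce $p(\boldsymbol{X},Y)$ is itself null: it is essentially the data-generating point, up to reparametrization. So your indicator integral is $0$ for $\mathcal{F}$ just as for every $\mathcal{F}'$, and ``off-diagonal masses vanish, then normalize'' is $0/0$. Positive prior density at the true parameter does not make the integral of an indicator of a null set positive.

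Null fibers are also the wrong criterion. Write $T:\phi\mapsto p(\cdot\mid\phi)$. The posterior is the Radon--Nikodym derivative of each family's push-forward under $T$ with respect to their sum. It is almost surely degenerate exactly when these push-forwards are mutually singular. You would get that, for instance, by showing that for every pair $\mathcal{F}\neq\mathcal{F}'$ the set of all $\phi$ with $T(\phi)\in T(\Phi_{\mathcal{F}})\cap T(\Phi_{\mathcal{F}'})$ has prior mass zero. Even then you obtain $P(\mathcal{F}\mid p(\boldsymbol{X},Y))=1$ only for almost every realized $p(\boldsymbol{X},Y)$, under an added continuity assumption. That is weaker than the statement as written, though the next paragraph shows some such weakening is actually unavoidable.

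The paper avoids measure theory entirely by arguing that the images $T(\Phi_{\mathcal{F}})$ are disjoint. The QPE $\boldsymbol{\psi}(y\mid\boldsymbol{x})=\nabla_{\boldsymbol{x}}Q(y\mid\boldsymbol{x})$ is a functional of $p(\boldsymbol{X},Y)$, and its form is family-specific:
\begin{itemize}
\item LiNGAM: a constant vector;
\item ANM: $y$-free but non-constant;
\item HNM: affine in $y$ with slope $\nabla\log h\neq 0$;
\item PNL: $\nabla g(\boldsymbol{x})\,\overline{h}(y)$ with $\overline{h}=h'(h^{-1})$ non-constant.
\end{itemize}
Hence no SCM outside $\mathcal{F}$ could produce the observed law, and the posterior would be degenerate whatever the prior.

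You have also misplaced the obstacle. The boundary cases you list (LiNGAM in the closure of ANM/HNM, PNL with linear $h$) are already removed by the exclusions in \cref{def:1}. For those pairs your signatures separate the families \emph{exactly}, with no null-set argument needed:
\begin{itemize}
\item linear versus non-linear conditional location;
\item constant versus non-constant conditional scale (use an interquartile range rather than $\mathrm{Var}(Y\mid\boldsymbol{X})$, which can be infinite);
\item translate versus non-translate conditional laws. Here $\nabla_{\boldsymbol{x}}Q(\tau\mid\boldsymbol{x})=h'(g(\boldsymbol{x})+Q_U(\tau))\,\nabla g(\boldsymbol{x})$ being free of $\tau$ forces $h$ to be affine. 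This is an elementary computation, and it is not something the PNL/HNM identifiability papers you cite provide; those concern the causal direction, not family discrimination in a fixed direction.
\end{itemize}

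The one overlap that survives \cref{def:1} is HNM versus PNL, which is exactly the coincidence equation you wrote but did not solve. Take $Y=e^{g(\boldsymbol{X})+U}=0+e^{g(\boldsymbol{X})}\,e^{U}$. This is simultaneously a PNL with $h=\exp$ and an HNM with zero location, non-constant positive scale $e^{g}$, and noise $e^{U}\perp\!\!\!\perp\boldsymbol{X}$. Both representations have QPE $\nabla g(\boldsymbol{x})\,y$. More generally, every $h(z)=Ce^{bz}+k$, i.e.\ every $\overline{h}$ affine in $y$, produces such a collision. This is also precisely where the paper's ``one basis function versus two'' step breaks, since then $\mathrm{span}\{\overline{h}\}\subset\mathrm{span}\{1,y\}$.

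So the statement as written needs either an extra exclusion (no exponential-affine post-nonlinearities) or an almost-sure version. In the latter, you must show this solution set gets zero prior mass; that does hold for the warping library the paper actually samples from. Until you solve that equation and state the resulting condition, the step carrying the real content of your proof is a placeholder.
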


In essence, under the heterogeneous setup delineated in \cref{assum:1}, Theorem \ref{thm:1} demonstrates that the exact nature of the mechanism family can be perfectly discriminated by the posterior, given only the observational distribution. 

\begin{restatable}{theorem}{theoremDAG}\label{thm:2}
Under \cref{assum:1} and some additional identifiability conditions\footnote{These additional identifiability conditions require that condition (iv) in \cref{assum:qpe_wronk} and \cref{assum:finite_basis} hold simultaneously.}, if the SCM $\phi=(\boldsymbol{X},\boldsymbol{f},p(\boldsymbol{U}))$ induces the observational distribution $p(\boldsymbol{X})$ and the causal graph $\mathcal{G}$, then the posterior probability is $P(\mathcal{G}\!\mid\!p(\boldsymbol{X}))=1$, i.e., $\mathcal{G}$ is identifiable almost surely.
\end{restatable}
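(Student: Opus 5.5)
The plan is to reduce almost-sure posterior identifiability of the whole DAG to a sequence of local, bivariate-type identifiability statements, and to assemble them along a causal ordering by repeatedly identifying sink (leaf) nodes. The first observation is that, since $P(\mathcal{G}\mid p(\boldsymbol{X}))$ is a posterior over a prior space $\Phi$, it suffices to show the following. The set of SCMs $\phi'\in\Phi$ that induce the same $p(\boldsymbol{X})$ but a different graph $\mathcal{G}'\neq\mathcal{G}$ has prior measure zero. Equivalently, for $p(\phi)$-almost every $\phi$, every $\phi'$ with the same observational distribution has the same graph. We will therefore prove a deterministic statement, modulo a null set excluded by the additional conditions (condition (iv) of \cref{assum:qpe_wronk} and \cref{assum:finite_basis}). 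Those conditions are used to remove the measure-zero pathological parameter configurations, for example PNL/HNM cases that admit a backward model, or finite-basis degeneracies.

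\textbf{Step 1 (leaf identification).} Consider a node $i$. We use the local mechanism statement of \cref{thm:1}: if $X_i$ is a sink, then $X_i=f_i(\boldsymbol{X}_{\text{pa}(i)},U_i)$ with $f_i\in\mathcal{F}_i$ and $U_i\perp\!\!\!\perp \boldsymbol{X}_{-i}$, with the family recovered with posterior one. Conversely, we show that if $X_i$ is not a sink, then there is no family $\mathcal{F}\in\{\mathcal{F}_{\text{LiNGAM}},\mathcal{F}_{\text{ANM}},\mathcal{F}_{\text{HNM}},\mathcal{F}_{\text{PNL}}\}$ and no function $f\in\mathcal{F}$ and noise $U\perp\!\!\!\perp\boldsymbol{X}_{-i}$ with $X_i=f(\boldsymbol{X}_{-i},U)$, except on a null set. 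The argument takes a child $j$ of $i$ that has no other descendants of $i$ as parents, and conditions on the remaining parents of $j$ (by faithfulness and causal sufficiency these act as fixed covariates). This reduces the situation to a bivariate pair $(X_i,X_j)$ where the true direction is $i\to j$ under family $\mathcal{F}_j$ and the hypothesized reverse direction uses family $\mathcal{F}$. We then invoke the known bivariate identifiability results (Shimizu for LiNGAM, Hoyer for ANM, the HNM results of Immer/Strobl, and Zhang--Hyvärinen for PNL), now across \emph{cross-family} pairs. Since PNL contains ANM and LiNGAM as special cases, and HNM contains ANM, most cross pairs reduce to the PNL-vs-PNL or HNM-vs-HNM differential-equation arguments. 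The exceptional solution sets of those differential equations are exactly what the additional conditions declare to be of prior measure zero.

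\textbf{Step 2 (induction).} Once the sink set is identified from $p(\boldsymbol{X})$, we remove a sink $i$ and marginalize. The marginal $p(\boldsymbol{X}_{-i})$ is induced by the sub-SCM on $I\setminus\{i\}$, which still satisfies \cref{assum:1}, because Markovianity and the mechanism families are preserved under removing a sink. Iterating yields a causal ordering $\tau$ that is a deterministic function of $p(\boldsymbol{X})$ almost surely. Given $\tau$, the parent set of each $X_i$ is recovered as the minimal subset $S$ of its predecessors with $X_i\perp\!\!\!\perp \boldsymbol{X}_{\text{pred}(i)\setminus S}\mid \boldsymbol{X}_S$. Faithfulness makes this set unique and equal to $\text{pa}(i)$. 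Hence $\mathcal{G}$ is a.s. a function of $p(\boldsymbol{X})$, which gives $P(\mathcal{G}\mid p(\boldsymbol{X}))=1$.

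\textbf{Main obstacle.} The delicate point is the cross-family reverse-model exclusion in Step 1. The bivariate theorems are stated for a fixed family in both directions, and the multivariate reduction must handle conditioning on other parents. For that reduction I would first try the Peters et al. style argument for restricted SCMs: fix the other parents to a value of positive density and apply the bivariate result pointwise. The mixed HNM/PNL pairs cannot be absorbed into a single family, so for them I would embed both into a common location-scale-post-nonlinear form. The cases where this fails would then be discharged by the finite-basis condition, which makes the set of admissible non-identifiable solutions a lower-dimensional subset of the prior's parameter space and hence a null set.
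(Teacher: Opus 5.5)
There is a genuine gap, and it sits exactly where you place the main obstacle: excluding a reverse (non-sink) representation when the true forward mechanism and the hypothesized reverse mechanism come from different families. The bivariate theorems you cite are single-family statements, most with explicit non-identifiable exceptional cases. None of them covers the mixed HNM/PNL pairs. The common location-scale-post-nonlinear form you propose to embed both into is the PNL-HNM class, whose identifiability the paper itself calls open.

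Your plan for the leftover cases also rests on a misreading of the extra hypotheses. Neither condition (iv) of \cref{assum:qpe_wronk} nor \cref{assum:finite_basis} says that anything has prior measure zero. Condition (iv) is a deterministic requirement that the Wronskian $W_{X_j}(s_{j,k},\boldsymbol{\eta}_j)$, built from derivatives of $\log p(\boldsymbol{x})$ and a basis $\boldsymbol{b}$, is not identically zero. \cref{assum:finite_basis} only says that finitely many functions $\overline{h}=h'(h^{-1})$ occur. Neither makes any exceptional set lower-dimensional. For the same reason, recasting the claim as an almost-every-$\phi$ statement is not equivalent to the theorem, which concerns the particular $\phi$ and $p(\boldsymbol{X})$ satisfying these conditions.

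The missing idea is the paper's family-agnostic leaf criterion based on quantile partial effects.
\begin{itemize}
\item By \cref{tab:qpe}, the QPE $\nabla_{\boldsymbol{x}}Q(y\mid\boldsymbol{x})$ of an effect variable lies in the span of $1$, $(1,y)$, or $\overline{h}$, depending on the family.
\item Finiteness of the $\overline{h}$'s is exactly what allows a single basis $\boldsymbol{b}=(1,y,\overline{h}_1,\dots,\overline{h}_k)$ that covers all four families at once.
\item If a variable's QPE given the others lies in $\mathrm{span}(\boldsymbol{b})$, the associated Wronskian vanishes identically. Condition (iv) rules this out for every other variable, so the true leaf is the unique such variable (\cref{lem:id_wronk}, \cref{cor:fcm_id}).
\end{itemize}
This excludes a reverse model from any of the four families in one stroke, with no forward/reverse case analysis. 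Applying it to sub-distributions $p(\boldsymbol{X}_{I'})$ is how the paper rules out reversed edges.

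Separately, your bivariate reduction is mis-specified. Conditioning only on $\text{pa}(j)\setminus\{i\}$ does not make the reverse representation $X_i=f(\boldsymbol{X}_{-i},U)$ bivariate. Your choice of $j$ (no other descendants of $i$ among its parents) also does not stop other children of $i$, which lie in the Markov blanket of $i$, from being descendants of $j$. Conditioning on those destroys the independence of $U_j$ in the forward direction. A Peters-type reduction would instead take $j$ to be the last child of $i$ in a topological order and condition on the rest of the Markov blanket of $i$. Even repaired, though, it would still need the cross-family bivariate results you do not have.

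Your Step 2 is fine: peeling off sinks, then reading parents off the ordering by faithfulness, matches how the paper handles superfluous and missing edges.
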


In other words, we establish a sufficient condition for prior space design that guarantees DAG identifiability. As long as the prior space constrains randomly sampled causal mechanisms to belong to one of the four families in \cref{def:1} (and satisfies the additional identifiability conditions), while ensuring that causal sufficiency and faithfulness hold, the DAG is guaranteed to be identifiable under the posterior distribution. Furthermore, this theoretically ensures that the amortized model will inherently converge, mapping observational data to a uniquely identifiable DAG.
\subsection{Prior Space Construction}
\label{sec:4_prior}

To construct a prior space $\Phi$ that is both expressive and adheres to the theoretical requirements stipulated \cref{assum:1}, we design random SCMs using the following components.

\paragraph{Random Graphs}
We employ three generative graph models: Erdős–Rényi, Barabási–Albert, and Watts–Strogatz. The number of nodes $d \in \{2, \dots, 100\}$.

\paragraph{Random Noise}
Exogenous variables are sampled from a comprehensive library of distributions: Normal, Laplace, Uniform, Student’s $t$, Log-Normal, Gumbel, Exponential, $\chi^2$, Beta, and Gamma. Each distribution is parameterized by randomly sampled hyper-parameters. 

\paragraph{Random Mechanisms}
To implement the underlying causal mechanisms, we consider a diverse repertoire of random functions, including random linear functions, Multi-Layer Perceptrons (MLPs), periodic functions, Gaussian Processes with Random Fourier Features (GP-RFF), and Random Forests. In addition to the four strictly defined mechanism families outlined in \cref{def:1}, we introduce a generalized causal mechanism formally defined as $f(\boldsymbol{X}, U)$. This is implemented by treating the exogenous noise $U$ directly as an intrinsic input dimension during the forward pass. Although this general mechanism currently lacks a rigorous theoretical guarantee for strict DAG identifiability, we empirically observe that the implicit inductive biases of the function approximators still endow it with a non-trivial degree of practical identifiability. Consequently, for each local structure within the randomly sampled DAG, we uniformly sample from these mechanism families to instantiate the respective causal assignment.

To prevent the model from exploiting spurious statistical artifacts that could artificially trivialize DAG identification on synthetic benchmarks \citep{Reisach2021,Reisach2023}, we enforce strict distribution normalization. Specifically, immediately following each forward inference step of the structural equation $Y=f(\boldsymbol{X},U)$, the generated output $Y$ is dynamically standardized to have zero mean and unit variance \citep{Ormaniec2025}.

\paragraph{Post-processing}
Adopting the data augmentation pipeline from \cite{Hollmann2025}, we subject a fraction of the features to non-linear warping and discretization. Warping functions include sinh-arcsinh transformations, residual flows, and asymmetric power functions. Discretization is performed via quantile binning, uniform windowing, and ordinal encoding to simulate real-world data characteristics.

\paragraph{Sampling Tabular Data}
After generating the full feature set, we organize the observations into a tabular dataset $\mathcal{D} \in \mathbb{R}^{n \times d}$ and shuffle the feature order. Unlike \cite{Hollmann2025, Qu2026}, we perform no feature dropout to strictly satisfy the causal sufficiency assumption. While the training sample size is fixed at $n=1024$, our amortized model is designed to generalize to varying sample sizes during inference.
\subsection{Decomposition of DAG Identification}
While \cref{thm:2} guarantees the existence of a unique identifiable DAG under \cref{assum:1}, existing amortized methods like \cite{Lorch2022} often parameterize the adjacency matrix directly. This approach faces two primary hurdles: the lack of formal DAG constraints and the quadratic scaling of the output space with respect to dimensionality, which impedes scalability to high-dimensional datasets. Following \cite{Dhir2025, Thompson2026}, we decompose the DAG identification problem into two sub-problems: \textit{order identification} and \textit{DAG pruning}, formulated as:
\begin{align}
P(\mathcal{G}\!\mid\!p(\boldsymbol{X}))=\sum_{\tau} P(\mathcal{G}\!\mid\!\tau, p(\boldsymbol{X}))\,P(\tau\!\mid\!p(\boldsymbol{X})).
\end{align}
Each sub-problem is further decomposed into recursive prediction tasks.

\paragraph{Amortized Order Identification via Leaf Prediction}
Inspired by order-based causal discovery, we identify the causal ordering $\tau$ by recursively uncovering the leaves of the DAG. The task of identifying a leaf node $i$ from a set of variables $\boldsymbol{X}$ is framed as $\arg\max_i P(L_i=1\!\mid\!p(\boldsymbol{X}))$, where the event $L_i$ indicates whether $i$ is a leaf node in the DAG. This transformation reduces the ordering problem to a sequence of binary classification tasks.

\paragraph{Amortized DAG Pruning via Parent Prediction}
Given a causal ordering $\tau$, the DAG skeleton is constrained to an upper triangular structure. To avoid the quadratic growth of predicting the entire adjacency matrix, we recursively prune the graph by identifying the parents of each leaf node. Specifically, the identification of a parent $j$ for a confirmed leaf node $i$ is framed as $\arg\max P(E_{ji}\!\mid\!L_i=1, p(\boldsymbol{X}))$, where $E_{ji}$ denotes the presence of a directed edge $(j \to i)$. This process remains a tractable series of binary classification tasks.

\paragraph{Mixture of Experts for Leaf Prediction}

Since our approach relies on the theoretical conditions of FCM-based causal discovery under \cref{assum:1}, we analogously draw inspiration from classical FCM practices. Empirically, discovering varying mechanism families requires developing distinct, tailored algorithms; these identification methods exhibit specific focal preferences and generally operate flawlessly only under strict assumption matching. Motivated by this, to effectively handle heterogeneous causal mechanisms, we design a novel framework denoted as \textit{Mixture-of-Leaf-Expert (MoLE)}. This design encourages different experts to dynamically attend to distinct data features, thereby enhancing the holistic predictive capability of the model across each mechanism family.

We refine the leaf node prediction process into two hierarchical steps: (i) identifying the underlying mechanism family; (ii) predicting the leaf node conditioned on that family. This is formalized as:
\begin{align}
P(L_i=1\!\mid\!p(\boldsymbol{X}))=\sum_{\mathcal{F}_i} P(L_i=1\!\mid\!\mathcal{F}_i, p(\boldsymbol{X}))\,P(\mathcal{F}_i\!\mid\!p(\boldsymbol{X})),
\end{align}
where, according to \cref{thm:1}, $P(\mathcal{F}_i\!\mid\!p(\boldsymbol{X}))$ effectively acts as a degenerate distribution. This framework offers three primary advantages: 
First, it modularizes the amortized inference process by assigning specialized experts to distinct mechanism families. 
Second, it provides a degree of interpretability by explicitly surfacing the inferred mechanism class and its corresponding theoretical foundation. 
Third, this framework enables human intervention: one may manually adjust $P(\mathcal{F}_i\!\mid\!p(\boldsymbol{X}))$ to incorporate external expert priors, moving beyond the ``black-box" nature of fully automated amortized inference.

\begin{figure}[htbp]
    \centering
    \includegraphics[width=\linewidth]{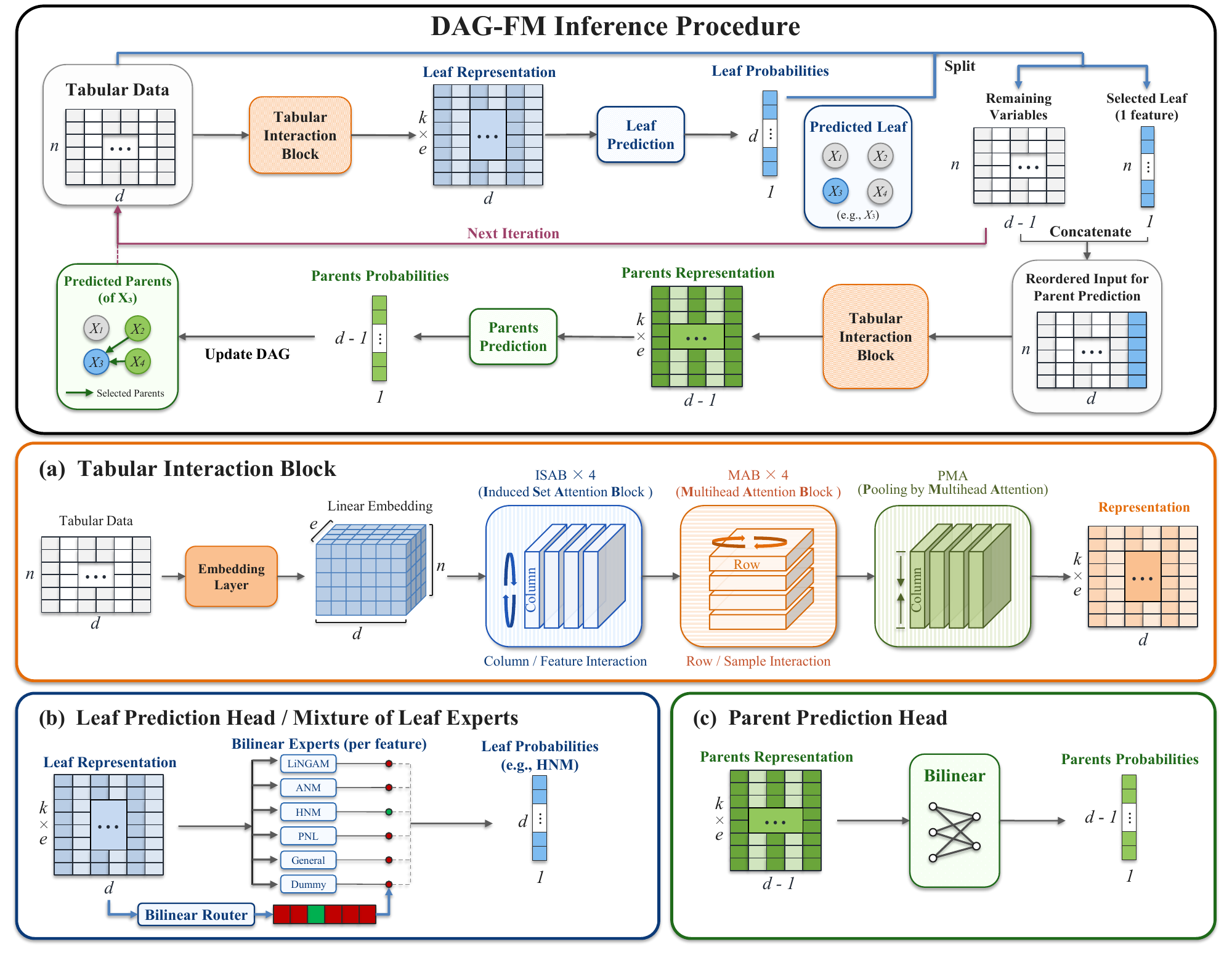}
    \vspace{-0.5em}
    \caption{Architecture and inference Procedure of DAG-FM. (a) Tabular Interaction Block; (b) Leaf Prediction Head; (c) Parent Prediction Head.}
    \label{fig:1}
\end{figure}

\subsection{Foundation Model Architecture}
DAG-FM employs two Transformer-based sub-modules, $Q_{\theta_1}(\boldsymbol{L}\!\mid\!\mathcal{D})$ and $Q_{\theta_2}(\boldsymbol{E}_{:,i}\!\mid\!L_i,\mathcal{D})$, to amortize the processes of leaf-node prediction and parent-node prediction, respectively. The architecture is illustrated in \cref{fig:1}.

\vspace{-0.5em}
\begin{wrapfigure}{r}{0.55\textwidth} 
    \vspace{-20pt} 
    \begin{minipage}{0.55\textwidth}
        \begin{algorithm}[H] 
            \caption{DAG-FM Inference Procedure}
            \label{alg:1}
            \begin{algorithmic}[1]
                \Require Tabular data $\mathcal{D}$, threshold $\alpha$
                \Ensure Causal DAG $\mathcal{G}$, causal ordering $\tau$
                \State $I \gets \{1, \dots, d\},\, \tau \gets [],\, \mathcal{E} \gets \emptyset$
                \While{$|I| > 0$}
                    \State $l \gets \arg\max_{i \in I} Q_{\theta_1}(L_i\!\mid\!\mathcal{D}_{I})$
                    \State $S \gets \{j \in I \setminus \{l\} \mid Q_{\theta_2}(E_{jl}\!\mid\!L_l, \mathcal{D}_{I}) > \alpha\}$
                    \For{$s \in S$}
                        \State $\mathcal{E} \gets \mathcal{E} \cup \{(s, l)\}$
                    \EndFor
                    \State $I \gets I \setminus \{l\},\, \tau \gets [\tau, l]$
                \EndWhile
                \State \Return $\mathcal{G}, \tau$
            \end{algorithmic}
        \end{algorithm}
    \end{minipage}
    \vspace{-10pt}
\end{wrapfigure}

\paragraph{Tabular Interaction Block}
Unlike previous approaches \citep{Lorch2022, Peng2026, Li2026} that utilize Axial Transformers to alternate between row and column interactions or rely on max-pooling for feature-wise representations, we leverage the backbone architecture from TabICLv2 \citep{Qu2026}. The block is composed of four cascaded modules: a linear embedding layer, a Set Transformer with four ISAB blocks \citep{Lee2019} for row-wise (sample) interaction, a Transformer with four MABs \citep{Vaswani2017} for column-wise (feature) interaction, and a PMA \citep{Lee2019} to aggregate row-specific information. As shown in \cref{fig:1} (a), this transformation yields a representation of size $k \times e$ for each feature, where $k$ denotes the number of seed vectors in the PMA and $e$ the embedding dimension.

\paragraph{Leaf Prediction Head}
We integrate the MoLE framework into the leaf prediction head. It consists of $m+1$ bilinear heads corresponding to the mechanism families in our prior space, outputting both routing probabilities and family-specific expert predictions. Following \cite{Lorch2022}, each bilinear head is parameterized as $\text{sigmoid}(\frac{1}{E}\sum_{i=1}^E\boldsymbol{u}_i^\intercal\boldsymbol{v}_i+\boldsymbol{b}_i)$, where $\boldsymbol{u}, \boldsymbol{v}$ are generated by linear and LayerNorm layers. We apply a top-$k$ strategy (setting $k=1$) to weight the expert predictions for the final leaf probability.

In practice, in addition to the four mechanism families specified in \cref{assum:1}, two supplementary experts are included: (i) An expert specifically dedicated to the aforementioned general mechanism; and (ii) As dummy expert designed to handle source nodes (since source nodes consist strictly of pure noise and can thus be classified into any mechanism family), as well as any other potential unseen cases or scenarios that cannot even be categorized under the general mechanism.

\paragraph{Parent Prediction Head}
We also employ a bilinear head followed by a $\text{sigmoid}$ activation to predict the probability of a node being a parent. However, to enable the model to distinguish between the queried entities (all potential parent nodes) and the query target (the leaf node) during tabular interactions, we position the known leaf node as the first feature input and incorporate an additional learnable type token \citep{Devlin2019} during the embedding stage. At inference time, nodes with a predicted probability exceeding the threshold $\alpha=0.5$ are identified as parent nodes.

\paragraph{Training Procedure}
The base models are optimized via the following amortized objectives:
\begin{align}
\arg\min_{\theta_1}-\mathbb{E}_{\mathcal{D},\mathcal{G}}\left[\log Q_{\theta_1}(\boldsymbol{L}\!\mid\!\mathcal{D})\right], \quad \arg\min_{\theta_2}-\mathbb{E}_{\mathcal{D},\mathcal{G},L_i}\left[\log Q_{\theta_2}(\boldsymbol{E}_{:,i}\!\mid\!L_i,\mathcal{D})\right],
\end{align}
where $\mathcal{D}, \mathcal{G}$ are sampled from the prior space, and $\boldsymbol{L}$ is the binary label vector of leaf nodes. Within MoLE, the leaf log-likelihood is factored as:
\begin{align}
\log Q_{\theta_1}(\boldsymbol{L}\!\mid\!\mathcal{D})=\log\sum_{\mathcal{F}_i}\left(Q_{\theta_1}(\boldsymbol{L}\!\mid\!\mathcal{F}_i,\mathcal{D}) \cdot Q_{\theta_1}(\mathcal{F}_i\!\mid\!\mathcal{D})\right),
\end{align}
where $Q_{\theta_1}(\mathcal{F}_i\!\mid\!\mathcal{D})$ denotes the routing probability. All objectives are optimized using the standard Binary Cross-Entropy (BCE) loss function.

\paragraph{Inference Procedure}
The DAG identification procedure is detailed in \cref{alg:1}.
\section{Experiments}

\paragraph{Baselines}
Our experiments encompass the following causal discovery baselines: (i) FCM-based methods: Direct-LiNGAM \citep{Shimizu2011}, ICA-LiNGAM \citep{Shimizu2006}, CAM \citep{Buhlmann2014}, RESIT \citep{Peters2014}, SCORE \citep{Rolland2022}, DAS \citep{Montagna2023b}, NoGAM \citep{Montagna2023}, HOST \citep{Duong2023}, CaPS \citep{Xu2024}, and SkewScore \citep{Lin2025}; (ii) Amortized algorithms: AVICI \citep{Lorch2022}, CauScale \citep{Peng2026}, TabCausal \citep{Li2026}, and FoundCause \citep{Blobaum2026}.

\paragraph{Metrics}
For the DAG identification task, we report edge-level precision (\texttt{Prec.}), recall (\texttt{Rec.}), F1 score (\texttt{F1}), Jaccard index (\texttt{Jac.}), and normalized Structural Hamming Distance (\texttt{nSHD}). To emphasize the capability of discovering DAG structures rather than merely predicting adjacency matrices, we add an extra penalty of 1 to the SHD if the predicted graph contains reversed edges, and an additional 1 if it contains cycles. For the causal order identification task, we report the Order Divergence Rate (\texttt{ODR}) \citep{Rolland2022}.

\paragraph{Synthetic Benchmark Experiments}
First, we construct a synthetic benchmark, \textbf{Hetero}, by sampling 100 distinct SCMs from the prior space (see \cref{app:b_bench} for detailed construction) to evaluate the DAG identification performance of the aforementioned methods under heterogeneous causal mechanisms. The results are presented in \cref{tab:1}. On the DAG identification task, DAG-FM significantly outperforms several concurrently proposed causal discovery foundation models, achieving the best performance across all metrics.

\begin{table}[htbp]
\centering
\resizebox{\linewidth}{!}{%
\begin{tabular}{l | c c c c c}
\toprule
\multicolumn{6}{c}{\textbf{DAG Performance on Synthetic Heterogeneous Mechanism Benchmark ($n=1000, d=20$)}} \\
\midrule
Method & Prec. $\uparrow$ & Rec. $\uparrow$ & F1 $\uparrow$ & Jac. $\uparrow$ & nSHD $\downarrow$ \\
\midrule
AVICI \citep{Lorch2022} & $0.58_{0.19}$ & $0.30_{0.19}$ & $0.37_{0.20}$ & $0.25_{0.16}$ & $0.23_{0.21}$ \\
CauScale \citep{Peng2026} & $0.57_{0.18}$ & $0.34_{0.19}$ & $0.41_{0.19}$ & $0.27_{0.16}$ & $0.23_{0.21}$ \\
TabCausal \citep{Li2026} & $\underline{0.63}_{0.18}$ & $\underline{0.50}_{0.19}$ & $0.54_{0.17}$ & $0.39_{0.17}$ & $\underline{0.21}_{0.21}$ \\
FoundCause \citep{Blobaum2026} & $0.57_{0.22}$ & $\textbf{0.66}_{0.18}$ & $\underline{0.57}_{0.16}$ & $\underline{0.42}_{0.16}$ & $0.23_{0.17}$ \\
\textbf{DAG-FM (ours)} & $\textbf{0.71}_{0.14}$ & $\textbf{0.66}_{0.19}$ & $\textbf{0.67}_{0.15}$ & $\textbf{0.52}_{0.16}$ & $\textbf{0.17}_{0.17}$ \\
\bottomrule
\end{tabular}%
}
\caption{DAG identification performance on the synthetic heterogeneous mechanism benchmark (higher is better for \texttt{Prec.}, \texttt{Rec.}, \texttt{F1}, and \texttt{Jac.}; lower is better for \texttt{nSHD}), compared with other amortized algorithms. Subscripts denote 95\% confidence intervals.}
\label{tab:1}
\end{table}

Next, we construct additional synthetic benchmarks in the same manner from subspaces of specific causal mechanism families (see \cref{app:b_bench} for details) to demonstrate the accuracy of DAG-FM in identifying causal orders under homogeneous causal mechanisms. As shown in \cref{tab:2}, on the causal order identification task, DAG-FM is second only to Direct-LiNGAM on LiNGAM, while significantly surpassing other FCM-based algorithms across all other settings.

\begin{table}[htbp]
\centering
\resizebox{\linewidth}{!}{%
\begin{tabular}{l | c | c | c | c | c | c}
\toprule
\multicolumn{7}{c}{\textbf{{Order Performance on All Synthetic Benchmarks ($n=1000, d=20$)}}} \\
\midrule
Method & \multicolumn{1}{c|}{\textbf{LiNGAM}} & \multicolumn{1}{c|}{\textbf{ANM}} & \multicolumn{1}{c|}{\textbf{HNM}} & \multicolumn{1}{c|}{\textbf{PNL}} & \multicolumn{1}{c|}{\textbf{General}} & \multicolumn{1}{c}{\textbf{Hetero}} \\
\midrule
ICA-LiNGAM \citep{Shimizu2011} & $0.18_{0.15}$ & $0.50_{0.13}$ & $0.49_{0.16}$ & $0.47_{0.12}$ & $0.49_{0.12}$ & $0.48_{0.12}$ \\
Direct-LiNGAM \citep{Shimizu2006} & $\textbf{0.02}_{0.03}$ & $0.41_{0.13}$ & $0.41_{0.15}$ & $0.40_{0.14}$ & $0.52_{0.13}$ & $0.38_{0.15}$ \\
CAM \citep{Buhlmann2014} & $0.63_{0.18}$ & $0.30_{0.12}$ & $0.47_{0.18}$ & $0.42_{0.12}$ & $\underline{0.27}_{0.13}$ & $0.40_{0.13}$ \\
RESIT \citep{Peters2014} & $0.26_{0.15}$ & $\underline{0.21}_{0.13}$ & $0.44_{0.15}$ & $\underline{0.35}_{0.13}$ & $0.41_{0.13}$ & $0.37_{0.12}$ \\
SCORE \citep{Rolland2022} & $0.36_{0.14}$ & $0.29_{0.14}$ & $0.33_{0.16}$ & $0.38_{0.14}$ & $0.35_{0.16}$ & $0.35_{0.15}$ \\
DAS \citep{Montagna2023b} & $0.36_{0.14}$ & $0.29_{0.14}$ & $0.33_{0.16}$ & $0.38_{0.14}$ & $0.35_{0.16}$ & $0.35_{0.15}$ \\
NoGAM \citep{Montagna2023} & $0.33_{0.13}$ & $0.28_{0.14}$ & $\underline{0.32}_{0.15}$ & $0.37_{0.14}$ & $0.33_{0.15}$ & $\underline{0.34}_{0.14}$ \\
HOST \citep{Duong2023} & $0.66_{0.15}$ & $0.58_{0.14}$ & $0.55_{0.15}$ & $0.51_{0.13}$ & $0.60_{0.13}$ & $0.57_{0.14}$ \\
CaPS \citep{Xu2024} & $0.35_{0.15}$ & $0.30_{0.14}$ & $\underline{0.32}_{0.16}$ & $0.37_{0.14}$ & $0.35_{0.16}$ & $0.35_{0.15}$ \\
SkewScore \citep{Lin2025} & $0.54_{0.14}$ & $0.55_{0.16}$ & $0.53_{0.17}$ & $0.54_{0.13}$ & $0.52_{0.14}$ & $0.50_{0.15}$ \\
\textbf{DAG-FM (ours)} & $\underline{0.09}_{0.08}$ & $\textbf{0.16}_{0.10}$ & $\textbf{0.17}_{0.11}$ & $\textbf{0.14}_{0.10}$ & $\textbf{0.21}_{0.11}$ & $\textbf{0.16}_{0.10}$ \\
\bottomrule
\end{tabular}%
}
\caption{Causal order identification performance across all synthetic benchmarks (lower is better for \texttt{ODR}), compared with other FCM-based methods. Here, the benchmarks \textbf{LiNGAM}, \textbf{ANM}, \textbf{HNM}, and \textbf{PNL} represent homogeneous causal mechanisms, \textbf{General} denotes the general causal mechanisms in \cref{sec:4_prior}, and \textbf{Hetero} corresponds to the benchmark in \cref{tab:1}. Subscripts denote 95\% confidence intervals.}
\label{tab:2}
\end{table}

\paragraph{Real-World Benchmark Experiments}
Finally, we apply DAG-FM to real-world benchmarks to evaluate its generalization capability. We select two benchmarks: (i) Sachs \citep{Sachs2005}: A protein signaling dataset containing 7,466 samples and 11 variables; (ii) Causal Chamber \citep{Gamella2025}: A diverse set of physical causal systems containing 10,000 samples across 38 variables, from which we use only the 20 non-constant variables.

\begin{table*}[htbp]
\centering
\resizebox{\linewidth}{!}{%
\begin{tabular}{l | c c c c c | c c c c c}
\toprule
\multicolumn{11}{c}{\textbf{{DAG Performance on Real-world Benchmarks}}} \\
\midrule
\multirow{2}{*}{Method} & \multicolumn{5}{c|}{\textbf{{Sachs}} ($n=7466, d=11$)} & \multicolumn{5}{c}{\textbf{{Causal Chamber}} ($n=10\text{{k}}, d=20$)} \\
 & Prec. $\uparrow$ & Rec. $\uparrow$ & F1 $\uparrow$ & Jac. $\uparrow$ & nSHD $\downarrow$ & Prec. $\uparrow$ & Rec. $\uparrow$ & F1 $\uparrow$ & Jac. $\uparrow$ & nSHD $\downarrow$ \\
\midrule
AVICI \citep{Lorch2022} & \multicolumn{5}{c|}{Out of Memory} & \multicolumn{5}{c}{Out of Memory} \\
CauScale \citep{Peng2026} & \multicolumn{5}{c|}{Out of Memory} & \multicolumn{5}{c}{Out of Memory} \\
TabCausal \citep{Li2026} & 0.38 & \underline{0.30} & \underline{0.33} & \underline{0.20} & 0.44 & \underline{0.71} & \underline{0.51} & \underline{0.60} & \underline{0.43} & \underline{0.14} \\
FoundCause \citep{Blobaum2026} & \underline{0.50} & 0.25 & \underline{0.33} & \underline{0.20} & \underline{0.36} & 0.53 & \textbf{0.59} & 0.56 & 0.39 & 0.19 \\
\textbf{DAG-FM (ours)} & \textbf{0.54} & \textbf{0.35} & \textbf{0.42} & \textbf{0.27} & \textbf{0.35} & \textbf{0.77} & \underline{0.51} & \textbf{0.62} & \textbf{0.44} & \textbf{0.13} \\
\bottomrule
\end{tabular}%
}
\caption{DAG identification performance on two real-world benchmarks (higher is better for \texttt{Prec.}, \texttt{Rec.}, \texttt{F1}, and \texttt{Jac.}; lower is better for \texttt{nSHD}), compared with other amortized algorithms.}
\label{tab:3}
\end{table*}

As shown in \cref{tab:3}, DAG-FM is capable of scaling to these datasets (whereas several baseline algorithms fail to directly handle high-dimensional, large-sample regimes) and achieves state-of-the-art performance across multiple metrics, outperforming both traditional methods and causal discovery foundation models. This demonstrates that DAG-FM generalizes well to real-world problems and provides solid support for its practical deployment.

\paragraph{Additional Results}
In \cref{app:c}, we provide additional experimental results under larger sample sizes, higher dimensions, various perturbations, and out-of-distribution (OOD) settings to further demonstrate the robustness and generalization of DAG-FM. Furthermore, \cref{app:d} presents ablation studies on the MoLE framework to highlight its effectiveness, along with more comprehensive experimental evaluations covering a broader range of baseline models.
\section{Conclusion}

In this work, we presented \textbf{DAG-FM}, a highly adaptable and scalable foundation model for causal discovery from observational tabular data. By moving away from direct, monolithic adjacency matrix prediction, DAG-FM reframes causal DAG identification as a sequential prediction process. It employs two amortized Transformer sub-modules to iteratively identify leaf nodes and their respective parents, ensuring the structural validity of the generated causal graph. A key contribution of our architecture is the integration of the MoLE framework, which effectively bridges the gap between theoretical FCM assumptions and practical data by dynamically routing predictions across mixed, diverse, heterogeneous causal mechanisms.

Extensive empirical evaluations confirm the superiority of DAG-FM. On synthetic benchmarks featuring heterogeneous causal mechanisms, DAG-FM outperforms existing amortized baselines, including powerful concurrent causal discovery foundation models, in DAG recovery. Moreover, DAG-FM significantly surpasses traditional FCM-based methods in causal order identification across both heterogeneous and homogeneous causal mechanisms. Furthermore, its state-of-the-art performance on two real-world benchmarks underscores its robust generalizability and capacity to handle high-dimensional, large-sample settings where traditional algorithms often struggle.

\paragraph{Future Work} While DAG-FM demonstrates formidable capabilities in causal discovery under the assumption of causal sufficiency, extending this framework to handle unobserved latent confounders and missing data presents an exciting avenue for future research. Additionally, scaling the foundation model with broader and more diverse pre-training corpora could further enhance its zero-shot inference capabilities across interdisciplinary scientific domains. Finally, as DAG-FM is currently tailored for observational causal discovery, extending its theoretical identifiability guarantees and framework to interventional settings represents another promising direction.
\bibliography{conference_bibtex}

@book{Pearl2009,
  title        = {Causality: Models, Reasoning and Inference},
  author       = {Pearl, Judea},
  year         = 2009,
  publisher    = {Cambridge University Press},
  address      = {USA},
  pages        = 207,
  isbn         = {052189560X},
  edition      = {2nd}
}

@article{Spirtes1991,
  title        = {An algorithm for fast recovery of sparse causal graphs},
  author       = {Spirtes, Peter and Glymour, Clark},
  year         = 1991,
  journal      = {Social science computer review},
  publisher    = {Sage Publications Sage CA: Thousand Oaks, CA},
  volume       = 9,
  number       = 1,
  pages        = {62--72}
}

@inproceedings{Spirtes1995,
  title        = {Directed cyclic graphical representations of feedback models},
  author       = {Spirtes, Peter},
  year         = 1995,
  booktitle    = {Proceedings of the Eleventh Conference on Uncertainty in Artificial Intelligence},
  location     = {Montr\'{e}al, Qu\'{e}, Canada},
  publisher    = {Morgan Kaufmann Publishers Inc.},
  address      = {San Francisco, CA, USA},
  series       = {UAI'95},
  pages        = {491–498},
  isbn         = 1558603859,
  numpages     = 8
}

@article{Cooper1992,
  title        = {A Bayesian method for the induction of probabilistic networks from data},
  author       = {Cooper, Gregory F and Herskovits, Edward},
  year         = 1992,
  month        = oct,
  journal      = {Mach. Learn.},
  publisher    = {Springer Science and Business Media LLC},
  volume       = 9,
  number       = 4,
  pages        = {309--347},
  language     = {en}
}

@article{Chickering2002,
  title        = {Optimal structure identification with greedy search},
  author       = {Chickering, David Maxwell},
  year         = 2002,
  journal      = {Journal of machine learning research},
  volume       = 3,
  number       = {Nov},
  pages        = {507--554}
}

@article{Zheng2018,
  title        = {Dags with no tears: Continuous optimization for structure learning},
  author       = {Zheng, Xun and Aragam, Bryon and Ravikumar, Pradeep K and Xing, Eric P},
  year         = 2018,
  journal      = {Advances in neural information processing systems},
  volume       = 31
}

@article{Bello2022,
  title        = {Dagma: Learning dags via m-matrices and a log-determinant acyclicity characterization},
  author       = {Bello, Kevin and Aragam, Bryon and Ravikumar, Pradeep},
  year         = 2022,
  journal      = {Advances in Neural Information Processing Systems},
  volume       = 35,
  pages        = {8226--8239}
}

@inproceedings{Hoyer2008,
  title        = {Nonlinear causal discovery with additive noise models},
  author       = {Hoyer, Patrik and Janzing, Dominik and Mooij, Joris M and Peters, Jonas and Sch\"{o}lkopf, Bernhard},
  year         = 2008,
  booktitle    = {Advances in Neural Information Processing Systems},
  publisher    = {Curran Associates, Inc.},
  volume       = 21,
  pages        = {},
  editor       = {D. Koller and D. Schuurmans and Y. Bengio and L. Bottou}
}

@inproceedings{Tagasovska2020,
  title        = {Distinguishing Cause from Effect Using Quantiles: Bivariate Quantile Causal Discovery},
  author       = {Tagasovska, Natasa and Chavez-Demoulin, Val{\'e}rie and Vatter, Thibault},
  year         = 2020,
  month        = {13--18 Jul},
  booktitle    = {Proceedings of the 37th International Conference on Machine Learning},
  publisher    = {PMLR},
  series       = {Proceedings of Machine Learning Research},
  volume       = 119,
  pages        = {9311--9323},
  editor       = {III, Hal Daumé and Singh, Aarti},
}

@inproceedings{Zhang2009,
  title        = {On the identifiability of the post-nonlinear causal model},
  author       = {Zhang, Kun and Hyv\"{a}rinen, Aapo},
  year         = 2009,
  booktitle    = {Proceedings of the Twenty-Fifth Conference on Uncertainty in Artificial Intelligence},
  location     = {Montreal, Quebec, Canada},
  publisher    = {AUAI Press},
  address      = {Arlington, Virginia, USA},
  series       = {UAI '09},
  pages        = {647–655},
  isbn         = 9780974903958,
  numpages     = 9
}

@inproceedings{Xi2025,
  title        = {Distinguishing Cause from Effect with Causal Velocity Models},
  author       = {Johnny Xi and Hugh Dance and Peter Orbanz and Benjamin Bloem-Reddy},
  year         = 2025,
  booktitle    = {Forty-second International Conference on Machine Learning}
}

@inproceedings{Immer2023,
  title        = {On the Identifiability and Estimation of Causal Location-Scale Noise Models},
  author       = {Immer, Alexander and Schultheiss, Christoph and Vogt, Julia E and Sch\"{o}lkopf, Bernhard and B\"{u}hlmann, Peter and Marx, Alexander},
  year         = 2023,
  month        = {23--29 Jul},
  booktitle    = {Proceedings of the 40th International Conference on Machine Learning},
  publisher    = {PMLR},
  series       = {Proceedings of Machine Learning Research},
  volume       = 202,
  pages        = {14316--14332},
  editor       = {Krause, Andreas and Brunskill, Emma and Cho, Kyunghyun and Engelhardt, Barbara and Sabato, Sivan and Scarlett, Jonathan}
}

@article{Strobl2023,
  title        = {Identifying patient-specific root causes with the heteroscedastic noise model},
  author       = {Eric V. Strobl and Thomas A. Lasko},
  year         = 2023,
  journal      = {Journal of Computational Science},
  volume       = 72,
  pages        = 102099,
  issn         = {1877-7503}
}

@inproceedings{Duong2022,
  title        = {Bivariate Causal Discovery via Conditional Divergence},
  author       = {Duong, Bao and Nguyen, Thin},
  year         = 2022,
  month        = {11--13 Apr},
  booktitle    = {Proceedings of the First Conference on Causal Learning and Reasoning},
  publisher    = {PMLR},
  series       = {Proceedings of Machine Learning Research},
  volume       = 177,
  pages        = {236--252},
  editor       = {Schölkopf, Bernhard and Uhler, Caroline and Zhang, Kun},
}

@article{Shimizu2006,
  title        = {A Linear Non-Gaussian Acyclic Model for Causal Discovery},
  author       = {Shohei Shimizu and Patrik O. Hoyer and Aapo Hyv{\"a}rinen and Antti Kerminen},
  year         = 2006,
  journal      = {Journal of Machine Learning Research},
  volume       = 7,
  number       = 72,
  pages        = {2003--2030}
}

@inproceedings{Ormaniec2025,
  title        = {Standardizing Structural Causal Models},
  author       = {Weronika Ormaniec and Scott Sussex and Lars Lorch and Bernhard Sch{\"o}lkopf and Andreas Krause},
  year         = 2025,
  booktitle    = {The Thirteenth International Conference on Learning Representations}
}

@inproceedings{Reisach2021,
  title        = {Beware of the Simulated DAG! Causal Discovery Benchmarks May Be Easy to Game},
  author       = {Reisach, Alexander and Seiler, Christof and Weichwald, Sebastian},
  year         = 2021,
  booktitle    = {Advances in Neural Information Processing Systems},
  publisher    = {Curran Associates, Inc.},
  volume       = 34,
  pages        = {27772--27784},
  editor       = {M. Ranzato and A. Beygelzimer and Y. Dauphin and P.S. Liang and J. Wortman Vaughan}
}

@inproceedings{Reisach2023,
  title        = {A Scale-Invariant Sorting Criterion to Find a Causal Order in Additive Noise Models},
  author       = {Reisach, Alexander and Tami, Myriam and Seiler, Christof and Chambaz, Antoine and Weichwald, Sebastian},
  year         = 2023,
  booktitle    = {Advances in Neural Information Processing Systems},
  publisher    = {Curran Associates, Inc.},
  volume       = 36,
  pages        = {785--807},
  editor       = {A. Oh and T. Naumann and A. Globerson and K. Saenko and M. Hardt and S. Levine}
}

@article{Shimizu2011,
  title        = {DirectLiNGAM: A Direct Method for Learning a Linear Non-Gaussian Structural Equation Model},
  author       = {Shohei Shimizu and Takanori Inazumi and Yasuhiro Sogawa and Aapo Hyv{{\"a}}rinen and Yoshinobu Kawahara and Takashi Washio and Patrik O. Hoyer and Kenneth Bollen},
  year         = 2011,
  journal      = {Journal of Machine Learning Research},
  volume       = 12,
  number       = 33,
  pages        = {1225--1248}
}

@incollection{Duong2023,
  title        = {Heteroscedastic Causal Structure Learning},
  author       = {Duong, Bao and Nguyen, Thin},
  year         = 2023,
  month        = sep,
  booktitle    = {Frontiers in Artificial Intelligence and Applications},
  publisher    = {IOS Press},
  series       = {Frontiers in artificial intelligence and applications}
}

@inproceedings{Lin2025,
  title        = {A Skewness-Based Criterion for Addressing Heteroscedastic Noise in Causal Discovery},
  author       = {Yingyu Lin and Yuxing Huang and Wenqin Liu and Haoran Deng and Ignavier Ng and Kun Zhang and Mingming Gong and Yian Ma and Biwei Huang},
  year         = 2025,
  booktitle    = {The Thirteenth International Conference on Learning Representations}
}

@inproceedings{Rolland2022,
  title        = {Score Matching Enables Causal Discovery of Nonlinear Additive Noise Models},
  author       = {Rolland, Paul and Cevher, Volkan and Kleindessner, Matth{\"a}us and Russell, Chris and Janzing, Dominik and Sch{\"o}lkopf, Bernhard and Locatello, Francesco},
  year         = 2022,
  month        = {17--23 Jul},
  booktitle    = {Proceedings of the 39th International Conference on Machine Learning},
  publisher    = {PMLR},
  series       = {Proceedings of Machine Learning Research},
  volume       = 162,
  pages        = {18741--18753},
  editor       = {Chaudhuri, Kamalika and Jegelka, Stefanie and Song, Le and Szepesvari, Csaba and Niu, Gang and Sabato, Sivan},
}

@inproceedings{Montagna2023,
  title        = {Causal Discovery with Score Matching on Additive Models with Arbitrary Noise},
  author       = {Montagna, Francesco and Noceti, Nicoletta and Rosasco, Lorenzo and Zhang, Kun and Locatello, Francesco},
  year         = 2023,
  month        = {11--14 Apr},
  booktitle    = {Proceedings of the Second Conference on Causal Learning and Reasoning},
  publisher    = {PMLR},
  series       = {Proceedings of Machine Learning Research},
  volume       = 213,
  pages        = {726--751},
  editor       = {van der Schaar, Mihaela and Zhang, Cheng and Janzing, Dominik},
}

@inproceedings{Xu2024,
  title        = {Ordering-Based Causal Discovery for Linear and Nonlinear Relations},
  author       = {Xu, Zhuopeng and Li, Yujie and Liu, Cheng and Gui, Ning},
  year         = 2024,
  booktitle    = {Advances in Neural Information Processing Systems},
  publisher    = {Curran Associates, Inc.},
  volume       = 37,
  pages        = {4315--4340},
  editor       = {A. Globerson and L. Mackey and D. Belgrave and A. Fan and U. Paquet and J. Tomczak and C. Zhang}
}

@article{Sachs2005,
  title        = {Causal Protein-Signaling Networks Derived from Multiparameter Single-Cell Data},
  author       = {Karen Sachs  and Omar Perez  and Dana Pe'er  and Douglas A. Lauffenburger  and Garry P. Nolan},
  year         = 2005,
  journal      = {Science},
  volume       = 308,
  number       = 5721,
  pages        = {523--529},
  doi          = {10.1126/science.1105809},
}

@article{Peters2014,
  title        = {Causal Discovery with Continuous Additive Noise Models},
  author       = {Jonas Peters and Joris M. Mooij and Dominik Janzing and Bernhard Sch{{\"o}}lkopf},
  year         = 2014,
  journal      = {Journal of Machine Learning Research},
  volume       = 15,
  number       = 58,
  pages        = {2009--2053}
}

@article{Buhlmann2014,
  title        = {CAM: CAUSAL ADDITIVE MODELS, HIGH-DIMENSIONAL ORDER SEARCH AND PENALIZED REGRESSION},
  author       = {Peter Bühlmann and Jonas Peters and Jan Ernest},
  year         = 2014,
  journal      = {The Annals of Statistics},
  publisher    = {Institute of Mathematical Statistics},
  volume       = 42,
  number       = 6,
  pages        = {2526--2556},
  issn         = {00905364}
}

@inproceedings{Yu2019,
  title        = {{DAG}-{GNN}: {DAG} Structure Learning with Graph Neural Networks},
  author       = {Yu, Yue and Chen, Jie and Gao, Tian and Yu, Mo},
  year         = 2019,
  month        = {09--15 Jun},
  booktitle    = {Proceedings of the 36th International Conference on Machine Learning},
  publisher    = {PMLR},
  series       = {Proceedings of Machine Learning Research},
  volume       = 97,
  pages        = {7154--7163},
  editor       = {Chaudhuri, Kamalika and Salakhutdinov, Ruslan}
}

@article{Zhang2013,
title = {Integrated Systems Approach Identifies Genetic Nodes and Networks in Late-Onset Alzheimer’s Disease},
journal = {Cell},
volume = {153},
number = {3},
pages = {707-720},
year = {2013},
issn = {0092-8674},
doi = {https://doi.org/10.1016/j.cell.2013.03.030},
author = {Bin Zhang and Chris Gaiteri and Liviu-Gabriel Bodea and Zhi Wang and Joshua McElwee and Alexei A. Podtelezhnikov and Chunsheng Zhang and Tao Xie and Linh Tran and Radu Dobrin and Eugene Fluder and Bruce Clurman and Stacey Melquist and Manikandan Narayanan and Christine Suver and Hardik Shah and Milind Mahajan and Tammy Gillis and Jayalakshmi Mysore and Marcy E. MacDonald and John R. Lamb and David A. Bennett and Cliona Molony and David J. Stone and Vilmundur Gudnason and Amanda J. Myers and Eric E. Schadt and Harald Neumann and Jun Zhu and Valur Emilsson},
}

@article{Vandenbroucke2016,
    author = {Vandenbroucke, Jan P and Broadbent, Alex and Pearce, Neil},
    title = {Causality and causal inference in epidemiology: the need for a pluralistic approach},
    journal = {International Journal of Epidemiology},
    volume = {45},
    number = {6},
    pages = {1776-1786},
    year = {2016},
    month = {12},
    issn = {0300-5771},
    doi = {10.1093/ije/dyv341},
    eprint = {https://academic.oup.com/ije/article-pdf/45/6/1776/24170420/dyv341.pdf},
}

@ARTICLE{Huber2024,
  title     = "An introduction to causal discovery",
  author    = "Huber, Martin",
  journal   = "Schweiz. Z. Volkswirtsch. Stat.",
  publisher = "Springer Science and Business Media LLC",
  volume    =  160,
  number    =  1,
  month     =  oct,
  year      =  2024,
  copyright = "https://creativecommons.org/licenses/by/4.0",
  language  = "en"
}

@Article{Vukovic2022,
AUTHOR = {Vukovi\'{c}, Matej and Thalmann, Stefan},
TITLE = {Causal Discovery in Manufacturing: A Structured Literature Review},
JOURNAL = {Journal of Manufacturing and Materials Processing},
VOLUME = {6},
YEAR = {2022},
NUMBER = {1},
ARTICLE-NUMBER = {10},
ISSN = {2504-4494},
DOI = {10.3390/jmmp6010010}
}

@inproceedings{Lorch2022,
 author = {Lorch, Lars and Sussex, Scott and Rothfuss, Jonas and Krause, Andreas and Sch\"{o}lkopf, Bernhard},
 booktitle = {Advances in Neural Information Processing Systems},
 editor = {S. Koyejo and S. Mohamed and A. Agarwal and D. Belgrave and K. Cho and A. Oh},
 pages = {13104--13118},
 publisher = {Curran Associates, Inc.},
 title = {Amortized Inference for Causal Structure Learning},
 volume = {35},
 year = {2022}
}

@inproceedings{Dhir2025,
title={A Meta-Learning Approach to Bayesian Causal Discovery},
author={Anish Dhir and Matthew Ashman and James Requeima and Mark van der Wilk},
booktitle={The Thirteenth International Conference on Learning Representations},
year={2025},
}

@misc{Thompson2026,
      title={Arrow: A Foundation Model for Causal Discovery}, 
      author={Ryan Thompson and He Zhao and Daniel M. Steinberg and Edwin V. Bonilla},
      year={2026},
      eprint={2605.07204},
      archivePrefix={arXiv},
      primaryClass={cs.LG},
}

@inproceedings{
Peng2026,
title={CauScale: Neural Causal Discovery at Scale},
author={Bo Peng and Sirui Chen and Jiaguo Tian and Yu Qiao and Chaochao Lu},
booktitle={Forty-third International Conference on Machine Learning},
year={2026},
}

@misc{Li2026,
      title={TabCausal: Pretraining Across Causal Environments for Tabular Causal Discovery}, 
      author={Zi-Rong Li and Si-Yang Liu and Tian-Zuo Wang and Han-Jia Ye},
      year={2026},
      eprint={2605.31156},
      archivePrefix={arXiv},
      primaryClass={cs.LG},
}

@misc{Montagna2025,
title={Demystifying amortized causal discovery with transformers},
author={Francesco Montagna and Max Cairney-Leeming and Dhanya Sridhar and Francesco Locatello},
year={2025},
}

@inproceedings{
Chen2026,
title={Causal Discovery via Quantile Partial Effect},
author={Yikang Chen and Xingzhe Sun and Dehui du},
booktitle={The Fourteenth International Conference on Learning Representations},
year={2026},
}

@InProceedings{Montagna2023b,
  title = 	 {Scalable Causal Discovery with Score Matching},
  author =       {Montagna, Francesco and Noceti, Nicoletta and Rosasco, Lorenzo and Zhang, Kun and Locatello, Francesco},
  booktitle = 	 {Proceedings of the Second Conference on Causal Learning and Reasoning},
  pages = 	 {752--771},
  year = 	 {2023},
  editor = 	 {van der Schaar, Mihaela and Zhang, Cheng and Janzing, Dominik},
  volume = 	 {213},
  series = 	 {Proceedings of Machine Learning Research},
  month = 	 {11--14 Apr},
  publisher =    {PMLR},
}

@inproceedings{
Ke2023,
title={Learning to Induce Causal Structure },
author={Nan Rosemary Ke and Silvia Chiappa and Jane X Wang and Jorg Bornschein and Anirudh Goyal and Melanie Rey and Theophane Weber and Matthew Botvinick and Michael Curtis Mozer and Danilo Jimenez Rezende},
booktitle={International Conference on Learning Representations},
year={2023},
}

@inproceedings{
Zhang2025,
title={Learning Identifiable Structures Avoids Bias in {DNN}-based Supervised Causal Learning},
author={Jiaru Zhang and Rui Ding and Qiang Fu and Huang Bojun and zizhen Deng and Yang Hua and Haibing Guan and Shi Han and Dongmei Zhang},
booktitle={The 28th International Conference on Artificial Intelligence and Statistics},
year={2025},
}

@article{
Kim2025,
title={Large-Scale Targeted Cause Discovery via Learning from Simulated Data},
author={Jang-Hyun Kim and Claudia Skok Gibbs and Sangdoo Yun and Hyun Oh Song and Kyunghyun Cho},
journal={Transactions on Machine Learning Research},
issn={2835-8856},
year={2025},
note={}
}

@misc{Xu2026,
      title={Learning Causal Orderings for In-Context Tabular Prediction}, 
      author={Sascha Xu and Sarah Mameche and Jilles Vreeken},
      year={2026},
      eprint={2605.22335},
      archivePrefix={arXiv},
      primaryClass={cs.LG},
}

@ARTICLE{Hollmann2025,
  title     = "Accurate predictions on small data with a tabular foundation
               model",
  author    = "Hollmann, Noah and M{\"u}ller, Samuel and Purucker, Lennart and
               Krishnakumar, Arjun and K{\"o}rfer, Max and Hoo, Shi Bin and
               Schirrmeister, Robin Tibor and Hutter, Frank",
  journal   = "Nature",
  publisher = "Springer Science and Business Media LLC",
  volume    =  637,
  number    =  8045,
  pages     = "319--326",
  month     =  jan,
  year      =  2025,
  copyright = "https://creativecommons.org/licenses/by/4.0",
  language  = "en"
}

@article{Qu2026,
  title={{TabICLv2}: {A} better, faster, scalable, and open tabular foundation model},
  author={Qu, Jingang and Holzm{\"u}ller, David and Varoquaux, Ga{\"e}l and Le Morvan, Marine},
  booktitle={International Conference on Machine Learning},
  year={2026}
}

@InProceedings{Lee2019,
  title = 	 {Set Transformer: A Framework for Attention-based Permutation-Invariant Neural Networks},
  author =       {Lee, Juho and Lee, Yoonho and Kim, Jungtaek and Kosiorek, Adam and Choi, Seungjin and Teh, Yee Whye},
  booktitle = 	 {Proceedings of the 36th International Conference on Machine Learning},
  pages = 	 {3744--3753},
  year = 	 {2019},
  editor = 	 {Chaudhuri, Kamalika and Salakhutdinov, Ruslan},
  volume = 	 {97},
  series = 	 {Proceedings of Machine Learning Research},
  month = 	 {09--15 Jun},
  publisher =    {PMLR},
}

@inproceedings{Vaswani2017,
 author = {Vaswani, Ashish and Shazeer, Noam and Parmar, Niki and Uszkoreit, Jakob and Jones, Llion and Gomez, Aidan N and Kaiser, \L ukasz and Polosukhin, Illia},
 booktitle = {Advances in Neural Information Processing Systems},
 editor = {I. Guyon and U. Von Luxburg and S. Bengio and H. Wallach and R. Fergus and S. Vishwanathan and R. Garnett},
 pages = {},
 publisher = {Curran Associates, Inc.},
 title = {Attention is All you Need},
 volume = {30},
 year = {2017}
}

@InProceedings{Maeda2020,
  title = 	 {RCD: Repetitive causal discovery of linear non-Gaussian acyclic models with latent confounders},
  author =       {Maeda, Takashi Nicholas and Shimizu, Shohei},
  booktitle = 	 {Proceedings of the Twenty Third International Conference on Artificial Intelligence and Statistics},
  pages = 	 {735--745},
  year = 	 {2020},
  editor = 	 {Chiappa, Silvia and Calandra, Roberto},
  volume = 	 {108},
  series = 	 {Proceedings of Machine Learning Research},
  month = 	 {26--28 Aug},
  publisher =    {PMLR},
}

@inproceedings{
Lachapelle2020,
title={Gradient-Based Neural DAG Learning},
author={Sébastien Lachapelle and Philippe Brouillard and Tristan Deleu and Simon Lacoste-Julien},
booktitle={International Conference on Learning Representations},
year={2020},
}

@inproceedings{Ng2020,
 author = {Ng, Ignavier and Ghassami, AmirEmad and Zhang, Kun},
 booktitle = {Advances in Neural Information Processing Systems},
 editor = {H. Larochelle and M. Ranzato and R. Hadsell and M.F. Balcan and H. Lin},
 pages = {17943--17954},
 publisher = {Curran Associates, Inc.},
 title = {On the Role of Sparsity and DAG Constraints for Learning Linear DAGs},
 volume = {33},
 year = {2020}
}

@ARTICLE{Gamella2025,
  title     = "Causal chambers as a real-world physical testbed for {AI}
               methodology",
  author    = "Gamella, Juan L and Peters, Jonas and B{\"u}hlmann, Peter",
  journal   = "Nat. Mach. Intell.",
  publisher = "Springer Science and Business Media LLC",
  volume    =  7,
  number    =  1,
  pages     = "107--118",
  month     =  jan,
  year      =  2025,
  copyright = "https://creativecommons.org/licenses/by/4.0",
  language  = "en"
}

@misc{Blobaum2026,
      title={FoundCause: Causal Discovery with Latent Confounders from Observational Data}, 
      author={Patrick Bl{{\"o}}baum and Krishnakumar Balasubramanian and Shiva Prasad Kasiviswanathan},
      year={2026},
      eprint={2606.17516},
      archivePrefix={arXiv},
      primaryClass={cs.LG},
}

@misc{Zhang2025b,
      title={LimiX: Unleashing Structured-Data Modeling Capability for Generalist Intelligence}, 
      author={Xingxuan Zhang and Gang Ren and Han Yu and Hao Yuan and Hui Wang and Jiansheng Li and Jiayun Wu and Lang Mo and Li Mao and Mingchao Hao and Ningbo Dai and Renzhe Xu and Shuyang Li and Tianyang Zhang and Yue He and Yuanrui Wang and Yunjia Zhang and Zijing Xu and Dongzhe Li and Fang Gao and Hao Zou and Jiandong Liu and Jiashuo Liu and Jiawei Xu and Kaijie Cheng and Kehan Li and Linjun Zhou and Qing Li and Shaohua Fan and Xiaoyu Lin and Xinyan Han and Xuanyue Li and Yan Lu and Yuan Xue and Yuanyuan Jiang and Zimu Wang and Zhenlei Wang and Peng Cui},
      year={2025},
      eprint={2509.03505},
      archivePrefix={arXiv},
      primaryClass={cs.LG},
}

@inproceedings{Devlin2019,
  title={Bert: Pre-training of deep bidirectional transformers for language understanding},
  author={Devlin, Jacob and Chang, Ming-Wei and Lee, Kenton and Toutanova, Kristina},
  booktitle={Proceedings of the 2019 conference of the North American chapter of the association for computational linguistics: human language technologies, volume 1 (long and short papers)},
  pages={4171--4186},
  year={2019}
}

@inproceedings{Yin2024,
  title={Effective Causal Discovery under Identifiable Heteroscedastic Noise Model},
  author={Yin, Naiyu and Gao, Tian and Yu, Yue and Ji, Qiang},
  booktitle={Proceedings of the AAAI Conference on Artificial Intelligence},
  volume={38},
  number={15},
  pages={16486--16494},
  year={2024}
}
\bibliographystyle{conference}

\appendix
\newpage
\section{Proofs}
\label{app:a}

We provide the proofs for the theories in the main text under the identifiability framework based on Quantile Partial Effects (QPE) \citep{Chen2026}. This framework decouples the functional descriptions in \cref{assum:1} into a form more conducive to analysis from the observational distribution, thereby circumventing the need to specify the noise terms and exact causal mechanisms. Let $F(y\!\mid\!\boldsymbol{x})$ denote the conditional cumulative distribution function (CDF) corresponding to the conditional density function $p(y\!\mid\!\boldsymbol{x})$, and $Q(y\!\mid\!\boldsymbol{x})$ denote the conditional quantile function. Furthermore, for all probability density functions $p(\boldsymbol{x})$ appearing in our analysis, we assume they always exist, are strictly positive, and are at least of class $C^k$ when their $k$-th order derivatives are involved. We define the Quantile Partial Effect as follows:

\begin{definition}[Quantile Partial Effect]\label{def:qpe}
The QPE of a random variable $Y$ given $\boldsymbol{X}$ is formulated as $\boldsymbol{\psi}(y\!\mid\!\boldsymbol{x})=\nabla_{\boldsymbol{x}}Q(y\!\mid\!\boldsymbol{x})$, where the corresponding quantile is given by $\tau=Q^{-1}(y\!\mid\!\boldsymbol{x})=F(y\!\mid\!\boldsymbol{x})$.
\end{definition}

The QPE defined in \cref{def:qpe} is a quantity that depends solely on the observational distribution and is uniquely determined by it. In the subsequent analysis, the QPE $\boldsymbol{\psi}(y\!\mid\!\boldsymbol{x})$ is expressed as a finite linear span with respect to $y$:
\begin{equation}\label{eq:linear_span}
    \boldsymbol{\psi}(y\!\mid\!\boldsymbol{x})=\sum_{i=1}^{m} c_i(\boldsymbol{x})\,b_i(y),
\end{equation}
where the finite set of functions $\boldsymbol{b}=(b_1,\dots,b_m)$, referred to as \textit{basis functions}, depends only on $y$, while the \textit{coefficient functions} $c_i$ depend only on $\boldsymbol{x}$. According to the derivations in \citep{Chen2026}, the QPEs of the observational distributions corresponding to the four mechanism cases in \cref{assum:1} can all be decoupled into this form:

\begin{table}[htbp]
    \centering
    \begin{tabular}{cccc}
        \toprule
        FCM & Functional Form & QPE Form & QPE Basis Functions \\
        \midrule
        LiNGAM & $Y=\boldsymbol{w}^\intercal \boldsymbol{X}+U$ & $\boldsymbol{w}$ & $1$ \\
        ANM    & $Y=g(\boldsymbol{X})+U$ & $\nabla g$ & $1$ \\
        HNM    & $Y=g(\boldsymbol{X})+h(\boldsymbol{X})\,U$ & $\big(\nabla g-\frac{g}{h}\nabla h\big)+\frac{\nabla h}{h}\,y$ & $1,y$ \\
        PNL    & $Y=h(g(\boldsymbol{X})+U)$ & $\nabla g\,\overline{h}$ & $\overline{h}$ \\
        \bottomrule
    \end{tabular}
    \caption{The QPEs and their basis functions corresponding to the four mechanism families. Here, $h>0$ in HNM, and $\overline{h}=h'(h^{-1})$ in PNL.}
    \label{tab:qpe}
\end{table}

This implies that the observational distribution alone is fully capable of distinguishing the functional forms of the causal mechanisms:

\theoremMechanims*
\begin{proof}
First, we show that the spaces spanned by the QPEs for the four cases in \cref{def:1} are mutually non-overlapping, as summarized in \cref{tab:qpe}.
\vspace{-0.5em}
\begin{itemize}[leftmargin=*]
\itemsep 0em
\item $\mathcal{F}=\mathcal{F}_{\text{LiNGAM}}$: For LiNGAM, its QPE is always a constant vector.
\item $\mathcal{F}=\mathcal{F}_{\text{ANM}}$: For ANM with a non-linear $g$, $\nabla g$ will never degenerate into a constant vector. Therefore, the QPE space of ANM does not overlap with that of LiNGAM.
\item $\mathcal{F}=\mathcal{F}_{\text{HNM}}$: For HNM with a non-constant $h:\mathbb{R}^d\to\mathbb{R}_+$, firstly, its QPE is guaranteed to exist (since $h>0$ holds strictly, the QPE will never be ill-defined at any location due to $h=0$). Secondly, $\frac{\nabla h}{h}$ must be a non-zero vector because $h$ is non-constant. Thus, the basis function $y$ in the QPE of HNM will not degenerate into a constant $1$. This ensures that the QPE space of HNM does not overlap with those of ANM and LiNGAM.
\item $\mathcal{F}=\mathcal{F}_{\text{PNL}}$: For PNL with a monotonic non-linear function $h$, the space where its QPE resides contains only the basis function $\overline{h}=h'(h^{-1})$. Since $h$ is non-linear, $\overline{h}$ will certainly not degenerate to be independent of $y$. Hence, its QPE basis function will not degenerate into $1$, ensuring that the QPE space of PNL does not overlap with those of ANM and LiNGAM. Furthermore, because the basis function of PNL consists solely of $\overline{h}$, rather than the two basis functions present in HNM, the QPE space of PNL does not overlap with that of HNM either.
\end{itemize}
\vspace{-0.5em}
The above analysis demonstrates that, under the four cases in \cref{def:1}, there is a mapping from the QPE $\boldsymbol{\psi}(y\!\mid\!\boldsymbol{x})$ to the mechanism family $\mathcal{F}$. Meanwhile, the observational distribution $p(\boldsymbol{X},Y)$ uniquely determines a QPE $\boldsymbol{\psi}(y\!\mid\!\boldsymbol{x})$ via its density function (by the definition of QPE). Therefore, within the prior space described in \cref{assum:1}, the observational distribution uniquely identifies a mechanism family; that is, $P(\mathcal{F}\!\mid\!p(\boldsymbol{X},Y))=1$.
\end{proof}

Next, we analyze the identifiability assumptions concerning the causal direction, which necessitates the introduction of the Wronskian determinant. For a set of multivariate functions $f_1,\dots,f_k$ where each $f_i:\mathbb{R}^d\to\mathbb{R}$, the determinant
\begin{equation*}
    W_X(f_1,\dots,f_k)=\det\left[\begin{array}{cccc}
        f_1 & f_2 & \dots & f_k \\
        \partial_x f_1 & \partial_x f_2 & \dots & \partial_x f_k \\
        \vdots & \vdots & \ddots & \vdots \\
        \partial^{k-1}_x f_1 & \partial^{k-1}_x f_2 & \dots & \partial^{k-1}_x f_k
    \end{array}\right]
\end{equation*}
is called the Wronskian determinant with respect to the covariate $X$.

\begin{assumption}\label{assum:qpe_wronk}
For a given set of basis functions $\boldsymbol{b}=(b_1,\dots,b_m)$, let $s_{i,j}(\boldsymbol{x})=\partial_{x_i}\partial_{x_j}\log p(\boldsymbol{x})$ and $\boldsymbol{\eta}_i(\boldsymbol{x})=\partial_{x_i}\big(\boldsymbol{b}(x_i)\,\partial_{x_i}\log p(\boldsymbol{x})+\boldsymbol{b}'(x_i)\big)$. Assume that:
\vspace{-0.5em}
\begin{itemize}
\setlength{\itemsep}{0em}
\item[(i)] The set of basis functions $\boldsymbol{b}=(b_1,\dots,b_m)$ is specified;
\item[(ii)] For the causal mechanism $Y=f(\boldsymbol{X},U)$, the QPE $\boldsymbol{\psi}(y\!\mid\!\boldsymbol{x})$ of the effect variable $Y$ conditioned on its parent variables $\boldsymbol{X}$ lies in the linear span of the basis functions $\boldsymbol{b}$;
\item[(iii)] The QPE $\boldsymbol{\psi}(x_i\!\mid\!\boldsymbol{x}_{-i})$ of the variable $X_i$ also lies in the linear span of the basis functions $\boldsymbol{b}$;
\item[(iv)] For any $X_j\in\boldsymbol{X}_{-i},X_k\in\boldsymbol{X}_{-j}$, there exists some $\boldsymbol{x}$ such that $W_{X_j}(s_{j,k}(\boldsymbol{x}),\boldsymbol{\eta}_{j}(\boldsymbol{x}))\ne 0$.
\end{itemize}
\end{assumption}

Specifically, for any variable $X_i$, as long as its QPE $\boldsymbol{\psi}(x_i\!\mid\!\boldsymbol{x}_{-i})$ satisfies the finite linear span form (\cref{eq:linear_span}) with respect to the given basis functions $\boldsymbol{b}$, it can be shown that $W_{X_i}(s_{i,j},\boldsymbol{\eta}_{i})\equiv 0$ for any $X_j\in\boldsymbol{X}_{-i}$ and any $\boldsymbol{x}$ \citep[Theorem 3.6 and Corollary 3.8]{Chen2026}. \Cref{assum:qpe_wronk} couples this finite linear span form with the cause-and-effect relationship, and rules out coincidences where this necessary condition also identically holds for other variables $X_k\in\boldsymbol{X}_{-i}$.
\begin{lemma}[{\citealt[Corollary 3.8]{Chen2026}}]\label{lem:id_wronk}
If \cref{assum:qpe_wronk} holds, then the variable $X_i$ satisfying condition (iii) in \cref{assum:qpe_wronk} is the unique effect variable.
\end{lemma}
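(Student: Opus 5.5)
The plan is to adapt the leaf-identification argument of \citet[Theorem 3.6, Corollary 3.8]{Chen2026} to the present notation. The key object is the score-based identity linking the QPE of a variable, in its finite linear-span form (\cref{eq:linear_span}), to the derivatives of $\log p(\boldsymbol{x})$.

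\textbf{Step 1: necessity for $X_i$.} Under condition (iii) we write $\boldsymbol{\psi}(x_i\!\mid\!\boldsymbol{x}_{-i})=\sum_l c_l(\boldsymbol{x}_{-i})b_l(x_i)$. Differentiating the identity $F(Q(\tau\!\mid\!\boldsymbol{x}_{-i})\!\mid\!\boldsymbol{x}_{-i})=\tau$ in $x_j$ gives $\partial_{x_j}F=-p(x_i\!\mid\!\boldsymbol{x}_{-i})\,\psi_j$. Differentiating once more in $x_i$ and dividing by $p$ gives
\begin{equation*}
\partial_{x_j}\log p(\boldsymbol{x})\cdot(\text{terms})=-\sum_l c_{l,j}(\boldsymbol{x}_{-i})\big(b_l(x_i)\partial_{x_i}\log p(\boldsymbol{x})+b_l'(x_i)\big).
\end{equation*}
A further $x_i$-derivative eliminates the $x_i$-free terms. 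It leaves $s_{i,j}(\boldsymbol{x})$ equal to a linear combination, with coefficients independent of $x_i$, of the components of $\boldsymbol{\eta}_i(\boldsymbol{x})$. For $m=1$ this means $s_{i,j}$ and $\eta_i$ are linearly dependent as functions of $x_i$, with $\boldsymbol{x}_{-i}$ fixed. Hence $W_{X_i}(s_{i,j},\boldsymbol{\eta}_i)\equiv0$ for all $j$ and all $\boldsymbol{x}$. For general $m$ we use the generalized Wronskian in $x_i$. This reproduces the necessary condition stated before the lemma.

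\textbf{Step 2: uniqueness.} Suppose another variable $X_j\neq X_i$ also satisfied condition (iii), i.e., its QPE given $\boldsymbol{x}_{-j}$ lay in $\mathrm{span}(\boldsymbol{b})$. Step 1 applied to $X_j$ would then force $W_{X_j}(s_{j,k},\boldsymbol{\eta}_j)\equiv0$ for every $k\neq j$ and every $\boldsymbol{x}$. Condition (iv) asserts the opposite: for every $X_j\in\boldsymbol{X}_{-i}$ and $X_k\in\boldsymbol{X}_{-j}$ there is some $\boldsymbol{x}$ where this Wronskian is nonzero. This is a contradiction, so $X_i$ is the only variable whose QPE admits the $\boldsymbol{b}$-span form.

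\textbf{Step 3: $X_i$ is the effect.} Condition (ii) says the true effect $Y$, given all remaining variables (its parents, under the leaf/sufficiency setting of \cref{assum:1}), has its QPE in $\mathrm{span}(\boldsymbol{b})$. Therefore $Y$ satisfies condition (iii), and by Step 2 it must coincide with $X_i$. This gives $X_i$ as the unique effect (leaf) variable.

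\textbf{Main obstacle.} The hard step is Step 1 for $m\ge2$. We must show rigorously that linear dependence of $s_{i,j}$ and the components of $\boldsymbol{\eta}_i$ over functions constant in $x_i$ is equivalent to vanishing of the stated Wronskian. We must also handle the fact that the coefficient functions $c_l$ depend on $\boldsymbol{x}_{-i}$. If this becomes delicate, we would simply invoke Theorem 3.6 of \citet{Chen2026} directly, since the paper already cites it for exactly this implication. The regularity assumptions (positive $C^k$ densities) justify the differentiations throughout.
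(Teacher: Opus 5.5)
Your proposal is correct and follows the route the paper sketches just before the lemma, with details deferred to the cited Corollary 3.8: the $\boldsymbol{b}$-span form of a variable's QPE forces $s_{i,j}=-\sum_l c_{l,j}(\boldsymbol{x}_{-i})\,\eta_{i,l}$, and hence $W_{X_i}(s_{i,j},\boldsymbol{\eta}_i)\equiv 0$; condition (iv) forbids this for every variable other than $X_i$, and condition (ii) places the true effect among the span-form variables. The step you flag as the main obstacle is not one, because only the easy direction is needed: for any $m$, differentiating $s_{i,j}+\sum_l c_{l,j}\eta_{i,l}=0$ up to $m$ times in $x_i$ puts the nonzero vector $(1,c_{1,j},\dots,c_{m,j})$ in the kernel of the ordinary Wronskian matrix, so neither the converse nor a generalized Wronskian is required.
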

This lemma provides a pathway for a unified analysis of identifiability from observational distributions to causal directions and across different mechanism families. Although it can directly unify LiNGAM, ANM, and HNM, it cannot be directly applied to PNL. This is because, without any additional constraints, the basis functions in PNL remain dependent on the post-nonlinear functions, which violates the premise of specified basis functions (condition (i) in \cref{assum:qpe_wronk}). To address this, we introduce another assumption to restrict PNL:
\begin{assumption}\label{assum:finite_basis}
For the post-nonlinear function $h$ in PNL, assume that there is only a finite number of possible functions $\overline{h}=h'(h^{-1})$ within the prior space.
\end{assumption}
In practice, this assumption is highly likely to hold within the artificially constructed prior spaces for pre-training. Since the training only proceeds for a finite number of steps, only a finite number of post-nonlinear functions $h$ are sampled from the PNL mechanisms. Consequently, the total number of $\overline{h}$ involved in this prior space can be considered finite. Under \cref{assum:finite_basis}, we can incorporate PNL into the same framework as \cref{lem:id_wronk}, thereby proving the following:
\begin{corollary}\label{cor:fcm_id}
Assuming that condition (ii) in \cref{assum:1}, condition (iv) in \cref{assum:qpe_wronk}, and \cref{assum:finite_basis} hold simultaneously, the variable $X_i$ satisfying condition (iii) in \cref{assum:qpe_wronk} is the unique effect variable.
\end{corollary}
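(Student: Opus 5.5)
The plan is to reduce \cref{cor:fcm_id} to \cref{lem:id_wronk} by exhibiting one specified, finite set of basis functions $\boldsymbol{b}$ in whose linear span the QPE of every mechanism permitted by condition (ii) of \cref{assum:1} lies. The natural candidate is the union of all basis functions in \cref{tab:qpe} together with the finitely many PNL functions granted by \cref{assum:finite_basis}:
\begin{equation*}
\boldsymbol{b}=\big(1,\;y,\;\overline{h}_1,\dots,\overline{h}_r\big),
\end{equation*}
where $\overline{h}_1,\dots,\overline{h}_r$ enumerate the possible $h'(h^{-1})$ in the prior space (duplicates or linear dependencies removed).

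With this $\boldsymbol{b}$ fixed, condition (i) of \cref{assum:qpe_wronk} holds by construction, and $\boldsymbol{b}$ does not depend on which family actually generated the data.

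The second step verifies condition (ii). For the true effect variable $Y=f(\boldsymbol{X},U)$, the mechanism $f$ lies in one of the four families, so by \cref{tab:qpe} its QPE has one of the following forms:
\begin{itemize}
\item $\boldsymbol{w}\cdot 1$ (LiNGAM);
\item $\nabla g\cdot 1$ (ANM);
\item $(\nabla g-\tfrac{g}{h}\nabla h)\cdot 1+\tfrac{\nabla h}{h}\cdot y$ (HNM);
\item $\nabla g\cdot\overline{h}_\ell$ for some $\ell\le r$ (PNL).
\end{itemize}
Each of these is a finite linear combination of entries of $\boldsymbol{b}$ with coefficients depending only on $\boldsymbol{x}$. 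Hence \cref{eq:linear_span} holds with the common $\boldsymbol{b}$. Here I allow the coefficient functions $c_i$ to vanish identically, which is harmless because the results cited from \citet{Chen2026} only require membership in the span.

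Condition (iii) is exactly the hypothesis singling out $X_i$, and condition (iv) is assumed directly. All hypotheses of \cref{lem:id_wronk} are therefore satisfied, and it gives that the $X_i$ satisfying (iii) is the unique effect variable.

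The main obstacle is making sure that enlarging the basis does not break the Wronskian machinery behind \cref{lem:id_wronk}. The vector $\boldsymbol{\eta}_i$ in \cref{assum:qpe_wronk} grows in dimension with $m$. One might worry that the necessary condition $W_{X_i}(s_{i,j},\boldsymbol{\eta}_i)\equiv 0$ for the true effect fails when the true QPE uses only a sub-basis. I would argue it still holds, because the derivation in \citet[Theorem 3.6]{Chen2026} applies componentwise to each $b_k$ and requires only that the QPE be a linear combination of the $b_k$. Meanwhile, condition (iv) is imposed with respect to the enlarged $\boldsymbol{b}$, so the non-vanishing needed for uniqueness is stated for the same basis.

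The finiteness in \cref{assum:finite_basis} is used precisely here: the basis must be a fixed finite tuple, independent of the unknown $h$, so that the Wronskian in (iv) is a well-defined finite determinant. If the $\overline{h}_\ell$ turned out to be linearly dependent with $1$ or $y$, I would first prune $\boldsymbol{b}$ to a linearly independent subset spanning the same space, which preserves both span conditions.
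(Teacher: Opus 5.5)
Your proposal is correct and follows the paper's proof: the paper also builds the single basis $\boldsymbol{b}=(1,y,\overline{h}_1,\dots,\overline{h}_k)$ from \cref{tab:qpe} and \cref{assum:finite_basis}, notes that the conditions of \cref{assum:qpe_wronk} then hold, and concludes by \cref{lem:id_wronk}. Your extra care is a sensible tightening of points the paper leaves implicit: treating (iii) as the hypothesis that defines $X_i$ rather than something verified, reading (iv) relative to the enlarged basis, and pruning linear dependencies among the $\overline{h}_\ell$, without which the components of $\boldsymbol{\eta}_j$ would be linearly dependent and the Wronskian in (iv) would vanish identically.
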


\begin{proof}
As shown in \cref{tab:qpe}, the basis functions for LiNGAM, ANM, and HNM are completely determined (the basis $\boldsymbol{b}$ are $1$, $1$, and $(1,y)$, respectively). Furthermore, \cref{assum:finite_basis} postulates that there are only finitely many possible $\overline{h}$, denoted by $(\overline{h}_1,\dots,\overline{h}_k)$. We can then construct $\boldsymbol{b}=(1,y,\overline{h}_1,\dots,\overline{h}_k)$ such that LiNGAM, ANM, HNM, and the PNL model satisfying \cref{assum:finite_basis} all reside within the same finite-dimensional space spanned by the given basis $\boldsymbol{b}$. Consequently, conditions (i), (ii), and (iii) in \cref{assum:qpe_wronk} are all satisfied. The proof is then completed by applying \cref{lem:id_wronk}.
\end{proof}

Based on the identifiability of the effect variables (i.e., the leaf nodes in a DAG), we can recursively prove the identifiability of the graph from the observational distribution:

{\renewcommand\footnote[1]{}\theoremDAG*}
\begin{proof}
Assume there are $d$ variables, with indices denoted by $i\in I=\{1,\dots, d\}$. Since $\mathcal{G}$ resides in a discrete space, $P(\mathcal{G}\!\mid\!p(\boldsymbol{X}))=1$ if and only if $P(\mathcal{G}'\!\mid\!p(\boldsymbol{X}))=0$ for all $\mathcal{G}'\ne\mathcal{G}$. Therefore, it suffices to prove that the posterior probabilities of all other graphs $\mathcal{G}'\ne\mathcal{G}$ are exactly zero. Without loss of generality, since a permutation of the indices does not alter the proof, we assume the index order $1,\dots,d$ already forms a topological sort of $\mathcal{G}$.

Next, let us consider a graph $\mathcal{G}'=(I,\mathcal{E}')$ that contains at least one pair of nodes $(i,j)$ falling into one of the following three cases: (i) Superfluous forward edge: $i<j$, with $(i,j)\in\mathcal{G}'$ but $(i,j)\notin\mathcal{G}$; (ii) Missing forward edge: $i<j$, with $(i,j)\in\mathcal{G}$ but $(i,j)\notin\mathcal{G}'$; (iii) Reversed edge: $i>j$ and $(i,j)\in\mathcal{G}'$.

Subsequently, we recursively inspect node pairs in the reverse order $d,\dots,1$. Suppose there exists a pair $(i,d)$ belonging to any of the three aforementioned conditions. We can then derive a contradiction on $p(\boldsymbol{X})$ for each respective case. Briefly, given a leaf node $i$, due to the assumptions of causal sufficiency and faithfulness, conditional independencies in $p(\boldsymbol{X})$ correspond strictly to d-separations. Therefore, the Markov blanket of $X_i$ must exactly be its parents in $\mathcal{G}$, which rules out superfluous and missing forward edges. Then, according to the FCM assumption, no reverse model can yield an observational distribution that satisfies the specific directional properties, thus precluding reversed edges.

Specifically:
\vspace{-0.5em}
\begin{itemize}[leftmargin=*]
\itemsep 0em
\item \textbf{Superfluous forward edge:} According to causal sufficiency, within the prior space $\Phi$, for any $p(\boldsymbol{X})$ induced by an SCM conforming to $\mathcal{G}'$, there must exist at least one subset $\boldsymbol{X}_{I'}$ (where $I'\subset I$) such that $X_i\perp\!\!\!\perp X_d \mid\boldsymbol{X}_{I'}$ in order for $(i,d)\notin\mathcal{E}'$ to hold. However, $X_i\perp\!\!\!\perp X_d \mid\boldsymbol{X}_{I'}$ is not satisfied in $p(\boldsymbol{X})$ due to causal faithfulness, thereby contradicting the initial assumption. 
\item \textbf{Missing forward edge:} According to causal sufficiency, within the prior space $\Phi$, for any $p(\boldsymbol{X})$ induced by an SCM conforming to $\mathcal{G}'$, there does not exist any subset $\boldsymbol{X}_{I'}$ (where $I'\subset I$) such that $X_i\perp\!\!\!\perp X_d \mid\boldsymbol{X}_{I'}$ in order for $(i,d)\notin\mathcal{E}'$ to hold. However, $X_i\perp\!\!\!\perp X_d \mid\boldsymbol{X}_{I'}$ indeed exists in $p(\boldsymbol{X})$ due to causal faithfulness, thereby contradicting the initial assumption.
\item \textbf{Reversed edge:} If an edge with $i>d$ exists, since we have already proved the absence of missing forward edges, $\mathcal{G}'$ must contain both $(i,d)$ and $(d,i)$. This implies there exists a subset $\{i,d\}\subseteq I'\subseteq I$ such that $X_i=f_i(\boldsymbol{X}_{I'\setminus \{i\}},U_i)$ and $X_d=f_d(\boldsymbol{X}_{I'\setminus \{d\}},U_d)$ (which simultaneously induce the distribution $p(\boldsymbol{X}_{I'})$). This means there are two effect variables simultaneously in $\boldsymbol{X}_{I'}$. Based on \cref{cor:fcm_id}, this cannot hold, leading to a contradiction. Thus, the initial assumption fails.
\end{itemize}
\end{proof}
\newpage
\section{Experimental Details}
\subsection{Prior Space Construction}
\label{app:b_prior}

\paragraph{Random Graph}
The experiments involve four types of random graphs:
\vspace{-0.5em}
\begin{itemize}[leftmargin=*,itemsep=0em]
    \item \textbf{Erdős–Rényi graph}: This random graph model assumes that edges between any pair of nodes are established independently with a fixed probability,.
    \item \textbf{Barabási–Albert graph}: This model generates scale-free networks with power-law degree distributions by dynamically adding nodes and edges via a preferential attachment mechanism.
    \item \textbf{Watts–Strogatz graph}: This model constructs small-world networks with high clustering coefficients and short average path lengths by randomly rewiring edges of a regular ring lattice with a given probability.
    \item \textbf{Stochastic Block Model graph}: This model partitions network nodes into predefined community blocks and generates edges according to an independent probability matrix between and within blocks, widely used for modeling networks with prominent community structures.
\end{itemize}
\vspace{-0.5em}
During training, we only use Erdős–Rényi, Barabási–Albert, and Watts–Strogatz graphs, while the Stochastic Block Model appears exclusively in the OOD evaluation phase. The number of nodes during training is uniformly sampled from $[2, 100]$, corresponding to the number of features in the tabular dataset $\mathcal{D}$ during training. The expected graph density is sampled from $[0, 1]$, where we encourage greater coverage of sparse graphs by employing a truncated exponential distribution over $[0, 1]$ with its expected value set to $0.25$.

\paragraph{Random Noise}
During training, the exogenous variables cover Normal, Laplace, Uniform, Student's T, LogNormal, Gumbel, Exponential, $\chi^2$, Beta, and Gamma distributions. During OOD evaluation, Half-Normal and Weibull distributions are used instead. The hyperparameters for these distributions are randomly selected, configured as follows:
\vspace{-0.5em}
\begin{itemize}[leftmargin=*,itemsep=0em]
    \item \textbf{Uniform}: Uniform distribution over $[0, 1]$.
    \item \textbf{Normal}, \textbf{Laplace}, \textbf{LogNormal}, \textbf{Gumbel}: The \texttt{location} parameter $\mu$ and \texttt{scale} parameter $\sigma$ are fixed to 0 and 1, respectively.
    \item \textbf{Exponential}: The \texttt{rate} parameter $\lambda$ is uniformly sampled from $[0.5, 2.0]$.
    \item \textbf{Beta}, \textbf{Gamma}: All \texttt{concentration} parameters, including $c_0$ and $c_1$, are uniformly sampled from $[0.5, 5.0]$.
    \item \textbf{Student's T}, $\boldsymbol{\chi^2}$: The \texttt{degrees of freedom} parameter $\nu$ is uniformly sampled from $[2.1, 5.0]$.
    \item \textbf{Half-Normal}, \textbf{Weibull}: The \texttt{scale} parameter $\sigma$ and \texttt{concentration} parameter $c$ are fixed to 1.
\end{itemize}
\vspace{-0.5em}
We then sample 1D random noise variables i.i.d.\ from these randomly instantiated distributions. Subsequently, these noise samples are normalized and participate in the forward inference of the random causal mechanisms with controlled signal-to-noise ratios (SNR).

\paragraph{Random Mechanisms}
The building blocks of various causal mechanisms are randomly sampled functions $\mathbb{R}^d \to \mathbb{R}$. We employ the following random functions:
\vspace{-0.5em}
\begin{itemize}[leftmargin=*,itemsep=0em]
    \item \textbf{Random Linear}: The most basic affine transformation, defined as $f(\boldsymbol{x}) = \boldsymbol{x}^\top \boldsymbol{W} + \boldsymbol{b}$. The weight matrix $\boldsymbol{W} \in \mathbb{R}^{d \times 1}$ and bias $\boldsymbol{b} \in \mathbb{R}$ are initialized using a uniform distribution and frozen afterwards. This transformation contains no additional hyperparameters to search.
    \item \textbf{MLP (Multilayer Perceptron)}: A fully connected neural network with $L$ hidden layers, expressed as $f(\boldsymbol{x}) = \boldsymbol{W}_{L+1} \sigma(\dots \sigma(\boldsymbol{x}^\top \boldsymbol{W}_1 + \boldsymbol{b}_1) \dots) + \boldsymbol{b}_{L+1}$, where all network weights are frozen after initialization. This model involves two hyperparameters: first, the hidden layer configuration controlling depth and width (e.g., $[128, 128]$ denotes two hidden layers of equal width), sampled from combinations of 1, 2, or 3 layers with layer widths from $\{32, 64, 128, 256\}$; second, the activation function $\sigma$, discretely and uniformly sampled from the candidate set $\{\text{Tanh}, \text{Sigmoid}, \text{ReLU}, \text{GELU}, \text{SiLU}, \text{Softsign}\}$.
    \item \textbf{Periodic Functions}: A single-hidden-layer sinusoidal network based on random frequency mapping, defined as $f(\boldsymbol{x}) = \frac{1}{\sqrt{m}} \sum_{i=1}^m v_i \sin(\boldsymbol{x}^\top \boldsymbol{w}_i + b_i)$, where feature weights $\boldsymbol{w}_i \sim \mathcal{N}(0, \gamma^2 \boldsymbol{I})$, biases $b_i \sim \mathcal{U}(0, 2\pi)$, and output weights $v_i \sim \mathcal{N}(0, 1)$. The output is scaled by $1/\sqrt{m}$ to maintain numerical variance stability. This function involves two hyperparameters: the number of frequencies $m$, uniformly sampled from $[1, 200]$; and the frequency scaling factor $\gamma$ controlling the width of the frequency distribution, continuously and uniformly sampled from $[0.1, 2.0]$.
    \item \textbf{GP-RFF (Random Fourier Feature Gaussian Process)}: Approximates a Gaussian Process with an RBF kernel using random Fourier features. It is defined as $f(\boldsymbol{x}) = \sqrt{\frac{2}{m}} \sum_{i=1}^m v_i \cos(\boldsymbol{x}^\top \boldsymbol{w}_i + b_i)$, where weights $\boldsymbol{w}_i \sim \mathcal{N}(0, l^{-2}\boldsymbol{I})$, biases $b_i \sim \mathcal{U}(0, 2\pi)$, and output weights $v_i \sim \mathcal{N}(0, 1)$. This model also contains two hyperparameters: the feature count $m$, uniformly sampled from $[10, 1024]$; and the lengthscale $l$ of the RBF kernel, which dictates the variance of input projection weights ($1/l^2$) and is continuously sampled from $[0.1, 3.0]$.
    \item \textbf{Random Forest}: An ensemble model composed of $T$ random decision stumps, defined as $f(\boldsymbol{x}) = \sum_{t=1}^T \left( v^{(R)}_t \mathbb{I}(\mathcal{C}_t(\boldsymbol{x})) + v^{(L)}_t (1 - \mathbb{I}(\mathcal{C}_t(\boldsymbol{x}))) \right)$, where $\mathbb{I}(\cdot)$ is the indicator function, and prediction values at left and right leaf nodes are initialized as $v^{(L)}_t, v^{(R)}_t \sim \mathcal{N}(0, \frac{1}{T})$. Its hyperparameters include the number of trees $T$ sampled from $[1, 200]$, and the stump type determining the splitting condition $\mathcal{C}_t(\boldsymbol{x})$, which is chosen discretely between two types: axis-aligned, with splitting condition $\mathcal{C}_t(\boldsymbol{x}) := x_{j} > \tau$ ($j \sim \mathcal{U}\{1, d\}$, threshold $\tau \sim \mathcal{N}(0, 1)$); and oblique, with splitting condition $\mathcal{C}_t(\boldsymbol{x}) := \boldsymbol{x}^\top \boldsymbol{w} + b > 0$ ($\boldsymbol{w} \sim \mathcal{N}(0, \boldsymbol{I})$, $b \sim \mathcal{N}(0, 1)$).
\end{itemize}
\vspace{-0.5em}
The random function type is uniformly sampled from the available candidate set associated with the specific causal mechanism. Next, the sampled random functions are used to construct the following six types of causal mechanisms:
\vspace{-0.5em}
\begin{itemize}[leftmargin=*,itemsep=0em]
    \item \textbf{LiNGAM}: Constructed via $Y = g(\boldsymbol{X}) + U$, where $g$ is a random linear function, and $U$ is guaranteed not to be sampled from Normal noise.
    \item \textbf{ANM}: Constructed via $Y = g(\boldsymbol{X}) + U$, where $g$ is any random function (except the random linear function).
    \item \textbf{HNM}: Constructed via $Y = g(\boldsymbol{X}) + h(\boldsymbol{X})\,U$, where $g$ and $h$ are random functions, and $h$ is constrained to be non-negative via an absolute value operation after sampling.
    \item \textbf{PNL}: Constructed via $Y = h(g(\boldsymbol{X}) + U)$, where $g$ is a random function, and $h$ is a 1D invertible distortion (see the ``Post-processing" paragraph for details).
    \item \textbf{General}: Constructed via $Y = g(\boldsymbol{X}, U)$, where $U$ is treated as an input dimension of $g$, and $g$ is any random function except the random linear function to prevent degeneration into linear mechanisms.
    \item \textbf{PNL-HNM}: Constructed via $Y = h(a(\boldsymbol{X}) + b(\boldsymbol{X})\,U)$, where $a$ and $b$ are random functions, and $h$ is a 1D invertible distortion.
\end{itemize}
\vspace{-0.5em}
The causal mechanism type is uniformly sampled from an available candidate set. For example, during training, the candidate set includes all mechanisms except PNL-HNM (detailed candidate sets for testing are provided in \cref{app:b_bench}). PNL-HNM is exclusively reserved for OOD evaluations.

Next, we inject noise with controlled signal-to-noise ratios (SNR), with the SNR (in dB) uniformly sampled from $[-5\text{ dB}, 10\text{ dB}]$. Specifically, for a causal mechanism $Y = f(\boldsymbol{X}, U)$, we compute the variance of $f(\boldsymbol{X}, 0)$ based on existing endogenous variables $\boldsymbol{X}$ and zero noise, and then scale the noise variance such that the ratio of noise variance to function variance matches the sampled SNR. This strategy ensures numerical stability during data sampling while encompassing both low- and high-noise regimes, diversifying the prior space to enhance model generalization. Finally, following \citep{Ormaniec2025}, we directly normalize the resulting $Y$ after forward inference $Y = f(\boldsymbol{X}, U)$.

\paragraph{Post-processing}
Inspired by the tabular data processing approach in \cite{Hollmann2025}, we randomly select a subset of features to undergo 1D warping and discretization. We consider the following 1D invertible warping functions, which are also used in PNL and PNL-HNM:
\vspace{-0.5em}
\begin{itemize}[leftmargin=*,itemsep=0em]
    \item \textbf{Sinh-Arcsinh}: A smooth nonlinear transformation constructed via hyperbolic sine and its inverse, defined as $f(x) = \sinh(\tau \operatorname{arcsinh}(x) + \epsilon)$. The shape of the warped input distribution is controlled by two hyperparameters: the skewness parameter $\epsilon$, sampled continuously from $[-1.0, 1.0]$, which controls asymmetry; and the tail-weight parameter $\tau$, sampled continuously from $[0.5, 2.0]$, which controls tail heaviness and decay rate.
    \item \textbf{Residual Flow}: A 1D invertible flow module based on a single-hidden-layer residual network, defined as $f(x) = c_{\text{safe}}x + \boldsymbol{w}^\top \tanh(x \boldsymbol{v} + \boldsymbol{b})$. To strictly guarantee monotonicity (i.e., everywhere differentiable with a strictly positive derivative to ensure invertibility), the linear term coefficient must be constrained. Since the minimum derivative contribution of the $\tanh$ layer is $\sum_{i=1}^k \min(w_i v_i, 0)$, a safety constant $c_{\text{safe}} = \alpha - \sum_{i=1}^k \min(w_i v_i, 0)$ is introduced, where $\alpha$ is the base slope. Hidden parameters $\boldsymbol{w}, \boldsymbol{v}, \boldsymbol{b} \in \mathbb{R}^k$ are randomly initialized from $\mathcal{N}(\boldsymbol{0}, \boldsymbol{I})$ and frozen. This function contains two hyperparameters: the number of hidden components $k$, discretely sampled from $\{1, 2, \dots, 10\}$, and the base slope $\alpha$, continuously sampled from $[0.1, 2.5]$.
    \item \textbf{Asymmetric Power}: An asymmetric power function constructed around a specified split point $\mu$, enabling independent controllable nonlinear warping on both sides. Formally, $f(x) = \mu + s_r \max(x - \mu, 0)^{p_r} - s_l \max(\mu - x, 0)^{p_l}$. Under our setting, the left and right scaling factors $s_l$ and $s_r$ are fixed to $1.0$, allowing the model to focus on controlling non-linearity via polynomial exponents. The remaining three hyperparameters are independently sampled: the split point $\mu \sim \mathcal{U}[-1.0, 1.0]$, and the exponents $p_l, p_r \sim \mathcal{U}[1.0, 2.5]$.
\end{itemize}
\vspace{-0.5em}
We also consider the following discretization functions:
\vspace{-0.5em}
\begin{itemize}[leftmargin=*,itemsep=0em]
    \item \textbf{Quantile}: Discretization based on the empirical cumulative distribution of the input data $\boldsymbol{x}$. By choosing equidistant quantiles in $(0, 1)$ as thresholds, the continuous values are partitioned into $K$ discrete bins, ensuring an approximately equal sample count per bin. The hyperparameter, number of bins $K$, is uniformly sampled from the integer range $[2, 50]$.
    \item \textbf{Uniform}: Equal-width binning based on the absolute numerical range of the input data $\boldsymbol{x}$. By taking the minimum and maximum of the input tensor, the value range $[\min(\boldsymbol{x}), \max(\boldsymbol{x})]$ is linearly divided into $K$ intervals of equal width. The bin count $K$ is similarly sampled from $[2, 50]$.
    \item \textbf{Ordinal}: Order-based discretization. The high-dimensional tensor $\boldsymbol{x}$ is flattened into a 1D sequence and sorted in ascending order. Then, $K-1$ absolute values at equal index intervals are extracted as cut-off thresholds. This method relies solely on relative rankings rather than continuous distributions. The number of bins $K$ is uniformly sampled from $[2, 50]$.
\end{itemize}
\vspace{-0.5em}
In training and default evaluation configurations, the proportions of features undergoing warping and discretization each vary within $[0, 0.5]$ of total features, following a truncated exponential distribution with an expected value set to $0.25$.
\subsection{Benchmarks}
\label{app:b_bench}

\paragraph{Synthetic Benchmarks}
To evaluate performance under heterogeneous or homogeneous causal mechanisms, we construct the following synthetic benchmarks:
\vspace{-0.5em}
\begin{itemize}[leftmargin=*,itemsep=0em]
    \item \textbf{LiNGAM}: Causal mechanisms are restricted to LiNGAM, without warping or discretization.
    \item \textbf{ANM}: Causal mechanisms are restricted to ANM, without warping or discretization.
    \item \textbf{HNM}: Causal mechanisms are restricted to HNM, without warping or discretization.
    \item \textbf{PNL}: Causal mechanisms are restricted to PNL, without warping or discretization.
    \item \textbf{General}: Causal mechanisms are restricted to General, with default warping and discretization proportions.
    \item \textbf{Hetero}: Causal mechanisms follow the default configuration where all aforementioned mechanisms are selected with equal probability, with default warping and discretization proportions.
\end{itemize}
\vspace{-0.5em}
Each benchmark consists of 100 randomly sampled test cases, with sample size $n = 1000$ and feature dimension $d = 20$. Except for the specified mechanisms, all other configurations are identical to the prior space construction during training.

When evaluating specific properties, we vary only the target hyperparameter during sampling (e.g., varying only sample size $n$ when evaluating sensitivity to sample size), while keeping all other settings identical to \textbf{Hetero}.

\paragraph{Real-World Benchmarks}
The real-world benchmarks used in our experiments are detailed below:
\vspace{-0.5em}
\begin{itemize}[leftmargin=*,itemsep=0em]
    \item \textbf{Sachs} \citep{Sachs2005}: We use the version of the Sachs dataset from DAG-GNN \citep{Yu2019}, which contains 11 nodes, 20 edges, and 7,466 samples.
    \item \textbf{Causal Chamber} \citep{Gamella2025}: We use data from the light tunnel chamber under standard configuration, containing 10,000 samples and 38 variables. We remove 18 constant variables and perform causal discovery on the remaining 20 non-constant variables.
\end{itemize}
\subsection{Baselines}

\cref{app:d} presents the complete experimental results involving a total of 24 baseline methods: \vspace{-0.5em}
\begin{itemize}
\setlength{\itemsep}{0em}
\item[(i)] \textbf{Traditional search-based methods}: PC \citep{Spirtes1991}, FCI \citep{Spirtes1995}, GES \citep{Chickering2002};
\item[(ii)] \textbf{LiNGAM}: ICA-LiNGAM \citep{Shimizu2006}, Direct-LiNGAM \citep{Shimizu2011};
\item[(iii)] \textbf{ANM}: CAM \citep{Buhlmann2014}, RESIT \citep{Peters2014}, RCD \citep{Maeda2020}, SCORE \citep{Rolland2022}, DAS \citep{Montagna2023b}, NoGAM \citep{Montagna2023}, CaPS \citep{Xu2024};
\item[(iv)] \textbf{HNM}: HOST \citep{Duong2023}, ICDH \citep{Yin2024}, SkewScore \citep{Lin2025};
\item[(v)] \textbf{Optimization-based algorithms}: NOTEARS \citep{Zheng2018}, GOLEM \citep{Ng2020}, GraNDAG \citep{Lachapelle2020}, DAG-GNN \citep{Yu2019}, DAGMA \citep{Bello2022};
\item[(vi)] \textbf{Amortized algorithms}: AVICI \citep{Lorch2022}, CauScale \citep{Peng2026}, TabCausal \citep{Li2026}, FoundCause \citep{Blobaum2026}.
\end{itemize}

Among these methods, PC and FCI only output equivalence classes. For unoriented edges, following \citep{Li2026}, we orient them according to the original variable ordering in the benchmark dataset (rather than the ground-truth order). For NOTEARS and DAGMA, both linear and non-linear variants are included. For amortized algorithms, we evaluate AVICI using its public \texttt{scm-v0} checkpoint, CauScale using its public \texttt{synthetic} checkpoint, and TabCausal and FoundCause using their respective sole public checkpoints.
\subsection{Training Procedure}
\label{app:b_train}

All models, including leaf prediction and parent prediction modules, are implemented using PyTorch and trained with a fixed random seed of $42$ to ensure reproducibility. We optimize the model for a total of $500,000$ iterations using a global batch size of $64$ (in practice, to prevent out-of-memory errors, gradient accumulation is applied with a mini-batch size of $1$). The optimization process employs a peak learning rate of $1 \times 10^{-4}$ along with a weight decay coefficient of $1 \times 10^{-6}$ for regularization. To enhance numerical stability during early training, a linear learning rate warmup schedule is applied over the first $2,000$ steps. Additionally, gradient clipping with a maximum norm threshold of $2.0$ is enforced to mitigate gradient explosion.
\newpage
\section{Additional Experiments: Scaling, Robustness, and Generalization}
\label{app:c}

This appendix presents additional experimental results evaluating the scaling capability, robustness under various data perturbations, and OOD generalization ability of DAG-FM in comparison with other amortized causal discovery methods. All experiments were conducted in an inference environment with 24GB VRAM; runs that encountered out-of-memory (OOM) or other engineering execution failures are omitted. Unless otherwise specified, the configurations for the synthetic benchmarks mirror the \textbf{Hetero} setting described in \cref{app:b_bench}. Furthermore, since baseline methods are incapable of outputting causal orderings, we report topological order identification performance exclusively for DAG-FM.

\subsection{Scaling Capabilities}

\paragraph{Scaling to Large Sample Sizes}
\cref{fig:2} illustrates the performance of different amortized methods as the sample size increases (with the number of dimensions fixed at $d=20$). Without specialized engineering optimizations, both AVICI and CauScale encounter OOM errors under large sample regimes. In contrast, DAG-FM achieves superior asymptotic performance and significantly outperforms TabCausal and FoundCause, demonstrating the structural advantages of the DAG-FM architecture.

\begin{figure}[htbp]
    \centering
    \includegraphics[width=0.8\linewidth]{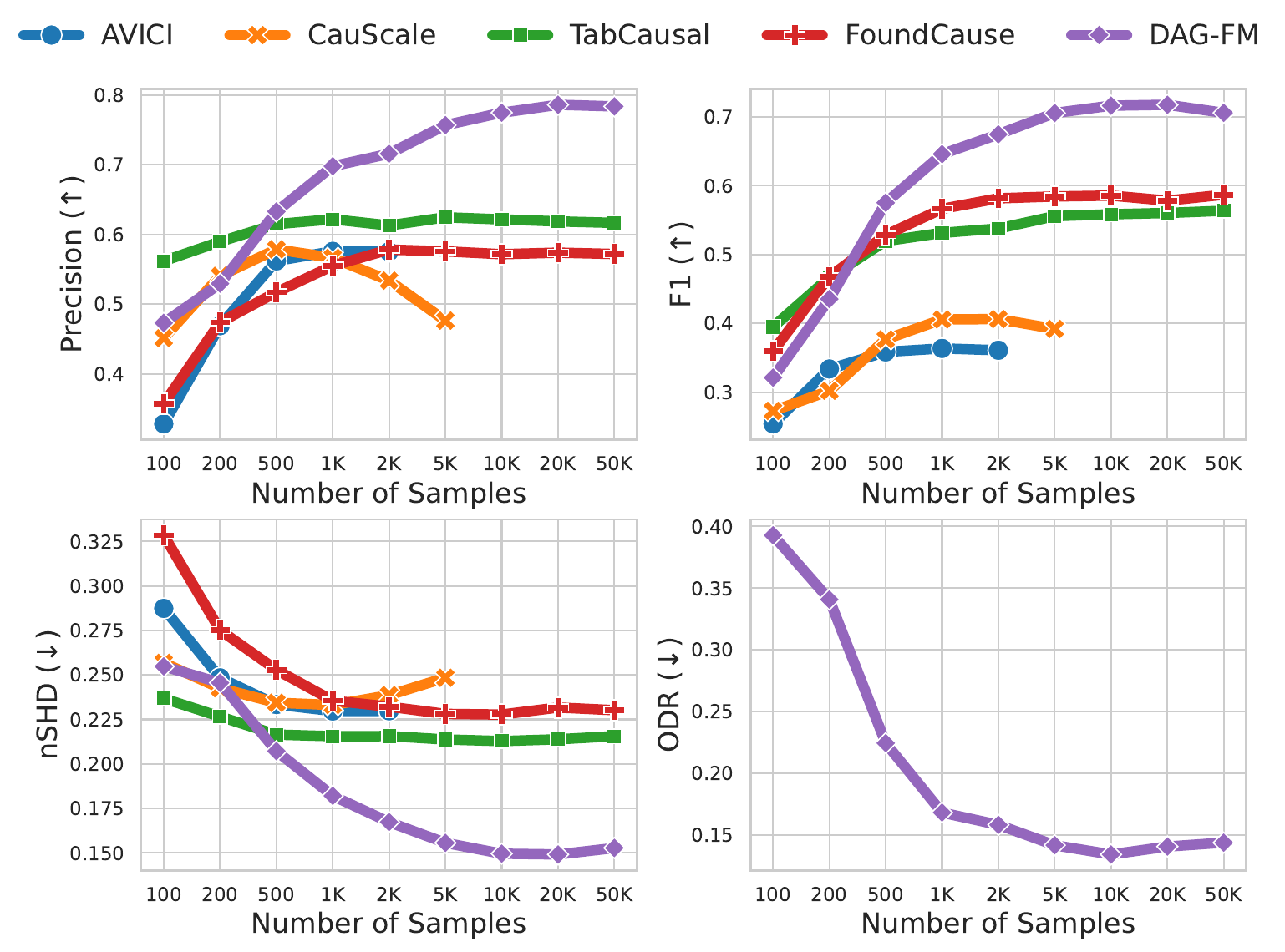}
    \caption{Performance of DAG identification (Precision, F1, nSHD) and order identification (ODR) under different numbers of samples.}
    \label{fig:2}
\end{figure}

\paragraph{Scaling to High Dimensions}
\cref{fig:3} presents the performance of amortized models as the number of features increases (with the sample size fixed at $n=1000$). AVICI and CauScale fail due to OOM errors in high-dimensional scenarios. Meanwhile, DAG-FM consistently surpasses TabCausal and FoundCause in both precision and nSHD metrics across almost all dimensional settings except $d=500$.

\begin{figure}[htbp]
    \centering
    \includegraphics[width=0.8\linewidth]{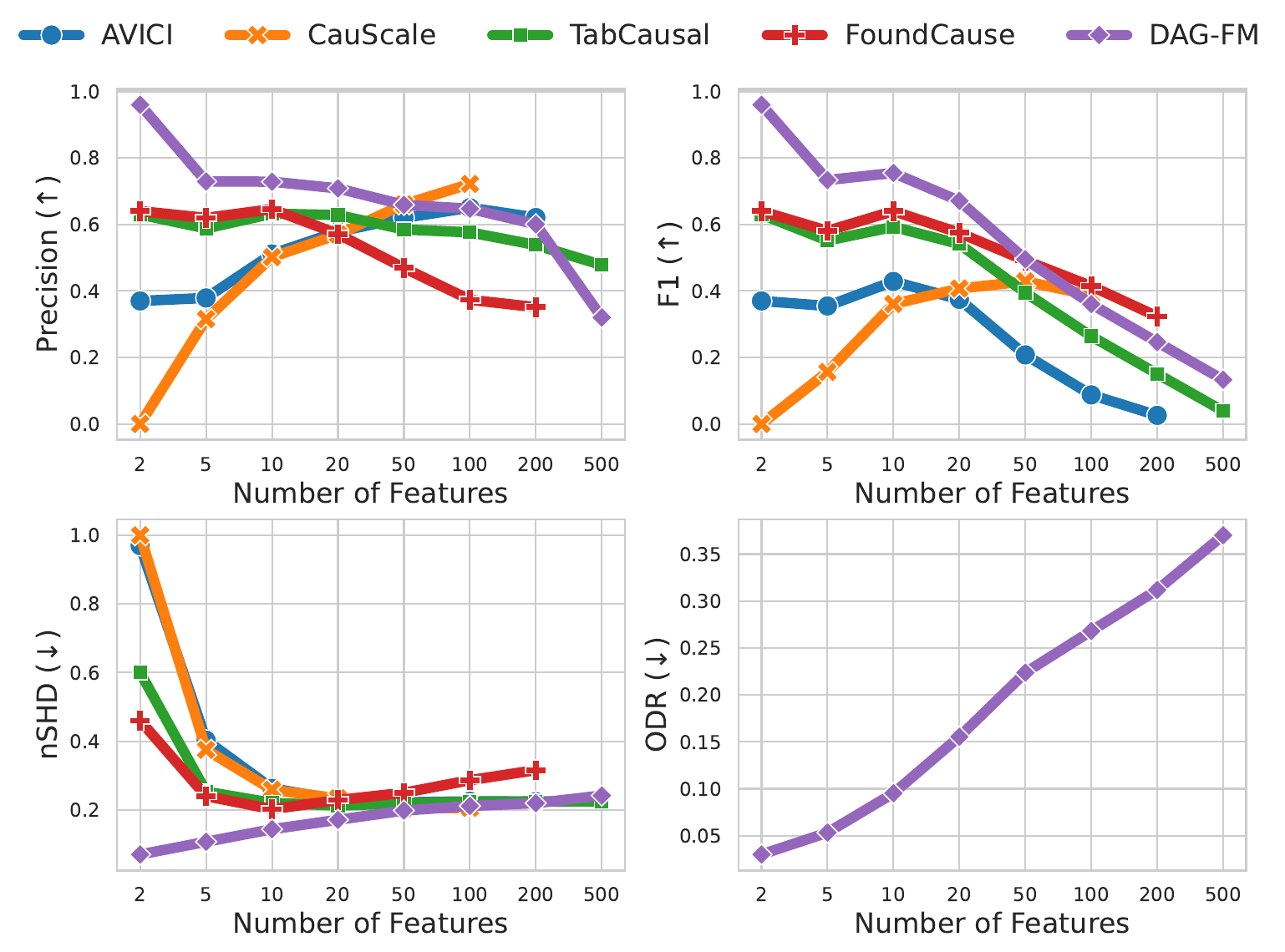}
    \caption{Performance of DAG identification (Precision, F1, nSHD) and order identification (ODR) under different numbers of features.}
    \label{fig:3}
\end{figure}
\subsection{Robustness}

\paragraph{Robustness to Graph Density}
Real-world causal graphs vary considerably in density, encompassing both sparse and highly dense structures. \cref{fig:4} displays the performance of amortized approaches as graph density increases ($\rho = 2|\mathcal{E}| / (d(d-1))$, where $|\mathcal{E}|$ denotes the number of edges in the graph). DAG-FM substantially outperforms competing methods on sparse graphs and ranks second only to FoundCause on full DAGs.

\begin{figure}[htbp]
    \centering
    \includegraphics[width=0.8\linewidth]{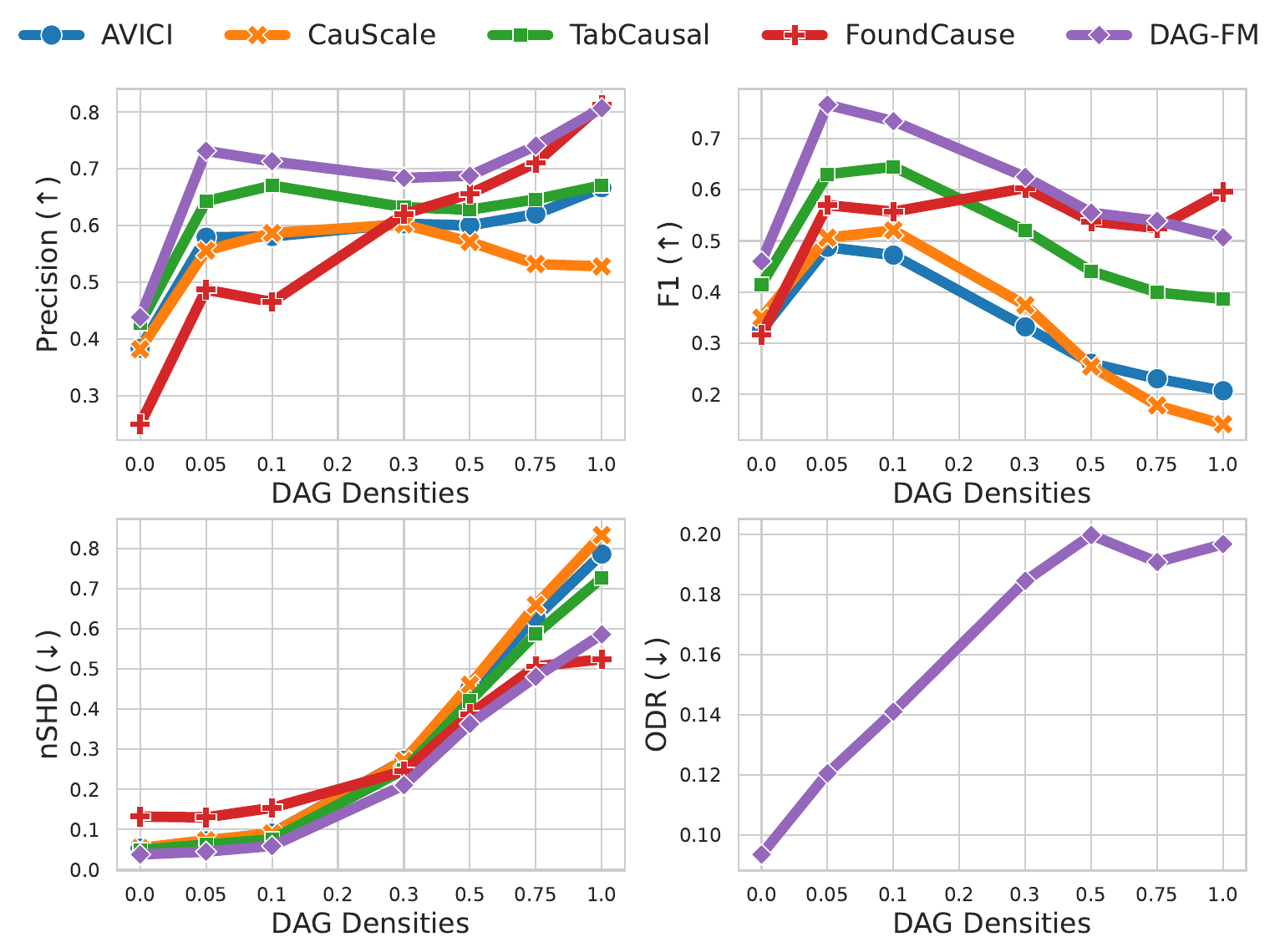}
    \caption{Performance of DAG identification (Precision, F1, nSHD) and order identification (ODR) under different densities.}
    \label{fig:4}
\end{figure}

\paragraph{Robustness to Warping Distortion}
Observational data collected in real-world environments often contains unknown distortions due to complex uncertainties, compromising compliance with underlying causal discovery assumptions. \cref{fig:5} shows model performance as the proportion of distorted variables increases, where ``proportion of warping" specifies the ratio of randomly warped variables relative to the total number of variables. DAG-FM significantly outperforms all baseline methods, demonstrating robust resilience to warping distortions.

\begin{figure}[htbp]
    \centering
    \includegraphics[width=0.8\linewidth]{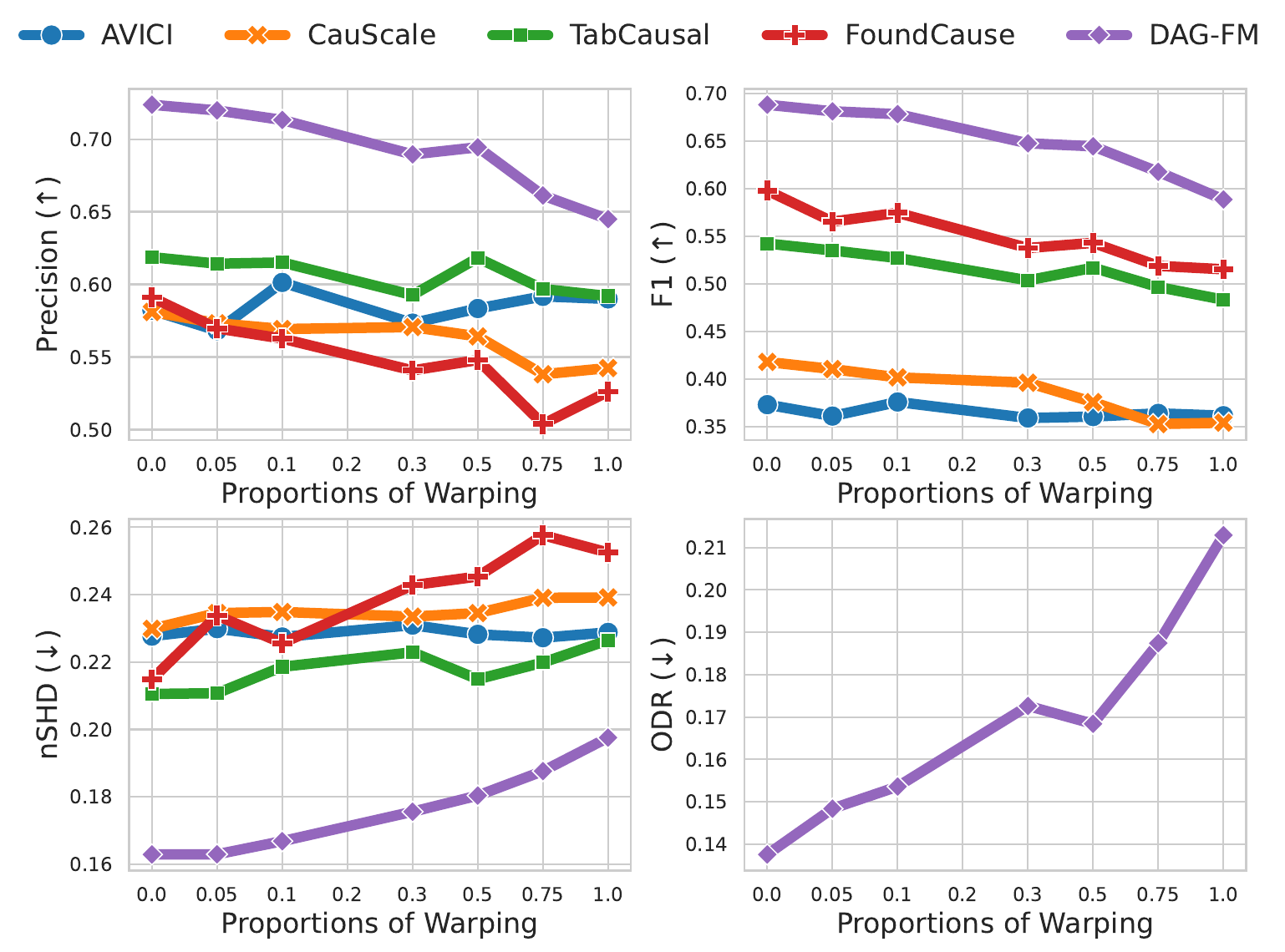}
    \caption{Performance of DAG identification (Precision, F1, nSHD) and order identification (ODR) under different proportions of warping.}
    \label{fig:5}
\end{figure}

\paragraph{Robustness to Discretization}
Another prevalent challenge in real-world datasets is mixed-type data, where features comprise both discrete and continuous variables. \cref{fig:6} compares the performance of amortized methods as the fraction of discrete variables increases, where ``proportion of discretization" indicates the ratio of discrete variables to all variables. DAG-FM outperforms competing methods across most settings, experiencing a slight performance drop only when virtually all variables are discrete, thereby confirming its robustness to discretization.

\begin{figure}[htbp]
    \centering
    \includegraphics[width=0.8\linewidth]{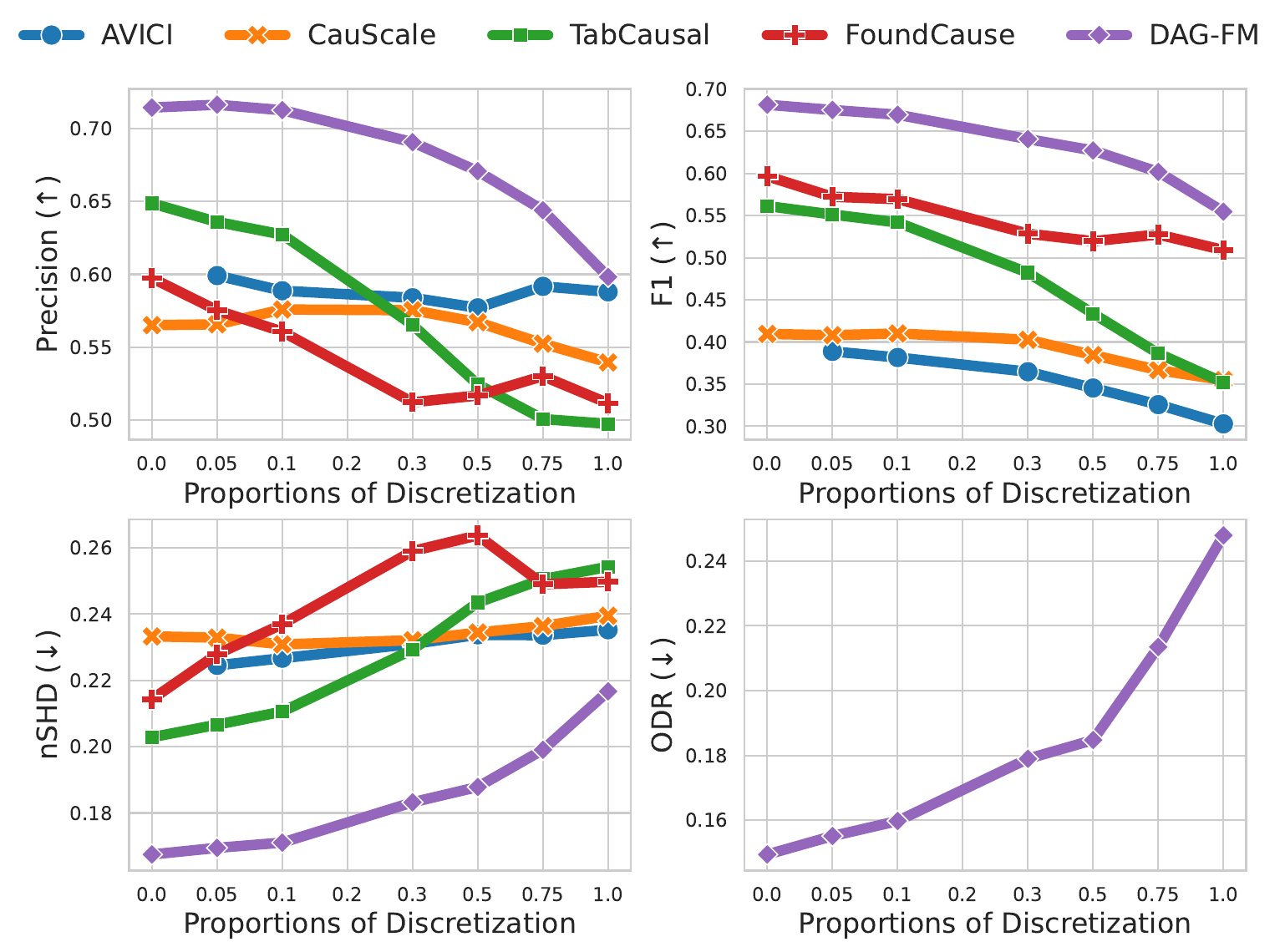}
    \caption{Performance of DAG identification (Precision, F1, nSHD) and order identification (ODR) under different proportions of discretization.}
    \label{fig:6}
\end{figure}
\subsection{Generalization to Out-of-Distribution Data}

Although amortized causal discovery algorithms enable zero-shot inference on unseen datasets, evaluating performance stability when test distributions deviate from the prior space remains crucial. \cref{fig:7} compares the performance variations of amortized methods across four distinct OOD scenarios:
\vspace{-0.5em}
\begin{itemize}[leftmargin=*]
\itemsep0em
\item \textbf{DAG}: The causal graphs of the test data are sampled from Stochastic Block Models, which are completely unseen during pre-training.
\item \textbf{Mechanism}: The underlying causal mechanisms in the test SCMs follow PNL-HNM of the form $Y = h(f(\boldsymbol{X}) + g(\boldsymbol{X})U)$, where $h$ is a non-linear invertible function. Although the theoretical identifiability of this functional form remains open, empirical evidence shows that the DAG structure can still be accurately identified.
\item \textbf{Noise}: Exogenous noise variables in the test SCMs are sampled from Half-Normal and Weibull distributions, both of which are omitted during pre-training prior space construction.
\item \textbf{Hetero}: PNL-HNM is mixed with other causal mechanisms within test datasets to systematically assess model capability under OOD heterogeneous mechanisms.
\end{itemize}
\vspace{-0.5em}

\begin{figure}[htbp]
    \centering
    \includegraphics[width=0.8\linewidth]{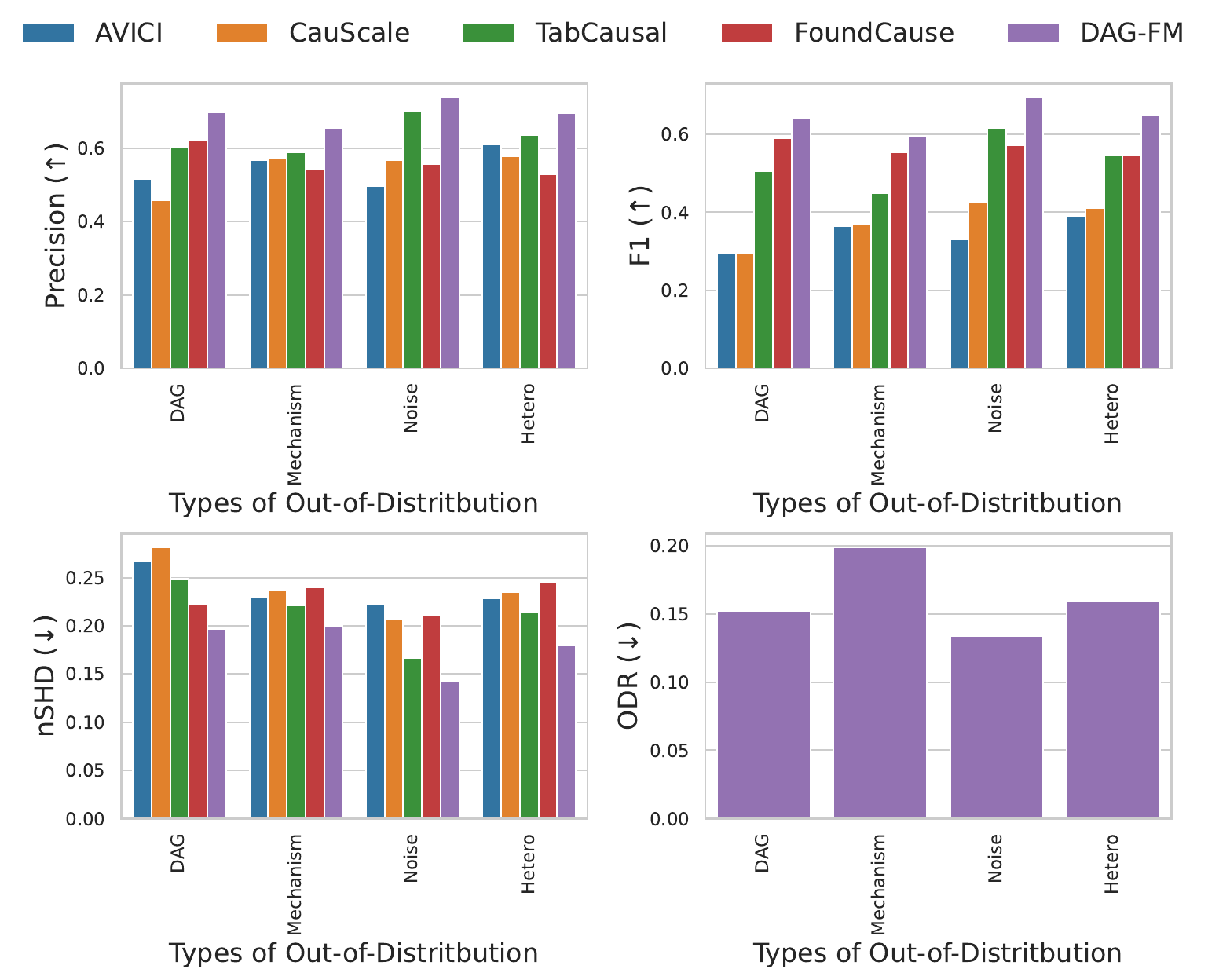}
    \caption{Performance of DAG identification (Precision, F1, nSHD) and order identification (ODR) under 4 types of OOD cases.}
    \label{fig:7}
\end{figure}

The evaluation results demonstrate that DAG-FM consistently outperforms baseline methods across all OOD scenarios, underscoring its strong generalization capabilities.
\newpage
\section{Additional Experiments: Ablation \& Full Comparison Results}
\label{app:d}

\subsection{Ablation Study on MoLE}

To demonstrate the performance gains brought by the Mixture of Causal Experts (MoLE) architecture in DAG-FM, we conduct an ablation study on the individual expert modules. Specifically, by modifying the routing logic, we isolate each individual expert by bypassing the router to exclude other experts, and compare its performance against the integrated MoLE model on both homogeneous and heterogeneous benchmarks. The results are summarized in \cref{fig:8}.

\begin{figure}[htbp]
    \centering
    \includegraphics[width=0.8\linewidth]{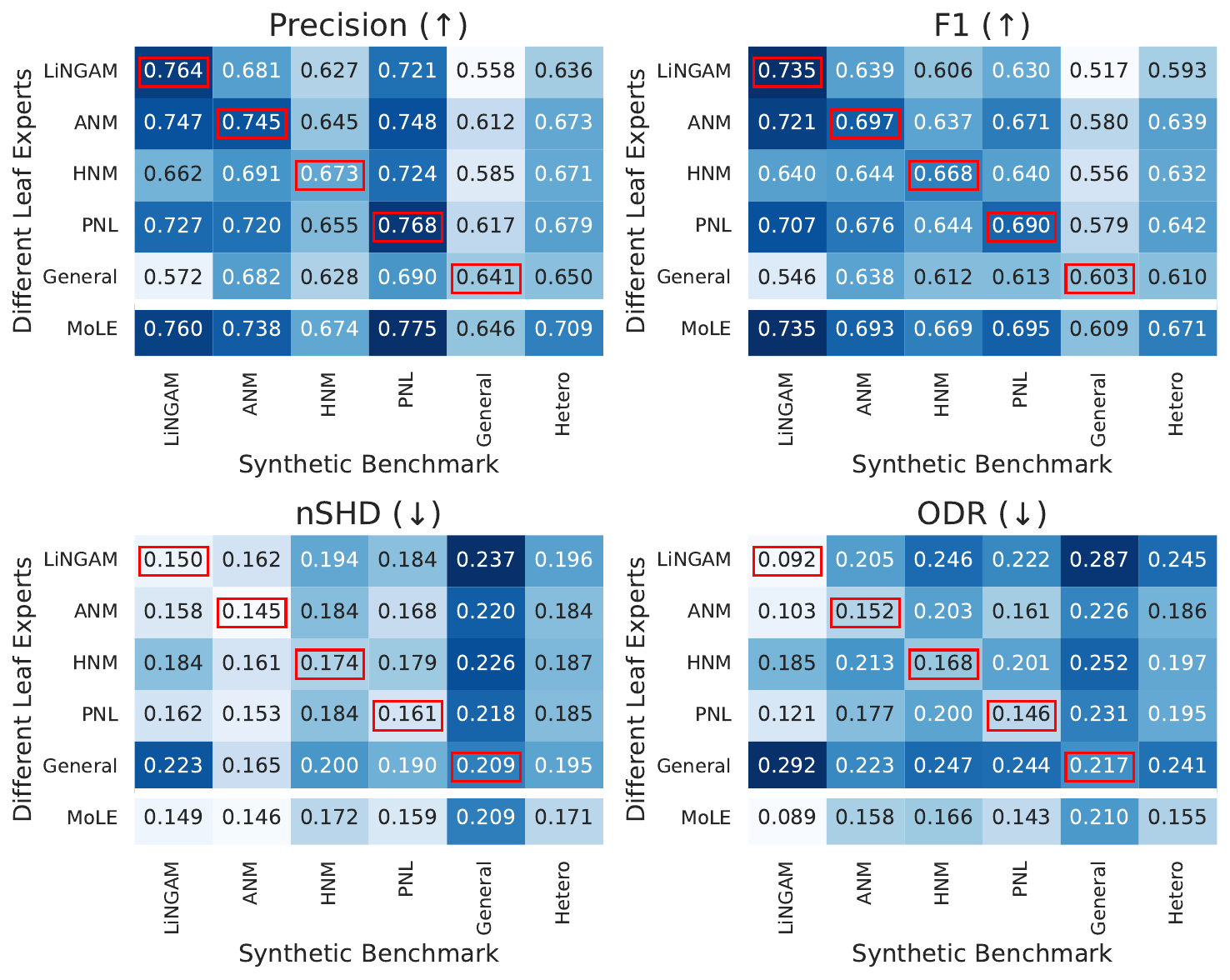}
    \caption{Performance of DAG identification (Precision, F1, nSHD) and order identification (ODR) across individual experts and the MoLE model on homogeneous and heterogeneous benchmarks.}
    \label{fig:8}
\end{figure}

The results yield three key insights: (i) In line with empirical expectations (highlighted along the diagonal red boxes), each domain-specific expert achieves the best performance among single-expert variants on its corresponding homogeneous mechanism benchmark; (ii) MoLE matches the performance of single experts under homogeneous mechanisms, confirming that the router successfully directs inputs to the appropriate experts; (iii) Under heterogeneous mechanism settings, MoLE enables DAG-FM to significantly outperform any individual expert trained on a single mechanism family.
\subsection{Ablation on Occurrence Probabilities of Heterogeneous Mechanisms}

In pre-training and the default \textbf{Hetero} synthetic benchmark, each mechanism family is sampled with equal probability. Here, we evaluate the performance of DAG-FM when mechanism families occur with non-uniform probabilities. As shown in \cref{fig:9}, from left to right, for a given target mechanism family $\mathcal{F}_i$, its occurrence probability $P(\mathcal{F}_i)$ increases from $0$ to $1$, while the remaining four mechanism families share the remaining probability equally at $(1-P(\mathcal{F}_i))/4$.

\begin{figure}[htbp]
    \centering
    \begin{subfigure}{\linewidth}
        \centering
        \includegraphics[width=0.9\linewidth]{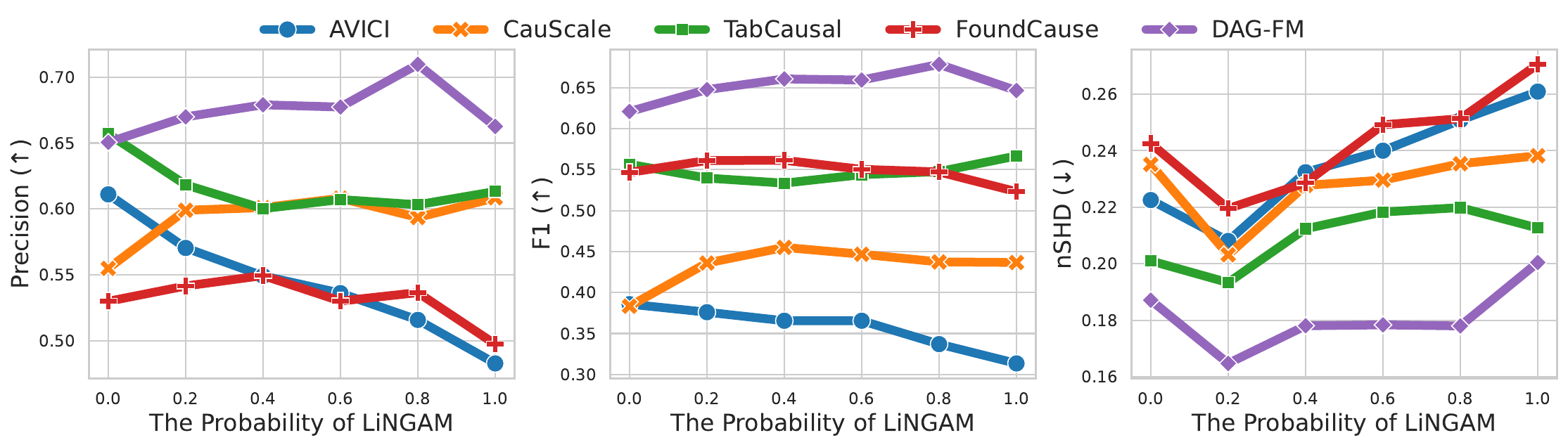}
        \caption{LiNGAM}
        \label{fig:ablation_lingam}
    \end{subfigure}
    \hfill
    \begin{subfigure}{\linewidth}
        \centering
        \includegraphics[width=0.9\linewidth]{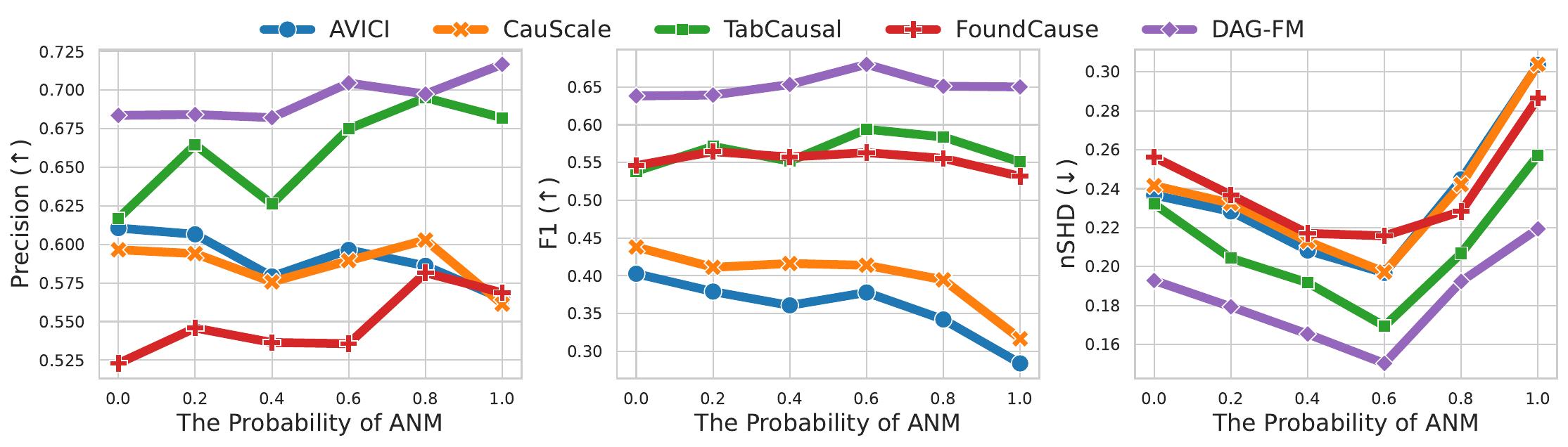}
        \caption{ANM}
        \label{fig:ablation_anm}
    \end{subfigure}
    \hfill
    \begin{subfigure}{\linewidth}
        \centering
        \includegraphics[width=0.9\linewidth]{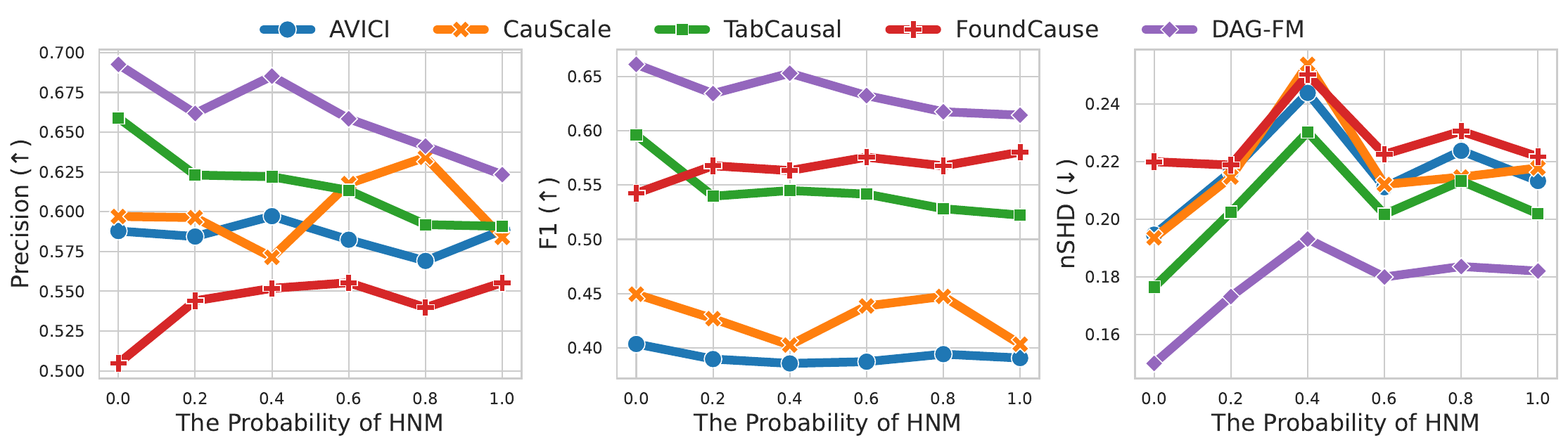}
        \caption{HNM}
        \label{fig:ablation_hnm}
    \end{subfigure}
    \hfill
    \begin{subfigure}{\linewidth}
        \centering
        \includegraphics[width=0.9\linewidth]{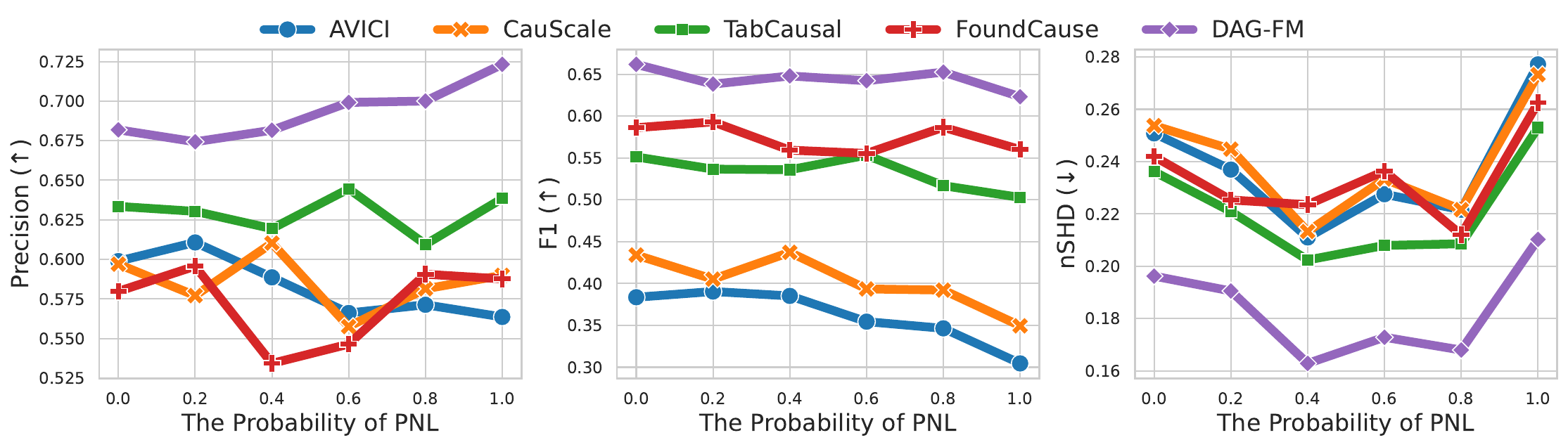}
        \caption{PNL}
        \label{fig:ablation_pnl}
    \end{subfigure}
    \hfill
    \begin{subfigure}{\linewidth}
        \centering
        \includegraphics[width=0.9\linewidth]{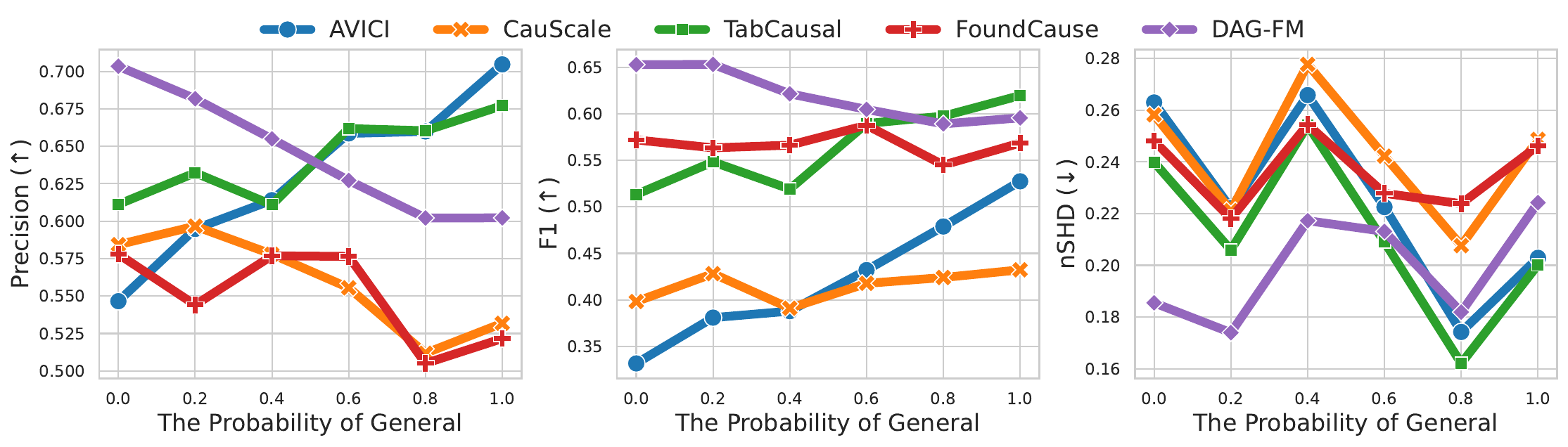}
        \caption{General}
        \label{fig:ablation_general}
    \end{subfigure}
    \caption{Performance of DAG identification (Precision, F1, nSHD) under non-uniform mechanism family occurrence probabilities: (a) LiNGAM, (b) ANM, (c) HNM, (d) PNL, and (e) General.}
    \label{fig:9}
\end{figure}

The results demonstrate that under provably identifiable settings (i.e., LiNGAM, ANM, HNM, and PNL), DAG-FM consistently outperforms baseline methods regardless of how the mechanism proportions vary, underscoring its robustness to non-uniform prior mechanism distributions. Performance degrades noticeably only under the General setting (where theoretical identifiability remains unproven), once the occurrence probability of General mechanisms exceeds $0.5$.
\subsection{Scalability Limits under Reduced Precision}

\begin{wraptable}{r}{0.5\textwidth}
    \vspace{-1.32em}
    \caption{Maximum supported sample sizes ($n$) and feature dimensions ($d$) for DAG-FM inference under a single 24GB VRAM GPU footprint across different numerical precisions.}
    \label{tab:cap}
    \vspace{-0.5em}
    \resizebox{\linewidth}{!}{\begin{tabular}{ccc}
    \toprule
    \textbf{Precision} & \textbf{Sample Size ($n$)} & \textbf{Feature Dimension ($d$)} \\
    \midrule
    Float32  & 50,000 & 20   \\
    Float32  & 20,000 & 100  \\
    Float32  & 10,000 & 160  \\
    Float32  & 1,000  & 500  \\
    \midrule
    BFloat16 & 30,000 & 100  \\
    BFloat16 & 10,000 & 500  \\
    BFloat16 & 1,000  & 1,000 \\
    \bottomrule
    \end{tabular}}
\vspace{-2em}
\end{wraptable}

\cref{tab:cap} presents the empirical computational limits of DAG-FM under a single 24GB VRAM environment during inference, including performance under BFloat16 mixed-precision.
\subsection{Full Comparison Results}

\paragraph{Synthetic Benchmarks}

\cref{tab:4,tab:5,tab:6,tab:7,tab:8,tab:9} report the performance of all 24 baseline methods across the six synthetic benchmarks (\textbf{LiNGAM}, \textbf{ANM}, \textbf{HNM}, \textbf{PNL}, \textbf{General}, and \textbf{Hetero}) described in \cref{app:b_bench}. DAG-FM consistently demonstrates a clear advantage across the vast majority of settings.

\begin{table}[htbp]
\centering
\resizebox{\linewidth}{!}{%
\begin{tabular}{l | c c c c c}
\toprule
\multicolumn{6}{c}{\textbf{DAG Performance on Synthetic LiNGAM Benchmark ($n=1000, d=20$)}} \\
\midrule
Method & Prec. $\uparrow$ & Rec. $\uparrow$ & F1 $\uparrow$ & Jac. $\uparrow$ & nSHD $\downarrow$ \\
\midrule
PC \citep{Spirtes1991} & $0.45_{0.14}$ & $0.34_{0.17}$ & $0.37_{0.15}$ & $0.23_{0.11}$ & $0.26_{0.21}$ \\
FCI \citep{Spirtes1995} & $0.45_{0.15}$ & $0.34_{0.17}$ & $0.36_{0.15}$ & $0.23_{0.11}$ & $0.26_{0.21}$ \\
GES \citep{Chickering2002} & $0.53_{0.13}$ & $\underline{0.71}_{0.21}$ & $0.60_{0.14}$ & $0.44_{0.14}$ & $0.27_{0.24}$ \\
\midrule
ICA-LiNGAM \citep{Shimizu2006} & $0.12_{0.10}$ & $0.14_{0.13}$ & $0.13_{0.11}$ & $0.07_{0.07}$ & $0.43_{0.31}$ \\
Direct-LiNGAM \citep{Shimizu2011} & $0.01_{0.02}$ & $0.01_{0.02}$ & $0.01_{0.02}$ & $0.00_{0.01}$ & $0.47_{0.38}$ \\
\midrule
CAM \citep{Buhlmann2014} & $0.20_{0.12}$ & $0.29_{0.15}$ & $0.23_{0.13}$ & $0.14_{0.08}$ & $0.42_{0.25}$ \\
RESIT \citep{Peters2014} & $0.10_{0.08}$ & $0.20_{0.15}$ & $0.13_{0.10}$ & $0.07_{0.06}$ & $0.58_{0.40}$ \\
RCD \citep{Maeda2020} & $0.03_{0.04}$ & $0.02_{0.03}$ & $0.02_{0.03}$ & $0.01_{0.02}$ & $0.40_{0.29}$ \\
SCORE \citep{Rolland2022} & $0.38_{0.15}$ & $0.51_{0.13}$ & $0.43_{0.13}$ & $0.28_{0.11}$ & $0.30_{0.19}$ \\
DAS \citep{Montagna2023b} & $0.44_{0.17}$ & $0.46_{0.14}$ & $0.44_{0.14}$ & $0.29_{0.12}$ & $0.26_{0.19}$ \\
NoGAM \citep{Montagna2023} & $0.40_{0.14}$ & $0.54_{0.13}$ & $0.45_{0.12}$ & $0.30_{0.10}$ & $0.29_{0.19}$ \\
CaPS \citep{Xu2024} & $0.39_{0.16}$ & $0.52_{0.15}$ & $0.43_{0.14}$ & $0.29_{0.12}$ & $0.30_{0.20}$ \\
\midrule
HOST \citep{Duong2023} & $0.15_{0.10}$ & $0.28_{0.12}$ & $0.18_{0.10}$ & $0.10_{0.06}$ & $0.60_{0.32}$ \\
ICDH \citep{Yin2024} & $0.43_{0.19}$ & $0.36_{0.17}$ & $0.39_{0.17}$ & $0.26_{0.14}$ & $0.26_{0.21}$ \\
SkewScore \citep{Lin2025} & $0.23_{0.10}$ & $0.40_{0.13}$ & $0.28_{0.10}$ & $0.17_{0.07}$ & $0.44_{0.27}$ \\
\midrule
NOTEARS-Linear \citep{Zheng2018} & $0.57_{0.19}$ & $0.29_{0.18}$ & $0.37_{0.19}$ & $0.24_{0.16}$ & $0.24_{0.22}$ \\
NOTEARS-Nonlinear \citep{Zheng2018} & $0.42_{0.21}$ & $0.30_{0.18}$ & $0.34_{0.19}$ & $0.22_{0.15}$ & $0.26_{0.21}$ \\
GOLEM \citep{Ng2020} & $0.41_{0.17}$ & $0.30_{0.15}$ & $0.33_{0.14}$ & $0.20_{0.11}$ & $0.27_{0.20}$ \\
GraNDAG \citep{Lachapelle2020} & $0.23_{0.30}$ & $0.03_{0.04}$ & $0.04_{0.07}$ & $0.02_{0.04}$ & $0.26_{0.21}$ \\
DAG-GNN \citep{Yu2019} & $0.44_{0.20}$ & $0.27_{0.15}$ & $0.32_{0.16}$ & $0.20_{0.13}$ & $0.26_{0.21}$ \\
DAGMA-Linear \citep{Bello2022} & $0.52_{0.22}$ & $0.33_{0.19}$ & $0.39_{0.20}$ & $0.27_{0.19}$ & $0.25_{0.21}$ \\
DAGMA-Nonlinear \citep{Bello2022} & $0.46_{0.14}$ & $0.36_{0.16}$ & $0.39_{0.14}$ & $0.25_{0.13}$ & $0.26_{0.21}$ \\
\midrule
AVICI \citep{Lorch2022} & $0.53_{0.22}$ & $0.29_{0.19}$ & $0.35_{0.19}$ & $0.23_{0.16}$ & $0.23_{0.20}$ \\
CauScale \citep{Peng2026} & $0.62_{0.18}$ & $0.40_{0.18}$ & $0.47_{0.18}$ & $0.33_{0.17}$ & $0.22_{0.21}$ \\
TabCausal \citep{Li2026} & $\textbf{0.82}_{0.18}$ & $\textbf{0.73}_{0.18}$ & $\textbf{0.77}_{0.17}$ & $\textbf{0.65}_{0.20}$ & $\textbf{0.12}_{0.14}$ \\
FoundCause \citep{Blobaum2026} & $0.60_{0.21}$ & $0.70_{0.23}$ & $0.60_{0.17}$ & $0.45_{0.19}$ & $0.22_{0.17}$ \\
\textbf{DAG-FM (ours)} & $\underline{0.76}_{0.15}$ & $\textbf{0.73}_{0.19}$ & $\underline{0.74}_{0.15}$ & $\underline{0.60}_{0.18}$ & $\underline{0.15}_{0.16}$ \\
\bottomrule
\end{tabular}%
}
\caption{DAG identification performance on the synthetic \textbf{LiNGAM} benchmark (higher is better for \texttt{Prec.}, \texttt{Rec.}, \texttt{F1}, and \texttt{Jac.}; lower is better for \texttt{nSHD}), compared with other baselines. Subscripts denote 95\% confidence intervals.}
\label{tab:4}
\end{table}

\begin{table}[htbp]
\centering
\resizebox{\linewidth}{!}{%
\begin{tabular}{l | c c c c c}
\toprule
\multicolumn{6}{c}{\textbf{DAG Performance on Synthetic ANM Benchmark ($n=1000, d=20$)}} \\
\midrule
Method & Prec. $\uparrow$ & Rec. $\uparrow$ & F1 $\uparrow$ & Jac. $\uparrow$ & nSHD $\downarrow$ \\
\midrule
PC \citep{Spirtes1991} & $0.39_{0.13}$ & $0.33_{0.14}$ & $0.34_{0.11}$ & $0.21_{0.08}$ & $0.25_{0.14}$ \\
FCI \citep{Spirtes1995} & $0.40_{0.13}$ & $0.32_{0.15}$ & $0.34_{0.11}$ & $0.21_{0.08}$ & $0.25_{0.14}$ \\
GES \citep{Chickering2002} & $0.45_{0.15}$ & $0.48_{0.17}$ & $0.44_{0.12}$ & $0.29_{0.10}$ & $0.25_{0.13}$ \\
\midrule
ICA-LiNGAM \citep{Shimizu2006} & $0.30_{0.10}$ & $0.36_{0.13}$ & $0.32_{0.09}$ & $0.19_{0.07}$ & $0.32_{0.18}$ \\
Direct-LiNGAM \citep{Shimizu2011} & $0.24_{0.10}$ & $0.27_{0.14}$ & $0.25_{0.10}$ & $0.14_{0.07}$ & $0.34_{0.20}$ \\
\midrule
CAM \citep{Buhlmann2014} & $0.46_{0.13}$ & $0.62_{0.16}$ & $0.51_{0.11}$ & $0.35_{0.09}$ & $0.23_{0.12}$ \\
RESIT \citep{Peters2014} & $0.10_{0.07}$ & $0.13_{0.10}$ & $0.11_{0.08}$ & $0.06_{0.05}$ & $0.43_{0.26}$ \\
RCD \citep{Maeda2020} & $0.05_{0.09}$ & $0.01_{0.02}$ & $0.02_{0.03}$ & $0.01_{0.02}$ & $0.27_{0.17}$ \\
SCORE \citep{Rolland2022} & $0.46_{0.13}$ & $0.59_{0.16}$ & $0.50_{0.12}$ & $0.34_{0.10}$ & $0.23_{0.13}$ \\
DAS \citep{Montagna2023b} & $0.50_{0.14}$ & $0.49_{0.18}$ & $0.48_{0.14}$ & $0.33_{0.12}$ & $0.22_{0.14}$ \\
NoGAM \citep{Montagna2023} & $0.48_{0.13}$ & $0.60_{0.16}$ & $0.52_{0.12}$ & $0.36_{0.11}$ & $0.22_{0.12}$ \\
CaPS \citep{Xu2024} & $0.45_{0.13}$ & $0.58_{0.16}$ & $0.49_{0.12}$ & $0.33_{0.10}$ & $0.24_{0.13}$ \\
\midrule
HOST \citep{Duong2023} & $0.29_{0.11}$ & $0.27_{0.13}$ & $0.27_{0.11}$ & $0.16_{0.07}$ & $0.30_{0.19}$ \\
ICDH \citep{Yin2024} & $0.51_{0.15}$ & $0.35_{0.17}$ & $0.40_{0.15}$ & $0.26_{0.14}$ & $0.22_{0.15}$ \\
SkewScore \citep{Lin2025} & $0.28_{0.13}$ & $0.38_{0.15}$ & $0.32_{0.12}$ & $0.19_{0.09}$ & $0.34_{0.17}$ \\
\midrule
NOTEARS-Linear \citep{Zheng2018} & $0.61_{0.24}$ & $0.16_{0.14}$ & $0.23_{0.16}$ & $0.14_{0.13}$ & $0.22_{0.15}$ \\
NOTEARS-Nonlinear \citep{Zheng2018} & $0.53_{0.15}$ & $0.35_{0.19}$ & $0.41_{0.17}$ & $0.27_{0.15}$ & $0.22_{0.16}$ \\
GOLEM \citep{Ng2020} & $0.47_{0.20}$ & $0.20_{0.16}$ & $0.26_{0.17}$ & $0.16_{0.14}$ & $0.23_{0.16}$ \\
GraNDAG \citep{Lachapelle2020} & $0.66_{0.34}$ & $0.12_{0.15}$ & $0.17_{0.16}$ & $0.10_{0.12}$ & $0.22_{0.16}$ \\
DAG-GNN \citep{Yu2019} & $0.44_{0.21}$ & $0.16_{0.13}$ & $0.23_{0.16}$ & $0.14_{0.11}$ & $0.23_{0.16}$ \\
DAGMA-Linear \citep{Bello2022} & $0.47_{0.23}$ & $0.18_{0.14}$ & $0.25_{0.17}$ & $0.16_{0.13}$ & $0.23_{0.16}$ \\
DAGMA-Nonlinear \citep{Bello2022} & $0.48_{0.16}$ & $0.25_{0.13}$ & $0.31_{0.13}$ & $0.19_{0.09}$ & $0.23_{0.15}$ \\
\midrule
AVICI \citep{Lorch2022} & $0.57_{0.19}$ & $0.26_{0.18}$ & $0.34_{0.19}$ & $0.22_{0.16}$ & $0.21_{0.16}$ \\
CauScale \citep{Peng2026} & $0.59_{0.21}$ & $0.31_{0.19}$ & $0.39_{0.20}$ & $0.26_{0.17}$ & $0.21_{0.16}$ \\
TabCausal \citep{Li2026} & $\underline{0.71}_{0.17}$ & $0.57_{0.19}$ & $\underline{0.62}_{0.17}$ & $\underline{0.47}_{0.18}$ & $\underline{0.17}_{0.14}$ \\
FoundCause \citep{Blobaum2026} & $0.63_{0.19}$ & $\underline{0.66}_{0.20}$ & $0.61_{0.15}$ & $0.46_{0.17}$ & $0.19_{0.13}$ \\
\textbf{DAG-FM (ours)} & $\textbf{0.74}_{0.14}$ & $\textbf{0.67}_{0.18}$ & $\textbf{0.69}_{0.14}$ & $\textbf{0.55}_{0.16}$ & $\textbf{0.15}_{0.13}$ \\
\bottomrule
\end{tabular}%
}
\caption{DAG identification performance on the synthetic \textbf{ANM} benchmark (higher is better for \texttt{Prec.}, \texttt{Rec.}, \texttt{F1}, and \texttt{Jac.}; lower is better for \texttt{nSHD}), compared with other baselines. Subscripts denote 95\% confidence intervals.}
\label{tab:5}
\end{table}

\begin{table}[htbp]
\centering
\resizebox{\linewidth}{!}{%
\begin{tabular}{l | c c c c c}
\toprule
\multicolumn{6}{c}{\textbf{DAG Performance on Synthetic HNM Benchmark ($n=1000, d=20$)}} \\
\midrule
Method & Prec. $\uparrow$ & Rec. $\uparrow$ & F1 $\uparrow$ & Jac. $\uparrow$ & nSHD $\downarrow$ \\
\midrule
PC \citep{Spirtes1991} & $0.39_{0.13}$ & $0.33_{0.17}$ & $0.32_{0.11}$ & $0.20_{0.08}$ & $0.27_{0.19}$ \\
FCI \citep{Spirtes1995} & $0.39_{0.13}$ & $0.32_{0.17}$ & $0.32_{0.11}$ & $0.20_{0.08}$ & $0.27_{0.19}$ \\
GES \citep{Chickering2002} & $0.43_{0.15}$ & $0.49_{0.19}$ & $0.43_{0.13}$ & $0.29_{0.10}$ & $0.27_{0.18}$ \\
\midrule
ICA-LiNGAM \citep{Shimizu2006} & $0.29_{0.13}$ & $0.35_{0.15}$ & $0.31_{0.13}$ & $0.19_{0.11}$ & $0.35_{0.22}$ \\
Direct-LiNGAM \citep{Shimizu2011} & $0.23_{0.10}$ & $0.26_{0.14}$ & $0.23_{0.11}$ & $0.14_{0.07}$ & $0.37_{0.27}$ \\
\midrule
CAM \citep{Buhlmann2014} & $0.29_{0.12}$ & $0.48_{0.18}$ & $0.35_{0.12}$ & $0.22_{0.08}$ & $0.37_{0.21}$ \\
RESIT \citep{Peters2014} & $0.14_{0.11}$ & $0.43_{0.16}$ & $0.19_{0.12}$ & $0.11_{0.10}$ & $0.80_{0.33}$ \\
RCD \citep{Maeda2020} & $0.05_{0.08}$ & $0.01_{0.02}$ & $0.02_{0.03}$ & $0.01_{0.01}$ & $0.29_{0.22}$ \\
SCORE \citep{Rolland2022} & $0.37_{0.14}$ & $0.56_{0.16}$ & $0.43_{0.12}$ & $0.28_{0.10}$ & $0.30_{0.16}$ \\
DAS \citep{Montagna2023b} & $0.41_{0.16}$ & $0.46_{0.18}$ & $0.42_{0.13}$ & $0.27_{0.10}$ & $0.27_{0.16}$ \\
NoGAM \citep{Montagna2023} & $0.37_{0.14}$ & $0.57_{0.15}$ & $0.43_{0.12}$ & $0.28_{0.09}$ & $0.30_{0.15}$ \\
CaPS \citep{Xu2024} & $0.38_{0.14}$ & $0.57_{0.16}$ & $0.44_{0.12}$ & $0.29_{0.10}$ & $0.30_{0.16}$ \\
\midrule
HOST \citep{Duong2023} & $0.33_{0.16}$ & $0.31_{0.15}$ & $0.31_{0.14}$ & $0.19_{0.12}$ & $0.34_{0.24}$ \\
ICDH \citep{Yin2024} & $0.47_{0.16}$ & $0.33_{0.17}$ & $0.38_{0.16}$ & $0.25_{0.14}$ & $0.25_{0.20}$ \\
SkewScore \citep{Lin2025} & $0.25_{0.12}$ & $0.40_{0.17}$ & $0.30_{0.12}$ & $0.18_{0.08}$ & $0.40_{0.23}$ \\
\midrule
NOTEARS-Linear \citep{Zheng2018} & $0.55_{0.24}$ & $0.16_{0.14}$ & $0.24_{0.17}$ & $0.15_{0.14}$ & $0.24_{0.20}$ \\
NOTEARS-Nonlinear \citep{Zheng2018} & $0.47_{0.18}$ & $0.32_{0.20}$ & $0.37_{0.19}$ & $0.24_{0.16}$ & $0.25_{0.21}$ \\
GOLEM \citep{Ng2020} & $0.44_{0.21}$ & $0.21_{0.16}$ & $0.27_{0.17}$ & $0.17_{0.12}$ & $0.26_{0.21}$ \\
GraNDAG \citep{Lachapelle2020} & $0.38_{0.35}$ & $0.06_{0.12}$ & $0.09_{0.11}$ & $0.05_{0.07}$ & $0.26_{0.20}$ \\
DAG-GNN \citep{Yu2019} & $0.45_{0.24}$ & $0.17_{0.14}$ & $0.24_{0.17}$ & $0.15_{0.13}$ & $0.26_{0.21}$ \\
DAGMA-Linear \citep{Bello2022} & $0.48_{0.23}$ & $0.20_{0.17}$ & $0.28_{0.19}$ & $0.18_{0.16}$ & $0.25_{0.21}$ \\
DAGMA-Nonlinear \citep{Bello2022} & $0.46_{0.19}$ & $0.26_{0.15}$ & $0.32_{0.16}$ & $0.20_{0.13}$ & $0.26_{0.20}$ \\
\midrule
AVICI \citep{Lorch2022} & $0.61_{0.20}$ & $0.34_{0.18}$ & $0.42_{0.19}$ & $0.28_{0.16}$ & $0.22_{0.19}$ \\
CauScale \citep{Peng2026} & $\underline{0.66}_{0.16}$ & $0.40_{0.23}$ & $0.47_{0.22}$ & $0.34_{0.21}$ & $0.22_{0.20}$ \\
TabCausal \citep{Li2026} & $0.64_{0.20}$ & $0.54_{0.19}$ & $0.57_{0.18}$ & $0.42_{0.19}$ & $\underline{0.19}_{0.16}$ \\
FoundCause \citep{Blobaum2026} & $0.62_{0.21}$ & $\textbf{0.71}_{0.21}$ & $\underline{0.62}_{0.16}$ & $\underline{0.47}_{0.18}$ & $0.21_{0.15}$ \\
\textbf{DAG-FM (ours)} & $\textbf{0.67}_{0.13}$ & $\underline{0.69}_{0.18}$ & $\textbf{0.67}_{0.13}$ & $\textbf{0.52}_{0.16}$ & $\textbf{0.17}_{0.13}$ \\
\bottomrule
\end{tabular}%
}
\caption{DAG identification performance on the synthetic \textbf{HNM} benchmark (higher is better for \texttt{Prec.}, \texttt{Rec.}, \texttt{F1}, and \texttt{Jac.}; lower is better for \texttt{nSHD}), compared with other baselines. Subscripts denote 95\% confidence intervals.}
\label{tab:6}
\end{table}

\begin{table}[htbp]
\centering
\resizebox{\linewidth}{!}{%
\begin{tabular}{l | c c c c c}
\toprule
\multicolumn{6}{c}{\textbf{DAG Performance on Synthetic PNL Benchmark ($n=1000, d=20$)}} \\
\midrule
Method & Prec. $\uparrow$ & Rec. $\uparrow$ & F1 $\uparrow$ & Jac. $\uparrow$ & nSHD $\downarrow$ \\
\midrule
PC \citep{Spirtes1991} & $0.39_{0.13}$ & $0.30_{0.13}$ & $0.32_{0.11}$ & $0.20_{0.08}$ & $0.28_{0.19}$ \\
FCI \citep{Spirtes1995} & $0.39_{0.13}$ & $0.30_{0.13}$ & $0.32_{0.11}$ & $0.20_{0.08}$ & $0.28_{0.19}$ \\
GES \citep{Chickering2002} & $0.45_{0.14}$ & $0.50_{0.19}$ & $0.45_{0.13}$ & $0.30_{0.11}$ & $0.27_{0.19}$ \\
\midrule
ICA-LiNGAM \citep{Shimizu2006} & $0.29_{0.09}$ & $0.34_{0.11}$ & $0.30_{0.08}$ & $0.18_{0.06}$ & $0.35_{0.23}$ \\
Direct-LiNGAM \citep{Shimizu2011} & $0.23_{0.09}$ & $0.25_{0.11}$ & $0.23_{0.09}$ & $0.14_{0.06}$ & $0.37_{0.26}$ \\
\midrule
CAM \citep{Buhlmann2014} & $0.35_{0.11}$ & $0.51_{0.14}$ & $0.40_{0.10}$ & $0.26_{0.08}$ & $0.33_{0.21}$ \\
RESIT \citep{Peters2014} & $0.14_{0.07}$ & $0.29_{0.11}$ & $0.18_{0.08}$ & $0.10_{0.05}$ & $0.61_{0.33}$ \\
RCD \citep{Maeda2020} & $0.09_{0.11}$ & $0.02_{0.03}$ & $0.04_{0.04}$ & $0.02_{0.02}$ & $0.30_{0.22}$ \\
SCORE \citep{Rolland2022} & $0.38_{0.12}$ & $0.52_{0.14}$ & $0.42_{0.11}$ & $0.27_{0.09}$ & $0.30_{0.18}$ \\
DAS \citep{Montagna2023b} & $0.42_{0.14}$ & $0.44_{0.15}$ & $0.41_{0.12}$ & $0.27_{0.10}$ & $0.28_{0.18}$ \\
NoGAM \citep{Montagna2023} & $0.38_{0.12}$ & $0.52_{0.15}$ & $0.43_{0.11}$ & $0.28_{0.09}$ & $0.30_{0.18}$ \\
CaPS \citep{Xu2024} & $0.38_{0.12}$ & $0.52_{0.14}$ & $0.43_{0.11}$ & $0.28_{0.09}$ & $0.30_{0.18}$ \\
\midrule
HOST \citep{Duong2023} & $0.37_{0.14}$ & $0.33_{0.13}$ & $0.33_{0.12}$ & $0.21_{0.09}$ & $0.31_{0.21}$ \\
ICDH \citep{Yin2024} & $0.46_{0.15}$ & $0.32_{0.15}$ & $0.36_{0.14}$ & $0.23_{0.11}$ & $0.26_{0.20}$ \\
SkewScore \citep{Lin2025} & $0.28_{0.12}$ & $0.39_{0.12}$ & $0.32_{0.11}$ & $0.19_{0.08}$ & $0.37_{0.20}$ \\
\midrule
NOTEARS-Linear \citep{Zheng2018} & $0.59_{0.23}$ & $0.16_{0.13}$ & $0.24_{0.16}$ & $0.15_{0.11}$ & $0.24_{0.20}$ \\
NOTEARS-Nonlinear \citep{Zheng2018} & $0.46_{0.16}$ & $0.30_{0.17}$ & $0.35_{0.16}$ & $0.23_{0.12}$ & $0.26_{0.20}$ \\
GOLEM \citep{Ng2020} & $0.47_{0.21}$ & $0.21_{0.15}$ & $0.28_{0.17}$ & $0.17_{0.13}$ & $0.26_{0.20}$ \\
GraNDAG \citep{Lachapelle2020} & $0.38_{0.35}$ & $0.06_{0.08}$ & $0.09_{0.12}$ & $0.05_{0.07}$ & $0.26_{0.21}$ \\
DAG-GNN \citep{Yu2019} & $0.47_{0.21}$ & $0.18_{0.13}$ & $0.25_{0.16}$ & $0.15_{0.11}$ & $0.26_{0.21}$ \\
DAGMA-Linear \citep{Bello2022} & $0.48_{0.21}$ & $0.19_{0.14}$ & $0.26_{0.17}$ & $0.16_{0.13}$ & $0.26_{0.21}$ \\
DAGMA-Nonlinear \citep{Bello2022} & $0.46_{0.17}$ & $0.25_{0.13}$ & $0.31_{0.14}$ & $0.19_{0.10}$ & $0.26_{0.20}$ \\
\midrule
AVICI \citep{Lorch2022} & $0.58_{0.18}$ & $0.26_{0.15}$ & $0.34_{0.16}$ & $0.22_{0.13}$ & $0.24_{0.19}$ \\
CauScale \citep{Peng2026} & $0.62_{0.16}$ & $0.33_{0.18}$ & $0.41_{0.19}$ & $0.27_{0.15}$ & $0.23_{0.20}$ \\
TabCausal \citep{Li2026} & $\underline{0.66}_{0.15}$ & $0.50_{0.20}$ & $0.55_{0.17}$ & $0.40_{0.17}$ & $\underline{0.21}_{0.19}$ \\
FoundCause \citep{Blobaum2026} & $0.64_{0.21}$ & $\textbf{0.68}_{0.19}$ & $\underline{0.62}_{0.15}$ & $\underline{0.46}_{0.17}$ & $\underline{0.21}_{0.16}$ \\
\textbf{DAG-FM (ours)} & $\textbf{0.78}_{0.11}$ & $\underline{0.66}_{0.19}$ & $\textbf{0.69}_{0.14}$ & $\textbf{0.55}_{0.17}$ & $\textbf{0.16}_{0.16}$ \\
\bottomrule
\end{tabular}%
}
\caption{DAG identification performance on the synthetic \textbf{PNL} benchmark (higher is better for \texttt{Prec.}, \texttt{Rec.}, \texttt{F1}, and \texttt{Jac.}; lower is better for \texttt{nSHD}), compared with other baselines. Subscripts denote 95\% confidence intervals.}
\label{tab:7}
\end{table}

\begin{table}[htbp]
\centering
\resizebox{\linewidth}{!}{%
\begin{tabular}{l | c c c c c}
\toprule
\multicolumn{6}{c}{\textbf{DAG Performance on Synthetic General Mechanism Benchmark ($n=1000, d=20$)}} \\
\midrule
Method & Prec. $\uparrow$ & Rec. $\uparrow$ & F1 $\uparrow$ & Jac. $\uparrow$ & nSHD $\downarrow$ \\
\midrule
PC \citep{Spirtes1991} & $0.35_{0.12}$ & $0.28_{0.14}$ & $0.29_{0.11}$ & $0.18_{0.08}$ & $0.30_{0.19}$ \\
FCI \citep{Spirtes1995} & $0.36_{0.12}$ & $0.28_{0.14}$ & $0.29_{0.11}$ & $0.18_{0.08}$ & $0.30_{0.19}$ \\
GES \citep{Chickering2002} & $0.37_{0.12}$ & $0.51_{0.16}$ & $0.41_{0.12}$ & $0.27_{0.10}$ & $0.34_{0.21}$ \\
\midrule
ICA-LiNGAM \citep{Shimizu2006} & $0.25_{0.08}$ & $0.39_{0.11}$ & $0.30_{0.08}$ & $0.18_{0.06}$ & $0.46_{0.27}$ \\
Direct-LiNGAM \citep{Shimizu2011} & $0.26_{0.10}$ & $0.39_{0.13}$ & $0.30_{0.10}$ & $0.18_{0.08}$ & $0.43_{0.25}$ \\
\midrule
CAM \citep{Buhlmann2014} & $0.42_{0.14}$ & $\underline{0.68}_{0.12}$ & $0.51_{0.13}$ & $0.35_{0.11}$ & $0.29_{0.16}$ \\
RESIT \citep{Peters2014} & $0.12_{0.07}$ & $0.39_{0.13}$ & $0.18_{0.08}$ & $0.10_{0.05}$ & $0.81_{0.27}$ \\
RCD \citep{Maeda2020} & $0.07_{0.12}$ & $0.01_{0.01}$ & $0.01_{0.02}$ & $0.01_{0.01}$ & $0.28_{0.20}$ \\
SCORE \citep{Rolland2022} & $0.35_{0.12}$ & $0.59_{0.16}$ & $0.44_{0.12}$ & $0.29_{0.10}$ & $0.36_{0.21}$ \\
DAS \citep{Montagna2023b} & $0.37_{0.12}$ & $0.48_{0.17}$ & $0.41_{0.11}$ & $0.26_{0.09}$ & $0.32_{0.20}$ \\
NoGAM \citep{Montagna2023} & $0.37_{0.11}$ & $0.62_{0.15}$ & $0.45_{0.11}$ & $0.30_{0.09}$ & $0.34_{0.20}$ \\
CaPS \citep{Xu2024} & $0.35_{0.11}$ & $0.60_{0.16}$ & $0.44_{0.12}$ & $0.29_{0.09}$ & $0.35_{0.21}$ \\
\midrule
HOST \citep{Duong2023} & $0.23_{0.11}$ & $0.31_{0.13}$ & $0.25_{0.10}$ & $0.15_{0.07}$ & $0.45_{0.27}$ \\
ICDH \citep{Yin2024} & $0.41_{0.13}$ & $0.38_{0.14}$ & $0.39_{0.12}$ & $0.25_{0.11}$ & $0.30_{0.21}$ \\
SkewScore \citep{Lin2025} & $0.26_{0.11}$ & $0.45_{0.14}$ & $0.32_{0.11}$ & $0.20_{0.08}$ & $0.44_{0.24}$ \\
\midrule
NOTEARS-Linear \citep{Zheng2018} & $0.46_{0.17}$ & $0.18_{0.13}$ & $0.24_{0.14}$ & $0.15_{0.12}$ & $0.26_{0.20}$ \\
NOTEARS-Nonlinear \citep{Zheng2018} & $0.44_{0.14}$ & $0.37_{0.17}$ & $0.39_{0.15}$ & $0.25_{0.13}$ & $0.28_{0.20}$ \\
GOLEM \citep{Ng2020} & $0.32_{0.14}$ & $0.23_{0.12}$ & $0.26_{0.11}$ & $0.15_{0.08}$ & $0.32_{0.21}$ \\
GraNDAG \citep{Lachapelle2020} & $0.44_{0.41}$ & $0.05_{0.08}$ & $0.08_{0.12}$ & $0.05_{0.08}$ & $0.26_{0.20}$ \\
DAG-GNN \citep{Yu2019} & $0.35_{0.18}$ & $0.21_{0.13}$ & $0.25_{0.14}$ & $0.15_{0.12}$ & $0.30_{0.21}$ \\
DAGMA-Linear \citep{Bello2022} & $0.42_{0.18}$ & $0.23_{0.14}$ & $0.29_{0.15}$ & $0.18_{0.13}$ & $0.28_{0.21}$ \\
DAGMA-Nonlinear \citep{Bello2022} & $0.42_{0.12}$ & $0.32_{0.12}$ & $0.36_{0.10}$ & $0.22_{0.08}$ & $0.28_{0.20}$ \\
\midrule
AVICI \citep{Lorch2022} & $\textbf{0.69}_{0.14}$ & $0.42_{0.18}$ & $0.50_{0.16}$ & $0.35_{0.17}$ & $\textbf{0.21}_{0.17}$ \\
CauScale \citep{Peng2026} & $0.51_{0.13}$ & $0.38_{0.17}$ & $0.42_{0.16}$ & $0.28_{0.14}$ & $0.26_{0.20}$ \\
TabCausal \citep{Li2026} & $\underline{0.65}_{0.13}$ & $0.57_{0.17}$ & $\underline{0.60}_{0.14}$ & $\underline{0.44}_{0.15}$ & $\textbf{0.21}_{0.18}$ \\
FoundCause \citep{Blobaum2026} & $0.51_{0.19}$ & $\textbf{0.69}_{0.17}$ & $0.55_{0.13}$ & $0.39_{0.13}$ & $\underline{0.25}_{0.14}$ \\
\textbf{DAG-FM (ours)} & $\underline{0.65}_{0.12}$ & $0.59_{0.18}$ & $\textbf{0.61}_{0.14}$ & $\textbf{0.45}_{0.16}$ & $\textbf{0.21}_{0.18}$ \\
\bottomrule
\end{tabular}%
}
\caption{DAG identification performance on the synthetic \textbf{General} benchmark (higher is better for \texttt{Prec.}, \texttt{Rec.}, \texttt{F1}, and \texttt{Jac.}; lower is better for \texttt{nSHD}), compared with other baselines. Subscripts denote 95\% confidence intervals.}
\label{tab:8}
\end{table}

\begin{table}[htbp]
\centering
\resizebox{\linewidth}{!}{%
\begin{tabular}{l | c c c c c}
\toprule
\multicolumn{6}{c}{\textbf{DAG Performance on Synthetic Heterogeneous Mechanism Benchmark ($n=1000, d=20$)}} \\
\midrule
Method & Prec. $\uparrow$ & Rec. $\uparrow$ & F1 $\uparrow$ & Jac. $\uparrow$ & nSHD $\downarrow$ \\
\midrule
PC \citep{Spirtes1991} & $0.40_{0.13}$ & $0.32_{0.14}$ & $0.34_{0.11}$ & $0.21_{0.08}$ & $0.27_{0.21}$ \\
FCI \citep{Spirtes1995} & $0.40_{0.13}$ & $0.32_{0.14}$ & $0.33_{0.12}$ & $0.21_{0.08}$ & $0.27_{0.21}$ \\
GES \citep{Chickering2002} & $0.45_{0.15}$ & $0.53_{0.19}$ & $0.48_{0.15}$ & $0.32_{0.13}$ & $0.27_{0.21}$ \\
\midrule
ICA-LiNGAM \citep{Shimizu2006} & $0.28_{0.10}$ & $0.36_{0.12}$ & $0.31_{0.09}$ & $0.19_{0.06}$ & $0.36_{0.24}$ \\
Direct-LiNGAM \citep{Shimizu2011} & $0.22_{0.10}$ & $0.25_{0.13}$ & $0.22_{0.10}$ & $0.13_{0.06}$ & $0.39_{0.29}$ \\
\midrule
CAM \citep{Buhlmann2014} & $0.34_{0.13}$ & $0.52_{0.15}$ & $0.40_{0.12}$ & $0.26_{0.09}$ & $0.32_{0.21}$ \\
RESIT \citep{Peters2014} & $0.14_{0.08}$ & $0.32_{0.12}$ & $0.19_{0.09}$ & $0.10_{0.05}$ & $0.62_{0.34}$ \\
RCD \citep{Maeda2020} & $0.08_{0.12}$ & $0.02_{0.03}$ & $0.03_{0.05}$ & $0.02_{0.02}$ & $0.30_{0.23}$ \\
SCORE \citep{Rolland2022} & $0.38_{0.14}$ & $0.53_{0.15}$ & $0.43_{0.12}$ & $0.28_{0.10}$ & $0.29_{0.17}$ \\
DAS \citep{Montagna2023b} & $0.43_{0.16}$ & $0.45_{0.15}$ & $0.42_{0.13}$ & $0.28_{0.10}$ & $0.26_{0.18}$ \\
NoGAM \citep{Montagna2023} & $0.39_{0.14}$ & $\underline{0.55}_{0.14}$ & $0.45_{0.12}$ & $0.30_{0.10}$ & $0.28_{0.17}$ \\
CaPS \citep{Xu2024} & $0.38_{0.13}$ & $0.54_{0.14}$ & $0.43_{0.12}$ & $0.28_{0.09}$ & $0.29_{0.17}$ \\
\midrule
HOST \citep{Duong2023} & $0.28_{0.13}$ & $0.29_{0.12}$ & $0.28_{0.11}$ & $0.16_{0.08}$ & $0.35_{0.26}$ \\
ICDH \citep{Yin2024} & $0.44_{0.16}$ & $0.33_{0.16}$ & $0.37_{0.14}$ & $0.23_{0.11}$ & $0.26_{0.21}$ \\
SkewScore \citep{Lin2025} & $0.28_{0.11}$ & $0.42_{0.14}$ & $0.33_{0.11}$ & $0.20_{0.08}$ & $0.39_{0.25}$ \\
\midrule
NOTEARS-Linear \citep{Zheng2018} & $0.54_{0.20}$ & $0.17_{0.12}$ & $0.24_{0.14}$ & $0.15_{0.10}$ & $0.24_{0.21}$ \\
NOTEARS-Nonlinear \citep{Zheng2018} & $0.44_{0.16}$ & $0.30_{0.16}$ & $0.35_{0.15}$ & $0.22_{0.11}$ & $0.26_{0.21}$ \\
GOLEM \citep{Ng2020} & $0.41_{0.18}$ & $0.22_{0.13}$ & $0.27_{0.14}$ & $0.16_{0.10}$ & $0.27_{0.22}$ \\
GraNDAG \citep{Lachapelle2020} & $0.39_{0.40}$ & $0.05_{0.08}$ & $0.08_{0.10}$ & $0.04_{0.06}$ & $0.26_{0.21}$ \\
DAG-GNN \citep{Yu2019} & $0.40_{0.20}$ & $0.18_{0.14}$ & $0.24_{0.16}$ & $0.15_{0.12}$ & $0.27_{0.22}$ \\
DAGMA-Linear \citep{Bello2022} & $0.43_{0.18}$ & $0.19_{0.12}$ & $0.25_{0.14}$ & $0.15_{0.10}$ & $0.26_{0.22}$ \\
DAGMA-Nonlinear \citep{Bello2022} & $0.45_{0.15}$ & $0.28_{0.13}$ & $0.33_{0.13}$ & $0.21_{0.10}$ & $0.26_{0.20}$ \\
\midrule
AVICI \citep{Lorch2022} & $0.58_{0.19}$ & $0.30_{0.19}$ & $0.37_{0.20}$ & $0.25_{0.16}$ & $0.23_{0.21}$ \\
CauScale \citep{Peng2026} & $0.57_{0.18}$ & $0.34_{0.19}$ & $0.41_{0.19}$ & $0.27_{0.16}$ & $0.23_{0.21}$ \\
TabCausal \citep{Li2026} & $\underline{0.63}_{0.18}$ & $0.50_{0.19}$ & $0.54_{0.17}$ & $0.39_{0.17}$ & $\underline{0.21}_{0.21}$ \\
FoundCause \citep{Blobaum2026} & $0.57_{0.22}$ & $\textbf{0.66}_{0.18}$ & $\underline{0.57}_{0.16}$ & $\underline{0.42}_{0.16}$ & $0.23_{0.17}$ \\
\textbf{DAG-FM (ours)} & $\textbf{0.71}_{0.14}$ & $\textbf{0.66}_{0.19}$ & $\textbf{0.67}_{0.15}$ & $\textbf{0.52}_{0.16}$ & $\textbf{0.17}_{0.17}$ \\
\bottomrule
\end{tabular}%
}
\caption{DAG identification performance on the synthetic \textbf{Hetero} benchmark (higher is better for \texttt{Prec.}, \texttt{Rec.}, \texttt{F1}, and \texttt{Jac.}; lower is better for \texttt{nSHD}), compared with other baselines. Subscripts denote 95\% confidence intervals.}
\label{tab:9}
\end{table}

\paragraph{Real-world Benchmarks}

\cref{tab:10} summarizes the DAG identification performance of all 24 baseline methods on two widely used real-world benchmarks. DAG-FM achieves state-of-the-art performance across these datasets.

\begin{table*}[htbp]
\centering
\resizebox{\linewidth}{!}{%
\begin{tabular}{l | c c c c c | c c c c c}
\toprule
\multicolumn{11}{c}{\textbf{{DAG Performance on Real-world Benchmarks}}} \\
\midrule
\multirow{2}{*}{Method} & \multicolumn{5}{c|}{\textbf{{Sachs}} ($n=7466, d=11$)} & \multicolumn{5}{c}{\textbf{{Causal Chamber}} ($n=10\text{{k}}, d=20$)} \\
 & Prec. $\uparrow$ & Rec. $\uparrow$ & F1 $\uparrow$ & Jac. $\uparrow$ & nSHD $\downarrow$ & Prec. $\uparrow$ & Rec. $\uparrow$ & F1 $\uparrow$ & Jac. $\uparrow$ & nSHD $\downarrow$ \\
\midrule
PC \citep{Spirtes1991} & 0.20 & 0.25 & 0.22 & 0.12 & 0.64 & 0.00 & 0.00 & 0.00 & 0.00 & 0.34 \\
FCI \citep{Spirtes1995} & 0.21 & 0.25 & 0.23 & 0.13 & 0.62 & 0.00 & 0.00 & 0.00 & 0.00 & 0.32 \\
GES \citep{Chickering2002} & 0.23 & 0.45 & 0.31 & 0.18 & 0.76 & 0.50 & \textbf{0.62} & 0.55 & 0.38 & 0.19 \\
\midrule
ICA-LiNGAM \citep{Shimizu2006} & 0.20 & 0.45 & 0.28 & 0.16 & 0.85 & 0.05 & 0.05 & 0.05 & 0.03 & 0.41 \\
Direct-LiNGAM \citep{Shimizu2011} & 0.19 & 0.35 & 0.25 & 0.14 & 0.76 & 0.05 & 0.05 & 0.05 & 0.03 & 0.40 \\
\midrule
CAM \citep{Buhlmann2014} & 0.29 & \textbf{0.75} & \textbf{0.42} & \textbf{0.27} & 0.75 & \multicolumn{5}{c}{Out of Memory} \\
RESIT \citep{Peters2014} & 0.22 & \underline{0.60} & 0.32 & 0.19 & 0.93 & \multicolumn{5}{c}{Out of Memory} \\
RCD \citep{Maeda2020} & 0.00 & 0.00 & 0.00 & 0.00 & \underline{0.36} & \multicolumn{5}{c}{Out of Memory} \\
SCORE \citep{Rolland2022} & 0.20 & 0.45 & 0.28 & 0.16 & 0.85 & \multicolumn{5}{c}{Out of Memory} \\
DAS \citep{Montagna2023b} & 0.22 & 0.35 & 0.27 & 0.16 & 0.69 & \multicolumn{5}{c}{Out of Memory} \\
NoGAM \citep{Montagna2023} & \multicolumn{5}{c|}{Out of Memory} & \multicolumn{5}{c}{Out of Memory} \\
CaPS \citep{Xu2024} & 0.07 & 0.15 & 0.09 & 0.05 & 1.09 & 0.11 & 0.15 & 0.13 & 0.07 & 0.43 \\
\midrule
HOST \citep{Duong2023} & \multicolumn{5}{c|}{Out of Memory} & \multicolumn{5}{c}{Out of Memory} \\
ICDH \citep{Yin2024} & 0.20 & 0.20 & 0.20 & 0.11 & 0.58 & 0.06 & 0.05 & 0.06 & 0.03 & 0.35 \\
SkewScore \citep{Lin2025} & 0.23 & 0.50 & 0.31 & 0.19 & 0.80 & 0.07 & 0.13 & 0.09 & 0.05 & 0.53 \\
\midrule
NOTEARS-Linear \citep{Zheng2018} & 0.33 & 0.10 & 0.15 & 0.08 & 0.40 & 0.15 & 0.05 & 0.08 & 0.04 & 0.25 \\
NOTEARS-Nonlinear \citep{Zheng2018} & 0.07 & 0.05 & 0.06 & 0.03 & 0.58 & 0.18 & 0.10 & 0.13 & 0.07 & 0.28 \\
GOLEM \citep{Ng2020} & 0.10 & 0.10 & 0.10 & 0.05 & 0.65 & 0.07 & 0.05 & 0.06 & 0.03 & 0.34 \\
GraNDAG \citep{Lachapelle2020} & 0.00 & 0.00 & 0.00 & 0.00 & 0.40 & \textbf{1.00} & 0.03 & 0.05 & 0.03 & 0.20 \\
DAG-GNN \citep{Yu2019} & \multicolumn{5}{c|}{Out of Memory} & \multicolumn{5}{c}{Out of Memory} \\
DAGMA-Linear \citep{Bello2022} & 0.20 & 0.10 & 0.13 & 0.07 & 0.47 & 0.04 & 0.03 & 0.03 & 0.02 & 0.32 \\
DAGMA-Nonlinear \citep{Bello2022} & 0.25 & 0.20 & 0.22 & 0.12 & 0.51 & \multicolumn{5}{c}{Out of Memory} \\
\midrule
AVICI \citep{Lorch2022} & \multicolumn{5}{c|}{Out of Memory} & \multicolumn{5}{c}{Out of Memory} \\
CauScale \citep{Peng2026} & \multicolumn{5}{c|}{Out of Memory} & \multicolumn{5}{c}{Out of Memory} \\
TabCausal \citep{Li2026} & 0.38 & 0.30 & \underline{0.33} & \underline{0.20} & 0.44 & 0.71 & 0.51 & \underline{0.60} & \underline{0.43} & \underline{0.14} \\
FoundCause \citep{Blobaum2026} & \underline{0.50} & 0.25 & \underline{0.33} & \underline{0.20} & \underline{0.36} & 0.53 & \underline{0.59} & 0.56 & 0.39 & 0.19 \\
\textbf{DAG-FM (ours)} & \textbf{0.54} & 0.35 & \textbf{0.42} & \textbf{0.27} & \textbf{0.35} & \underline{0.77} & 0.51 & \textbf{0.62} & \textbf{0.44} & \textbf{0.13} \\
\bottomrule
\end{tabular}%
}
\caption{DAG identification performance on two real-world benchmarks (higher is better for \texttt{Prec.}, \texttt{Rec.}, \texttt{F1}, and \texttt{Jac.}; lower is better for \texttt{nSHD}), compared with other baselines.}
\label{tab:10}
\end{table*}


\end{document}